\newcommand{\tb}[1]{\textbf{#1}}
\newcommand{\tabincell}[2]{\begin{tabular}{@{}#1@{}}#2\end{tabular}}
\newtheorem{theorem}{Theorem}%[section]
\newtheorem{proposition}[theorem]{Proposition}
\newtheorem{corollary}[theorem]{Corollary}
\newtheorem{definition}[theorem]{Definition}
\newtheorem*{l0}{Proposition 7}
\newtheorem*{l1}{Proposition 8}
\newtheorem*{en}{Proposition 11}
\newtheorem*{evdmin}{Theorem 12}
\newtheorem*{evcos}{Theorem 13}
\begin{document}
\title{Sparse Principal Component Analysis via Rotation and Truncation}

%\author{Zhenfang~Hu,
%        Gang~Pan,
%        Yueming~Wang,
%        and~Zhaohui~Wu}
%\thanks{Z. Hu, G. Pan, and Z. Wu are with the College of Computer Science and
%Technology, Zhejiang University, China (zhenfhu@gmail.com; gpan@zju.edu.cn; zjuwuzh@gmail.com). }% <-this % stops a space
%\thanks{Y. Wang is with the Qiushi
%Academy for Advanced Studies, Zhejiang University, China (ymingwang@zju.edu.cn).}% <-this % stops a space}
\author[*]{Zhenfang~Hu}
\author[*]{Gang~Pan}
\author[$\dag$]{Yueming~Wang}
\author[*]{Zhaohui~Wu}

\affil[*]{College of Computer Science and Technology, Zhejiang
University, China, \authorcr zhenfhu@gmail.com, gpan@zju.edu.cn,
zjuwuzh@gmail.com}

\affil[$\dag$]{Qiushi Academy for Advanced Studies, Zhejiang
University, China, \authorcr ymingwang@zju.edu.cn}

\maketitle

\begin{abstract}
Sparse principal component analysis (sparse PCA) aims at finding a
sparse basis to improve the interpretability over the dense basis of
PCA, %\textcolor{red}
{meanwhile the sparse basis should cover the data subspace as much
as possible}. In contrast to most of existing work which deal with
the problem by adding some sparsity penalties on various objectives
of PCA, in this paper, we propose a new method SPCArt, whose
motivation is to find a rotation matrix and a sparse basis such that
the sparse basis approximates the basis of PCA after the rotation.
The algorithm of SPCArt consists of three alternating steps: rotate
PCA basis, truncate small entries, and update the rotation matrix.
Its performance bounds are also given. SPCArt is efficient, with
each iteration scaling linearly with the data dimension. It is easy
to choose parameters in SPCArt, due to its explicit physical
explanations. Besides, we give a unified view to several existing
sparse PCA methods and discuss the connection with SPCArt. Some
ideas in SPCArt are extended to GPower, a popular sparse PCA
algorithm, to overcome its drawback. Experimental results
demonstrate that SPCArt achieves the state-of-the-art performance.
It also achieves a good tradeoff among various criteria, including
sparsity, explained variance, orthogonality, balance of sparsity
among loadings, and computational speed.
\end{abstract}

%\begin{IEEEkeywords}
%Sparse, principal component analysis, rotation, truncation.
%\end{IEEEkeywords}
\textbf{Keywords:} sparse, principal component analysis, rotation,
truncation.

%\IEEEpeerreviewmaketitle

\section{Introduction}
In many research areas, the data we encountered are usually of very
high dimensions, for examples, signal processing, machine learning,
computer vision, document processing, computer network, and genetics
etc. However, almost all data in these areas have much lower
intrinsic dimensions. Thus, how to handle these data is a
traditional problem.

\subsection{PCA}
Principal component analysis (PCA) \cite{jolliffe2002principal} is
one of the most popular analysis tools to deal with this situation.
Given a set of data, whose mean is removed, PCA approximates the
data by representing them in another orthonormal basis, called
loading vectors. The coefficients of the data when represented using
these loadings are called principal components. They are obtained by
projecting the data onto the loadings, i.e. inner products between
the loading vectors and the data vector. Usually, the loadings are
deemed as a set of ordered vectors, in that the variances of data
explained by them are in a decreasing order, e.g. the leading
loading points to the maximal-variance direction. If the data lie in
a low dimensional subspace, i.e. the distribution mainly varies in a
few directions, a few loadings are enough to obtain a good
approximation; and the original high-dimensional data now can be
represented by the low-dimensional principal components, so
dimensionality reduction is achieved.

Commonly, the dimensions of the original data have some physical
explanations. For example, in financial or biological applications,
each dimension may correspond to a specific asset or gene
\cite{dAspremont2008optimal}. However, the loadings obtained by PCA
are usually dense, so the principal component, got by inner product,
is a mixture of all dimensions, which makes it difficult to
interpret. If most of the entries in the loadings are zeros
(sparse), each principal component becomes a linear combination of a
few non-zero entries. This facilitates the understanding of the
physical meaning of the loadings as well as the principal components
\cite{jolliffe2002principal}. Further, the physical interpretation
would be clearer if different loadings have different non-zero
entries, i.e. corresponding to different dimensions.

\subsection{Sparse PCA}
Sparse PCA aims at finding a sparse basis to make the result more
interpretable \cite{jolliffe2003modified}. At the same time, the
basis is required to represent the data distribution faithfully.
Thus, there is a tradeoff between the statistical fidelity and the
interpretability.

During the past decade, a variety of methods for sparse PCA have
been proposed. Most of them have considered the tradeoff between
sparsity and explained variance. However, there are three points
that have not received enough attentions yet: the orthogonality
between loadings, the balance of sparsity among loadings, and the
pitfall of deflation algorithms.

\begin{itemize}
\item{Orthogonality}.
PCA loadings are orthogonal. But in pursuing sparse loadings, this
property is easy to lose. Orthogonality is desirable in that it
indicates the independence of physical meaning of the loadings. When
the loadings are sufficiently sparse, orthogonality usually implies
non-overlapping of their supports. So under the background of
improving the interpretation of PCA, now each loading is associated
with distinctive physical variables, so are the principal
components. This makes the interpretation much easier. Besides, if
the loadings are not an orthogonal basis, the inner products between
the data and the loadings that are used to compute the components do
not constitute an exact projection. For an extreme example, if two
loadings are very close, the two components would be similar too.
This is meaningless.

\item{Balance of sparsity}.
There should not be any member of the loadings highly dense,
particularly those leading ones that take account of most variance,
otherwise it is meaningless. We emphasize this point, because quite
a few of existing algorithms yield loadings with the leading ones
highly dense (close to those of PCA) while the minor ones highly
sparse; so sparsity is achieved by the minor ones while variance is
explained by the dense ones. This is unreasonable.

\item{Pitfall of deflation}.
Existing work can be categorized into two groups: deflation group
and block group. To obtain $r$ sparse loadings, the deflation group
computes one loading at a time; more are got via removing components
that have been computed \cite{mackey2009deflation}. This follows
traditional PCA. The block group finds all loadings together.
Generally, the optimal loadings found when we restrict the subspace
to be of dimension $r$ may not overlap with the $r+1$ optimal
loadings when the dimension increases to $r+1$
\cite{jolliffe1989rotation}. This problem does not occur for PCA,
whose loadings successively maximize the variance, and the loadings
found via deflation are always globally optimal for any $r$. But it
is not the case for sparse PCA, the deflation method is greedy and
cannot find optimal sparse loadings. However, the block group has
the potential.

\end{itemize}

Finally we mention that by deflation the obtained loadings are
nearly orthogonal, while the block group usually does not equip with
mechanism to ensure the orthogonality.

\subsection{Our Method: SPCArt} In this paper, we
propose a new approach called SPCArt (Sparse PCA via rotation and
truncation). In contrast to most of traditional work which are based
on adding some sparsity penalty on the objective of PCA, the
motivation of SPCArt is distinctive. SPCArt aims to find a rotation
matrix and a sparse basis such that the sparse basis approximates
the loadings of PCA after the rotation. The resulting algorithm
consists of three alternative steps: rotate PCA loadings, truncate
small entries of them, and update the rotation matrix.

SPCArt turns out to resolve or alleviate the previous three points.
It has the following merits. (1) It is able to explain as much
variance as the PCA loadings, since the sparse basis spans almost
the same subspace as the PCA loadings. (2) The new basis is close to
orthogonal, since it approximates the rotated PCA loadings. (3) The
truncation tends to produce more balanced sparsity, since vectors of
the rotated PCA loadings are of equal length. (4) It is not greedy
compared with the deflation group, it belongs to the block group.

The contributions of this paper are four-fold: (1) we propose an
efficient algorithm SPCArt achieving good performance over a series
of criteria, some of which have been overlooked by previous work;
(2) we devise various truncation operations for different situations
and provide performance analysis; (3) we give a unified view for a
series of previous sparse PCA approaches, together with ours; (4)
under the unified view, we find the relation between GPower, rSVD,
and our method, and extend GPower \cite{journee2010generalized} and
rSVD \cite{shen2008sparse} to a new implementation, called rSVD-GP,
to overcome their drawbacks--parameter tuning problem and imbalance
of sparsity among loadings.

The rest of the paper is organized as follows:
Section~\ref{sec:related work} introduces representative work on
sparse PCA. Section~\ref{sec:SPCArt} presents our method SPCArt and
four types of truncation operations, and analyzes their performance.
Section~\ref{sec:unified} gives a unified view for a series of
previous work. Section~\ref{sec:relation to GPower} shows the
relation between GPower, rSVD, and our method, and extends GPower
and rSVD to a new implementation, called rSVD-GP. Experimental
results are provided in Section~\ref{sec:experiment}. Finally, we
conclude this paper in Section~\ref{sec:conclusion}.

%%%%%%%%%%%%%%    table    %%%%%%%%%%%%%%%%%%%%%%
\begin{table*}[t]
\caption{Time complexities for computing $r$ loadings from $n$
samples of dimension $p$. $m$ is the number of iterations. $k$ is
the cardinality of a loading. The preprocessing and initialization
overheads are omitted. ST and SPCArt have the additional cost of
PCA. The complexities of SPCArt listed below are of the truncation
types T-$\ell_0$ and T-$\ell_1$. Those of T-sp and T-en are
$O(rp\log p+r^2p+r^3)$. }\label{tab:timeO} \vskip 0.1in
\begin{center}
\begin{scriptsize} %
\begin{tabular}{|c||c|c|c|c|c|c|c|c|}
\hline & PCA \cite{jolliffe2002principal} & ST
\cite{cadima1995loading} & SPCA \cite{zou2006sparse}& PathSPCA
\cite{dAspremont2008optimal}& ALSPCA \cite{lu2009augmented}&
\tabincell{c}{GPower \cite{journee2010generalized}, \\rSVD-GP,\\
TPower \cite{yuan2013truncated}} & \tabincell{c}{GPowerB
\cite{journee2010generalized},\\ rSVD-GPB} & SPCArt\\\hline\hline

$n>p$ & $O(np^2)$ & $O(rp)$ & $mO(r^2p+rp^3)$ & $O(rkp^2+rk^3)$ &
$mO(rp^2)$ & $mO(rp^2)$ & $mO(rpn+r^2n)$ & $mO(r^2p+r^3)$\\\hline

$n<p$ & $O(pn^2)$ & $O(rp)$ & $mO(r^2p+rnp)$ & $O(rknp+rk^3)$ &
$mO(rnp)$ & $mO(rnp)$ & $mO(rpn+r^2n)$ & $mO(r^2p+r^3)$\\\hline
\end{tabular}
\end{scriptsize}
\end{center}
\vskip -0.1in
\end{table*}
%%%%%%%%%%%%%%    table    %%%%%%%%%%%%%%%%%%%%%%

\section{Related Work}\label{sec:related work}
Various sparse PCA methods have been proposed during the past
decade. We give a brief review below.

\emph{1. Post-processing PCA}. In early days, interpretability is
gained via post-processing the PCA loadings. Loading rotation (LR)
\cite{jolliffe1989rotation} applies various criteria to rotate the
PCA loadings so that 'simple structure' emerges, e.g. varimax
criterion drives the entries to be either small or large, which is
close to a sparse structure. Simple thresholding (ST)
\cite{cadima1995loading} instead obtains sparse loadings via
directly setting the entries of PCA loadings below a small threshold
to zero.

\emph{2. Covariance matrix maximization}. More recently, systematic
approaches based on solving explicit objectives were proposed,
starting from SCoTLASS \cite{jolliffe2003modified} which optimizes
the classical objective of PCA, i.e. maximizing the quadratic form
of covariance matrix, while additionally imposing a sparsity
constraint on each loading.

\emph{3. Matrix approximation}. SPCA \cite{zou2006sparse} formulates
the problem as a regression-type optimization, so as to facilitate
the use of LASSO \cite{tibshirani1996regression} or elastic-net
\cite{zou2005regularization} techniques to solve the problem. rSVD
\cite{shen2008sparse} and SPC \cite{witten2009penalized} obtain
sparse loadings by solving a sequence of rank-1 matrix
approximations, with sparsity penalty or constraint imposed.

\emph{4. Semidefinite convex relaxation}. Most of the methods
proposed so far are local ones, which suffer from getting trapped in
local minima. DSPCA \cite{aspremont2007direct} transforms the
problem into a semidefinite convex relaxation problem, thus global
optimality of solution is guaranteed. This distinguishes it from
most of the other local methods. Unfortunately, its computational
complexity is as high as $O(p^4\sqrt{\log p})$ ($p$ is the number of
variables), which is expensive for most applications. Later, a
variable elimination method \cite{zhang2011large} of complexity
$O(p^3)$ was developed in order to make the application on large
scale problem feasible.

\emph{5. Greedy methods}.  In \cite{moghaddam2006spectral}, greedy
search and branch-and-bound methods are used to solve small
instances of the problem exactly. Each step of the algorithm has a
complexity $O(p^3)$, leading to a total complexity of $O(p^4)$ for a
full set of solutions (solutions of cardinality ranging from 1 to
$p$). Later, this bound is improved in the classification setting
\cite{moghaddam2006generalized}. In another way, a greedy algorithm
PathSPCA \cite{dAspremont2008optimal} was presented to further
approximate the solution process of \cite{moghaddam2006spectral},
resulting in a complexity of $O(p^3)$ for a full set of solutions.
For a review of DSPCA, PathSPCA, and their applications, see
\cite{zhang2012sparse}.

\emph{6. Power methods}. The GPower method
\cite{journee2010generalized} formulates the problem as maximization
of a convex objective function and the solution is obtained by
generalizing the power method \cite{golub1996matrix} that is used to
compute the PCA loadings. Recently, a new power method TPower
\cite{yuan2013truncated}, and a somewhat different but related power
method ITSPCA \cite{ma2013sparse} that aims at recovering sparse
principal subspace, were proposed.

\emph{7. Augmented lagrangian optimization}. ALSPCA
\cite{lu2009augmented} solves the problem based on an augmented
lagrangian optimization. The most special feature of ALSPCA is that
it simultaneously considers the explained variance, orthogonality,
and correlation among principal components.

Among them only LR \cite{jolliffe1989rotation}, SCoTLASS
\cite{jolliffe2003modified}, ALSPCA \cite{lu2009augmented} have
considered the orthogonality of loadings. SCoTLASS, rSVD
\cite{shen2008sparse}, SPC \cite{witten2009penalized}, the greedy
methods \cite{moghaddam2006spectral, dAspremont2008optimal}, one
version of GPower \cite{journee2010generalized}, and TPower
\cite{yuan2013truncated} belong to the deflation group. Only DSPCA's
solution \cite{aspremont2007direct} is ensured to be globally
optimal.

The computational complexities of some of the above algorithms are
summarized in Table~\ref{tab:timeO}.

\section{SPCArt: Sparse PCA via Rotation and Truncation}\label{sec:SPCArt}
We first give a brief overview of SPCArt, next introduce the
motivation, and then the objective and optimization, and then the
truncation types, and finally provide performance analysis.

The idea of SPCArt is simple. Since any rotation of the $r$ PCA
loadings $[V_1,\dots,V_r]\in\mathbb{R}^{p\times r}$ constitutes an
orthogonal basis spanning the same subspace, $X=VR$
($R\in\mathbb{R}^{r\times r}$, $R^TR=I$), we want to find a rotation
matrix $R$ through which $V$ is transformed to a sparsest basis $X$.
It is difficult to solve this problem directly, so instead we would
find a rotation matrix and a sparse basis such that the sparse basis
approximates the PCA loadings after the rotation $V\approx XR$.

%The above problem is solved by alternatively fixing one variable and
%optimizing the other. Both the two subproblems have closed-form
%solutions. There are three steps in solving the problem: 1) rotate
%$V$ to a new basis $Z=VR^T$; 2) truncate $Z$ to a sparse $X$; 3)
%update $R$ via SVD of $X^TV$. Four types of truncations are designed
%to deal with different situations, including hard thresholding
%(T-$\ell_0$), soft thresholding (T-$\ell_1$), truncation by energy
%(T-en), and truncation by sparsity (T-sp). For each type,
%performance analysis is provided.

The major notations used are listed in Table~\ref{tab:notations}.

\begin{table*}[t]
\caption{Major notations.}\label{tab:notations} \vskip 0.1in
\begin{center}
%\vskip -0.1in
\begin{small} %
\begin{tabular}{|c||m{12.5cm}|}

\hline notation & interpretation \\\hline\hline

$A\in \mathbb{R}^{n\times p}$ & data matrix with $n$ samples of $p$
variables\\\hline

$V=[V_1,V_2,\dots]$ & PCA loadings arranged column-wise. $V_i$
denotes the $i$th column. $V_{1:r}$ denotes the first $r$
columns\\\hline

$R$ & rotation matrix\\\hline

$Z$ & rotated PCA loadings, i.e. $VR^T$\\\hline

$X$ & spare loadings arranged column-wise, similar to $V$\\\hline

$Polar(\cdot)$ & for a matrix $B\in \mathbb{R}^{n\times p}$, $n\geq
p$, let the thin SVD be $WDQ^T$, $D\in \mathbb{R}^{p\times p}$, then
$Polar(B)=WQ^T$\\\hline

$S_\lambda(\cdot)$ & $0\leq \lambda <1$. For a vector $x$,
$S_\lambda(x)$ is entry-wise soft thresholding:
$S_\lambda(x_i)=sign(x_i)(|x_i|-\lambda)_+$, where $[y]_+= y$ if
$y\geq 0$ and $[y]_+=0$ otherwise\\\hline

$H_\lambda(\cdot)$ & $0\leq \lambda <1$. For a vector $x$,
$H_\lambda(x)$ is entry-wise hard thresholding:
$H_\lambda(x_i)=x_i[sign(|x_i|-\lambda)]_+$, i.e. $H_\lambda(x_i)=0$
if $|x_i|\leq \lambda$, $H_\lambda(x_i)=x_i$ otherwise\\\hline

$P_\lambda(\cdot)$ & $\lambda\in\{0,1,2,\cdots\}$. For a vector $x$,
$P_\lambda(x)$ sets the smallest $\lambda$ entries (absolute value)
to be zero\\\hline

$E_\lambda(\cdot)$ & $0\leq \lambda <1$. For a vector $x$,
$E_\lambda(x)$ sets the smallest $k$ entries, whose energy take up
at most $\lambda$, to be zero. $k$ is found as following: sort
$|x_1|,|x_2|,\dots$ in ascending order to be
$\bar{x}_1,\bar{x}_2,\dots$, $k=\max_i i,\,s.t.\sum_{j=1}^i
\bar{x}_{j}^2/\|x\|_2^2\leq\lambda$\\\hline

\end{tabular}
\end{small}
\end{center}
\vskip -0.1in
\end{table*}
%-------------------------------------------------------------------------

\subsection{Motivation}\label{sec:motivation}
Our method is motivated by the solution of the Eckart-Young theorem
\cite{eckart1936approximation}. This theorem considers the problem
of approximating a matrix by the product of two low-rank ones.

\begin{theorem}\label{theo:Eckart-Young}
(\textbf{Eckart-Young Theorem}) Assume the SVD of a matrix
$A\in\mathbb{R}^{n\times p}$ is $A=U\Sigma V^T$, in which
$U\in\mathbb{R}^{n\times m}$, $m\leq min\{n,p\}$,
$\Sigma\in\mathbb{R}^{m\times m}$ is diagonal with
$\Sigma_{11}\geq\Sigma_{22}\geq\cdots\geq\Sigma_{mm}$, and
$V\in\mathbb{R}^{p\times m}$. A rank $r$ ($r\leq m$) approximation
of $A$ is to solve the following problem:
\begin{equation}\label{equ:Eckart-Young}
\min_{Y,X} \|A-YX^T\|^2_F,\;s.t.\,X^TX=I,
\end{equation}
where $Y\in\mathbb{R}^{n\times r}$ and $X\in\mathbb{R}^{p\times r}$.
A solution is
\begin{equation}\label{equ:Y=AX}
X^*=V_{1:r},\;Y^*=AX^*,
\end{equation}
where $V_{1:r}$ is the first $r$ columns of $V$.
\end{theorem}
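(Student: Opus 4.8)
The plan is to prove the Eckart-Young theorem by reducing the joint minimization over $Y$ and $X$ to a maximization problem over $X$ alone, then invoking the variational (Ky Fan / Courant-Fischer) characterization of the top singular subspace. First I would fix an arbitrary $X$ with $X^TX=I$ and minimize over $Y$. Since the objective $\|A-YX^T\|_F^2$ is an unconstrained convex quadratic in $Y$, setting the gradient to zero gives the normal equations $YX^TX = AX$, and because $X^TX=I$ this collapses to the clean closed form $Y^*=AX$. This is exactly equation~\eqref{equ:Y=AX}, and it shows the optimal $Y$ is always the projection coefficients of the data onto the columns of $X$.

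Next I would substitute $Y=AX$ back into the objective and simplify. Expanding $\|A-AXX^T\|_F^2 = \operatorname{tr}\bigl((A-AXX^T)^T(A-AXX^T)\bigr)$ and using $X^TX=I$ together with the cyclic property of the trace, the cross terms combine so that the expression reduces to $\|A\|_F^2 - \operatorname{tr}(X^TA^TAX)$. Since $\|A\|_F^2$ is a constant independent of $X$, minimizing the original objective is equivalent to \emph{maximizing} $\operatorname{tr}(X^TA^TAX)$ subject to $X^TX=I$. Here $A^TA$ is symmetric positive semidefinite with eigenvalues equal to the squared singular values $\Sigma_{11}^2\geq\cdots\geq\Sigma_{mm}^2$ and eigenvectors given by the columns of $V$.

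The final step is the eigenvalue/subspace argument. By Ky Fan's maximum principle (equivalently, a repeated application of the Courant-Fischer min-max theorem), the maximum of $\operatorname{tr}(X^TMX)$ over all $X$ with $X^TX=I$ for a symmetric matrix $M$ equals the sum of its $r$ largest eigenvalues, and the maximizer is attained by taking $X$ to be the matrix of corresponding top $r$ eigenvectors. Applied to $M=A^TA$, this yields $X^*=V_{1:r}$, which together with $Y^*=AX^*$ establishes the claimed solution.

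I expect the main obstacle to be the Ky Fan step, since the earlier reductions are routine calculus and trace manipulation. Two subtleties deserve care: first, the maximizer is not unique when the relevant singular values are degenerate (if $\Sigma_{rr}=\Sigma_{r+1,r+1}$, then any rotation within the tied eigenspace is equally optimal), which is why the statement says ``a solution'' rather than ``the solution''; and second, the variational characterization must be invoked for a matrix argument $X\in\mathbb{R}^{p\times r}$ with orthonormal columns rather than a single vector, so I would either cite Ky Fan directly or build it up by an inductive deflation argument over the columns of $X$. Handling the non-uniqueness gracefully—noting that any orthonormal basis of the top-$r$ eigenspace works, which is precisely what makes the subsequent rotation $X=VR$ in SPCArt legitimate—is the conceptual crux tying this theorem to the paper's method.
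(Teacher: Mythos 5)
Your proposal is correct and follows essentially the same route the paper itself uses: in Section~\ref{sec:unified} the paper derives PCA from Theorem~\ref{theo:Eckart-Young} by exactly your reduction, substituting $Y^*=AX$ into (\ref{equ:Eckart-Young}) to obtain $\max_X \operatorname{tr}(X^TA^TAX)$ s.t.\ $X^TX=I$, and then invoking the Ky Fan theorem \cite{fan1961generalization}, whose solution $X^*=V_{1:r}R$ for any rotation $R$ also captures the non-uniqueness you correctly flag as the basis of SPCArt. No gaps; your argument is sound.
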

Alternatively, the solution can be expressed as
\begin{equation}\label{equ:X=Polar}
Y^*=U_{1:r}\Sigma_{1:r},\;X^*=Polar(A^TY^*),
\end{equation}
where $Polar(\cdot)$ is the orthonormal component of the polar
decomposition of a matrix \cite{journee2010generalized}. From the
more familiar SVD perspective, its equivalent definition is provided
in Table~\ref{tab:notations}.

Note that if the row vectors of $A$ have been centered to have mean
zero, $V_{1:r}$ are the loadings obtained by PCA. Clearly, $\forall
R\in\mathbb{R}^{r\times r}$ and $R^TR=I$, $X^*=V_{1:r}R$ and
$Y^*=AX^*=U_{1:r}\Sigma_{1:r}R$ is also a solution of
(\ref{equ:Eckart-Young}). This implies that any rotation of the $r$
orthonormal leading eigenvectors $V_{1:r}\in\mathbb{R}^{p\times r}$
is a solution of the best rank $r$ approximation of $A$. That is,
any orthonormal basis in the corresponding eigen-subspace is capable
of representing the original data distribution as well as the
original basis. Thus, a natural idea for sparse PCA is to find a
rotation matrix $R$ so that $X=V_{1:r}R$ becomes sparse, i.e.,
\begin{equation}\label{equ:motivation}
\min_{R\in\mathbb{R}^{r\times r}} \|V_{1:r}R\|_0,\;s.t.\;R^TR=I,
\end{equation}
where $\| \cdot \|_0$ denotes the sum of $\ell_0$ %textcolor{red}
{(pseudo)} norm of the columns of a matrix, i.e. it counts the
non-zeros of a matrix.

\subsection{Objective and optimization}\label{sec:objective}
Unfortunately, the above problem is hard to solve. So we approximate
it instead. Since $X=V_{1:r}R\Leftrightarrow V_{1:r}=XR^T$, we want
to find a rotation matrix $R$ through which a sparse basis $X$
approximates the original PCA loadings. Without confusion, we use
$V$ to denote $V_{1:r}$ hereafter. For simplicity, the $\ell_0$
version will be postponed to next section, we consider the $\ell_1$
version first:
\begin{equation}\label{equ:SPCArtl1}
\begin{split}
&\min_{X,R}\;
\frac{1}{2}\|V-XR\|^2_F+\lambda\sum_i\|X_i\|_1,\,\\
&s.t.\,\forall i,\, \|X_i\|_2=1,\,R^TR=I.
\end{split}
\end{equation}
$\| \cdot \|_1$ is the $\ell_1$ norm of a vector, i.e. sum of
absolute values. It is well-known that $\ell_1$ norm is sparsity
inducing, which is a convex surrogate of the $\ell_0$ norm
\cite{donoho2006most}. Under this objective, the solution may not be
orthogonal, and may deviate from the eigen-subspace spanned by $V$.
However, if the approximation is accurate enough, i.e., $V\approx
XR$, then $X\approx VR^T$ would be nearly orthogonal and explain
similar variance as $V$. Note that the above objective turns out to
be a matrix approximation problem as Eckart-Young theorem. The key
difference is that sparsity penalty is added. But the solutions
still share some common features.

There is no closed-form solutions for $R$ and $X$ simultaneously. We
can solve the problem by fixing one and optimizing the other
alternately. Both subproblems have closed-form solutions.

\subsubsection{Fix $X$, solve $R$}
When $X$ is fixed, it becomes a Procrustes problem
\cite{zou2006sparse}:
\begin{equation}\label{equ:SPCArtl1-R}
\min_{R} \|V-XR\|^2_F,\;s.t.\,R^TR=I.
\end{equation}
$R^*=Polar(X^TV)$. It has the same form as the right of
(\ref{equ:X=Polar}).

\subsubsection{Fix $R$, solve $X$}
When $R$ is fixed, it becomes
\begin{equation}\label{equ:SPCArtl1-X}
\min_{X}
\frac{1}{2}\|VR^T-X\|^2_F+\lambda\sum_i\|X_i\|_1,\,s.t.\,\forall
i,\, \|X_i\|_2=1.
\end{equation}
There are $r$ independent subproblems, one for each column:
$\min_{X_i}
1/2\|Z_i-X_i\|^2_2+\lambda\|X_i\|_1,\,s.t.\,\|X_i\|_2=1$, where
$Z=VR^T$. It is equivalent to
$\max_{X_i}\;Z_i^TX_i-\lambda\|X_i\|_1,\,s.t.\,\|X_i\|_2=1$. The
solution is $X_i^*=S_\lambda (Z_i)/\|S_\lambda (Z_i)\|_2$
\cite{journee2010generalized}. $S_\lambda (\cdot)$ is entry-wise
soft thresholding, defined in Table~\ref{tab:notations}. This is
truncation type \textbf{T-$\ell_1$: soft thresholding}.

It has the following physical explanations. $Z$ is rotated PCA
loadings, it is orthonormal. $X$ is obtained via truncating small
entries of $Z$. On one hand, because of the unit length of each
column in $Z$, a single threshold $0\leq \lambda < 1$ is feasible to
make the sparsity distribute evenly among the columns in $X$;
otherwise we have to apply different thresholds for different
columns which are hard to determine. On the other hand, because of
the orthogonality of $Z$ and small truncations, $X$ is still
possible to be nearly orthogonal. These are the most distinctive
features of SPCArt. They enable easy analysis and parameter setting.

The algorithm of SPCArt is presented in Algorithm~\ref{alg:SPCArt},
where the truncation in line 7 can be any type, including T-$\ell_1$
and the others that will be introduced in next section.

The computational complexity of SPCArt is shown in
Table~\ref{tab:timeO}. Except the computational cost of PCA
loadings, SPCArt scales linearly about data dimension. When the
number of loadings is not too large, it is efficient.

\begin{algorithm}[tb]
   \caption{SPCArt}
   \label{alg:SPCArt}
\begin{algorithmic}[1]
   \STATE {\bfseries Input:} data matrix $A\in\mathbb{R}^{n\times p}$, number of loadings $r$, truncation type
$T$, truncation parameter $\lambda$.
   \STATE {\bfseries Output:} sparse loadings $X=[X_1,\dots,X_r]\in\mathbb{R}^{p\times r}$.
   \STATE PCA: compute rank-$r$ SVD of $A$: $U\Sigma V^T$, $V\in\mathbb{R}^{p\times r}$.
   \STATE Initialize $R$: $R=I$.
   \REPEAT
   \STATE Rotation: $Z=VR^T$.
   \STATE Truncation: $\forall {i}$, $X_i=T_\lambda(Z_i)/\|T_\lambda(Z_i)\|_2$.
   \STATE Update $R$: thin SVD of $X^TV$: $WDQ^T$, $R=WQ^T$.
   \UNTIL{convergence}
\end{algorithmic}
\end{algorithm}
%----------------------------------------

\subsection{Truncation Types}\label{sec:truncation types}

In this section, given rotated PCA loadings $Z$, we introduce the
truncation operation $T_\lambda(Z_i)$, where $T_\lambda$ is any of
the following four types: T-$\ell_1$ soft thresholding $S_\lambda$,
T-$\ell_0$ hard thresholding $H_\lambda$, T-sp truncation by
sparsity $P_\lambda$, and T-en truncation by energy $E_\lambda$.
T-$\ell_1$ has been introduced in last section, which is resulted
from $\ell_1$ penalty.

\textbf{T-$\ell_0$: hard thresholding}. Set the entries below
threshold $\lambda$ to be zero: $X_i^*=H_\lambda (Z_i)/\|H_\lambda
(Z_i)\|_2$. $H_\lambda(\cdot)$ is defined in
Table~\ref{tab:notations}. It is resulted from $\ell_0$ penalty:
\begin{equation}\label{equ:SPCArtl0}
\min_{X,R} \|V-XR\|^2_F+\lambda^2\sum_i\|X_i\|_0,\,s.t.\,R^TR=I,
\end{equation}
The optimization is similar to the $\ell_1$ case. Fixing $X$,
$R^*=Polar(X^TV)$. Fixing $R$, the problem becomes $\min_{X}
\|VR^T-X\|^2_F+\lambda^2\|X\|_0$. Let $Z=VR^T$, it can be decomposed
to $p\times r$ entry-wise subproblems, and the solution is apparent:
if $|Z_{ji}|\leq \lambda$, then $X_{ji}^*=0$, otherwise
$X_{ji}^*=Z_{ji}$. Hence the solution can be expressed as
$X_i^*=H_\lambda (Z_i)$.

There is no normalization for $X^*$ compared with the $\ell_1$ case.
This is because if unit length constraint $\|X_i\|_2=1$ is added,
there will be no closed form solution. However, in practice, we
still let $X_i^*=H_\lambda (Z_i)/\|H_\lambda (Z_i)\|_2$ for
consistency, since empirically no significant difference is
observed.

Note that both $\ell_0$ and $\ell_1$ penalties only result in
thresholding operation on $Z$ and nothing else (only make line 7 of
Algorithm~\ref{alg:SPCArt} different). Hence, we may devise other
heuristic truncation types irrespective of explicit objective:

\textbf{T-sp: truncation by sparsity}. Truncate the smallest
$\lambda$ entries: $X_i=P_\lambda(Z_i)/\|P_\lambda(Z_i)\|_2$,
$\lambda\in\{0,1,\dots,p-1\}$. Table~\ref{tab:notations} gives the
precise definition of $P_\lambda(\cdot)$. It can be shown that this
heuristic type is resulted from the $\ell_0$ constraint:
\begin{equation}\label{equ:SPCArtsp}
\begin{split}
&\min_{X,R}\;\|V-XR\|^2_F,\\
&s.t.\,\forall i,\, \|X_i\|_0\leq p-\lambda,\;\|X_i\|_2=1,\,R^TR=I.
\end{split}
\end{equation}
When $X$ is fixed, the solution is the same as $\ell_0$ and $\ell_1$
cases above. When $R$ is fixed, the solution is
$X_i^*=P_\lambda(Z_i)/\|P_\lambda(Z_i)\|_2$, where $Z=VR^T$. The
proof is put in Appendix~\ref{sec:app T-sp}.

\textbf{T-en: truncation by energy}. Truncate the smallest entries
whose energy (sum of square) take up $\lambda$ percentage:
$X_i=E_\lambda(Z_i)/\|E_\lambda(Z_i)\|_2$. $E_\lambda$ is described
in Table~\ref{tab:notations}. However, we are not aware of any
objective associated with this type.

Algorithm~\ref{alg:SPCArt} describes the complete algorithm of
SPCArt with any truncation type.

SPCArt promotes the seminal ideas of simple thresholding
\cite{cadima1995loading} and loading rotation
\cite{jolliffe1989rotation}. When using T-$\ell_0$, the first
iteration of SPCArt, i.e. $X_i=H_\lambda(V_i)$, corresponds to the
ad-hoc simple thresholding ST, which is frequently used in practice
and sometimes produced good results \cite{zou2006sparse,
moghaddam2006spectral}. In another way, the motivation of SPCArt,
i.e. (\ref{equ:motivation}), is similar to the loading rotation,
whereas SPCArt explicitly seeks sparse loadings via $\ell_0$
pseudo-norm, loading rotation seeks 'simple structure' via various
criteria, e.g. the varimax criterion, which maximizes the variances
of squared loadings $\sum_i [\sum_j Z_{ji}^4-1/p(\sum_k Z_{ki}^2)]$,
where $Z=VR$, drives the entries to distribute unevenly, either
small or large (see Section 7.2 in \cite{jolliffe2002principal}).

\subsection{Performance Analysis}\label{sec:analysis}
This section discusses the performance bounds of SPCArt with each
truncation type. For $X_i=T_\lambda(Z_i)/\|T_\lambda(Z_i)\|_2$,
$i=1,\dots,r$, we study the following problems:

(1) How much sparsity of $X_i$ is guaranteed?

(2) How much $X_i$ deviates from $Z_i$?

(3) How is the orthogonality of $X$?

(4) How much variance is explained by $X$?

The performance bounds derived are functions of $\lambda$. Thus, we
can directly or indirectly control sparsity, orthogonality, and
explained variance via $\lambda$.\footnote{Theorem~\ref{theo:cosev}
is specific to SPCArt, which concerns the important explained
variance. The other results apply to more general situations:
Proposition 6-11 apply to any orthonormal $Z$,
Theorem~\ref{theo:dminev} applies to any matrix $X$. To obtain
results specific to SPCArt, we may have to make assumption of the
data distribution. Nevertheless, they are still the absolute
performance bounds of SPCArt and can guide us to set $\lambda$ for
some performance guarantee.} We give some definitions first.

\begin{definition}
$\forall x\in\mathbb{R}^p$, the \textbf{sparsity} of $x$ is the
proportion of zero entries: $s(x)=1-\|x\|_0/p.$
\end{definition}

\begin{definition}
$\forall z\in\mathbb{R}^{p}$, $z\neq 0$,
$x=T_\lambda(z)/\|T_\lambda(z)\|_2$, the \textbf{deviation} of $x$
from $z$ is $\sin(\theta(x,z))$, where $\theta(x,z)$ is the included
angle between $x$ and $z$, $0\leq \theta(x,z)\leq \pi/2$. If $x=0$,
$\theta(x,y)$ is defined to be $\pi/2$.
\end{definition}

\begin{definition}
$\forall x,\,y\in\mathbb{R}^p$, $x\neq 0$, $y\neq 0$, the
\textbf{nonorthogonality} between $x$ and $y$ is
$|\cos(\theta(x,y))|=|x^Ty|/(\|x\|_2\cdot\|y\|_2)$, where
$\theta(x,y)$ is the included angle between $x$ and $y$.
\end{definition}

\begin{definition}
Given data matrix $A\in\mathbb{R}^{n\times p}$ containing $n$
samples of dimension $p$, $\forall$ basis $X\in\mathbb{R}^{p\times
r}$, $r\leq p$, the \textbf{explained variance} of $X$ is
$EV(X)=tr(X^TA^TAX)$. Let $U$ be any orthonormal basis in the
subspace spanned by $X$, then the \textbf{cumulative percentage of
explained variance} is $CPEV(X)=tr(U^TA^TAU)/tr(A^TA)$
\cite{shen2008sparse}.
\end{definition}

Intuitively, larger $\lambda$ leads to higher sparsity and larger
deviation. When two truncated vectors deviate from their originally
orthogonal vectors, in the worst case, the nonorthogonality of them
degenerates as the `sum' of their deviations. In another way, if the
deviations of a sparse basis from the rotated loadings are small, we
expect the sparse basis still represents the data well, and the
explained variance or cumulative percentage of explained variance
maintains similar level to that of PCA. So, both the
nonorthogonality and the explained variance depend on the
deviations, and the deviation and sparsity in turn are controlled by
$\lambda$. We now go into details. The proofs of some of the
following results are included in Appendix~\ref{sec:app bounds}.

\vspace{2ex}
\subsubsection{Orthogonality}
\begin{proposition}\label{theo:nonortho}
The relative upper bound of nonorthogonality between $X_i$ and
$X_j$, $i\neq j$, is
\begin{equation}
\begin{aligned}
&|\cos(\theta(X_i,X_j))|\leq\\
&
\begin{cases}
  \sin(\theta(X_i,Z_i)+\theta(X_j,Z_j))&,\theta(X_i,Z_i)+\theta(X_j,Z_j)\leq \frac{\pi}{2}, \\
  1&,\text{otherwise.}
\end{cases}
\end{aligned}
\end{equation}
\end{proposition}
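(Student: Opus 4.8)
The plan is to treat the unsigned included angle $\theta(\cdot,\cdot)\in[0,\pi]$ between nonzero vectors as the geodesic distance on the unit sphere and exploit its triangle inequality. First I would record the two structural facts that make this work. Since $Z=VR^{T}$ is the product of the orthonormal $V$ with the orthogonal $R^{T}$, its columns form an orthonormal set, so $Z_i$ and $Z_j$ are orthogonal unit vectors with $\theta(Z_i,Z_j)=\pi/2$; and $X_i,X_j$ are unit vectors by construction. Second, each truncation operator $T_\lambda$ preserves the sign of every surviving entry and zeros the rest, so $X_i^{T}Z_i\ge0$, which forces the deviation angles $\alpha:=\theta(X_i,Z_i)$ and $\beta:=\theta(X_j,Z_j)$ into $[0,\pi/2]$, consistent with the definition of deviation. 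The degenerate case $X_i=0$, where $\theta$ is defined to be $\pi/2$, is then covered automatically.

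The core step is to sandwich the angle $\gamma:=\theta(X_i,X_j)$. Applying the triangle inequality for the angular metric twice in each direction, passing through $Z_i$ and $Z_j$, gives
\[
\tfrac{\pi}{2}-(\alpha+\beta)\;\le\;\gamma\;\le\;\tfrac{\pi}{2}+(\alpha+\beta).
\]
Thus $\gamma$ lies in an interval symmetric about $\pi/2$, the zero of $\cos$. To finish I would bound the nonorthogonality $|\cos\gamma|$ over this interval. When $\alpha+\beta\le\pi/2$ the interval is contained in $[0,\pi]$, and since $\cos$ is decreasing there, $|\cos\gamma|$ attains its maximum at the two endpoints, where it equals $\cos(\tfrac{\pi}{2}-(\alpha+\beta))=\sin(\alpha+\beta)$; hence $|\cos(\theta(X_i,X_j))|\le\sin(\alpha+\beta)$, the first case. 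When $\alpha+\beta>\pi/2$ the bound $|\cos\gamma|\le1$ is vacuous, matching the stated second case.

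The step I expect to require the most care is the triangle inequality itself: the angle $\theta$ on the unit sphere is a genuine metric (the great-circle distance), and it is precisely this property that licenses the sandwich, so I would either cite it or justify it briefly, for instance via the spherical law of cosines. A secondary subtlety worth flagging is that the nonorthogonality is measured by the full included angle ranging over $[0,\pi]$, whereas the deviation angle is restricted to $[0,\pi/2]$; keeping this distinction straight is exactly what forces the absolute value on $\cos\gamma$ and yields the interval symmetric about $\pi/2$, and hence the clean $\sin(\alpha+\beta)$ bound rather than a one-sided estimate.
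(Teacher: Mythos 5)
Your proof is correct and takes essentially the same route as the paper: the paper's entire justification is a one-sentence appeal to the two conical surfaces of half-angles $\theta(X_i,Z_i)$, $\theta(X_j,Z_j)$ about the orthogonal axes $Z_i$, $Z_j$, and your sandwich $\tfrac{\pi}{2}-(\alpha+\beta)\le\theta(X_i,X_j)\le\tfrac{\pi}{2}+(\alpha+\beta)$ via the triangle inequality for the geodesic metric on the sphere is precisely the rigorous formalization of that cone picture. In fact your write-up is more complete than the paper's, since you also verify that the deviation angles lie in $[0,\pi/2]$ (sign preservation under truncation) and handle the degenerate case $X_i=0$.
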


The bounds can be obtained by considering the two conical surfaces
generated by axes $Z_i$ with rotational angles $\theta(X_i,Z_i)$.
The proposition implies the nonorthogonality is determined by the
sum of deviated angles. When the deviations are small, the
orthogonality is good. The deviation depends on $\lambda$, which is
analyzed below.

\vspace{2ex}
\subsubsection{Sparsity and Deviation}
The following results only concern a single vector of the basis. We
will denote $Z_i$ by $z$, and $X_i$ by $x$ for simplicity, and
derive bounds of sparsity $s(x)$ and deviation $\sin(\theta(x,z))$
for each $T$. They depend on a key value $1/\sqrt p$, which is the
entry value of a uniform vector.

\begin{proposition}
For T-$\ell_0$, the sparsity bounds are
\begin{equation}
\begin{cases}
  0\leq s(x) \leq 1-\frac{1}{p} & \text{, $\lambda<\frac{1}{\sqrt{p}}$,} \\
  1-\frac{1}{p\lambda^2}< s(x) \leq 1 & \text{, $\lambda\geq\frac{1}{\sqrt{p}}$.}
\end{cases}
\end{equation}
Deviation $\sin(\theta(x,z))=\|\bar{z}\|_2$, where $\bar{z}$ is the
truncated part: $\bar{z}_i=z_i$ if $x_i=0$, and $\bar{z}_i=0$
otherwise. The absolute bounds are:
\begin{equation}
0\leq \sin(\theta(x,z)) \leq
\begin{cases}
  \sqrt{p-1}\lambda & \text{, $\lambda<\frac{1}{\sqrt{p}}$,} \\
  1 & \text{, $\lambda\geq\frac{1}{\sqrt{p}}$.}
\end{cases}
\end{equation}
All the above bounds are achievable.
\end{proposition}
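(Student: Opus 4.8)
The plan is to reduce everything to the hard-threshold split $z = H_\lambda(z) + \bar z$, whose two parts have disjoint supports and are therefore orthogonal. First I would record that each column $z = Z_i$ is a unit vector, since $Z = VR^T$ has orthonormal columns. Writing $x = H_\lambda(z)/\|H_\lambda(z)\|_2$, the orthogonality of the split gives $H_\lambda(z)^T z = \sum_{\text{kept}} z_i^2 = \|H_\lambda(z)\|_2^2$, so $\cos\theta(x,z) = x^T z = \|H_\lambda(z)\|_2$ and hence $\sin\theta(x,z) = \sqrt{1 - \|H_\lambda(z)\|_2^2} = \|\bar z\|_2$, which establishes the stated deviation formula. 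All remaining bounds then follow from counting kept versus truncated coordinates.

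Let $k = \|x\|_0$ be the number of surviving coordinates, so $s(x) = 1 - k/p$. The two governing inequalities are that every kept coordinate has $z_i^2 > \lambda^2$ while every truncated coordinate has $z_i^2 \le \lambda^2$. Summing over the kept coordinates and using $\|z\|_2^2 = 1$ gives $k\lambda^2 < 1$, i.e. $k < 1/\lambda^2$; summing over the truncated coordinates gives $\|\bar z\|_2^2 \le (p-k)\lambda^2$. For $\lambda \ge 1/\sqrt p$ the first inequality yields $s(x) = 1 - k/p > 1 - 1/(p\lambda^2)$, strict because the kept entries strictly exceed $\lambda$, while $s(x)\le 1$ and $\sin\theta(x,z)\le 1$ are trivial. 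For $\lambda < 1/\sqrt p$ the key observation is instead that not all coordinates can be truncated: if they were, then $1 = \|z\|_2^2 \le p\lambda^2 < 1$, a contradiction; hence $k \ge 1$, giving $s(x) \le 1 - 1/p$ and $p - k \le p - 1$, which turns the truncation bound into $\sin\theta(x,z) = \|\bar z\|_2 \le \sqrt{p-1}\,\lambda$.

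It remains to exhibit vectors $z$ attaining each endpoint, and this achievability check is where I expect the only real care to be needed. The easy extremes use the uniform vector $z = (1/\sqrt p,\dots,1/\sqrt p)$ and the spike $z = e_1$: the uniform vector keeps every coordinate when $\lambda < 1/\sqrt p$ (minimum sparsity $0$, zero deviation) and is annihilated when $\lambda \ge 1/\sqrt p$ (sparsity $1$, deviation $1$, since then $x = 0$ and $\theta := \pi/2$ by convention), while the spike keeps one coordinate (maximum sparsity $1-1/p$ for $\lambda<1/\sqrt p$, zero deviation for any $\lambda<1$). The delicate case is the upper deviation bound $\sqrt{p-1}\,\lambda$ for $\lambda < 1/\sqrt p$: here I would set $p-1$ coordinates equal to $\lambda$, so that they are truncated (recall $H_\lambda$ zeros entries with $|z_i|\le\lambda$), and the last coordinate to $\sqrt{1-(p-1)\lambda^2}$. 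The construction is legitimate precisely because this surviving coordinate must actually exceed the threshold, and $\sqrt{1-(p-1)\lambda^2} > \lambda \iff p\lambda^2 < 1$, which is exactly the hypothesis $\lambda < 1/\sqrt p$. This single equivalence is the hinge that both validates the extremal example and explains why the two regimes split at $1/\sqrt p$; the open lower sparsity bound of the second regime is then sharp as an infimum since $k$ may be taken arbitrarily close to $1/\lambda^2$.
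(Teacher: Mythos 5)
Your proof is correct and follows essentially the same route as the paper: the one nontrivial bound, $1-\frac{1}{p\lambda^2} < s(x)$ for $\lambda \geq \frac{1}{\sqrt p}$, is proved by exactly the paper's counting argument (each kept entry contributes more than $\lambda^2$ to $\|z\|_2^2=1$, so $k < 1/\lambda^2$), and your achievability witnesses are the same special cases $z=e_1$ and $z=(1/\sqrt p,\dots,1/\sqrt p)^T$ that the paper invokes. The only addition is that you make explicit what the paper calls ``easy to obtain'': the deviation identity $\sin\theta(x,z)=\|\bar z\|_2$ via orthogonality of the kept/truncated split, and the extremal vector with $p-1$ entries equal to $\lambda$ that attains the $\sqrt{p-1}\,\lambda$ bound, whose validity hinges on precisely the condition $\lambda < 1/\sqrt p$.
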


Because when $\lambda<1/\sqrt{p}$, there is no sparsity guarantee,
$\lambda$ is usually set to be $1/\sqrt{p}$ in practice. Generally
it works well.

\begin{proposition}
For T-$\ell_1$, the bounds of $s(x)$ and lower bound of
$\sin(\theta(x,z))$ are the same as T-$\ell_0$. In addition, there
are relative deviation bounds
\begin{equation}
\|\bar{z}\|_2\leq\sin(\theta(x,z)) <
\sqrt{\|\bar{z}\|^2_2+\lambda^2\|x\|_0 }.
\end{equation}
\end{proposition}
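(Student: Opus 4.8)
The plan is to reduce everything to the same index bookkeeping used for T-$\ell_0$ and then isolate a single algebraic identity that controls the relative upper bound. Write $S=\{i:|z_i|>\lambda\}$ for the retained support and $\bar S$ for its complement, and set $a_i=|z_i|$, $k=|S|=\|x\|_0$, $P=\sum_{i\in S}a_i^2$, and $Q=\sum_{i\in S}a_i$. Since $z$ is a unit column of the orthonormal matrix $Z$, $\|z\|_2=1$, so $\|\bar z\|_2^2=1-P$. Throughout I assume $x\neq 0$, i.e. $S\neq\emptyset$; otherwise the angle is $\pi/2$ by convention and there is nothing to prove.

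First I would dispose of the sparsity claim. Because $S_\lambda(z_i)=\mathrm{sign}(z_i)(|z_i|-\lambda)_+$ vanishes exactly when $|z_i|\leq\lambda$, the soft- and hard-thresholding operators produce the same zero pattern, hence the same $\|x\|_0$. The sparsity bounds are therefore literally those already proved for T-$\ell_0$, and the lower bound $0$ of the deviation is attained in the same degenerate case $\bar z=0$: for a uniform $z$ with $z_i=1/\sqrt p$, soft thresholding shrinks every entry equally and so leaves the direction unchanged, giving $\sin\theta(x,z)=0$.

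Next I would compute the deviation directly. Because the surviving entries of $S_\lambda(z)$ keep the sign of $z$, one gets $x^Tz>0$, so $\cos\theta(x,z)=x^Tz=(P-\lambda Q)/\sqrt{P-2\lambda Q+k\lambda^2}$, where the denominator is $\|S_\lambda(z)\|_2$. Then $\sin^2\theta=1-\cos^2\theta$, and the two claimed bounds become statements about $\cos^2\theta$. The lower bound $\sin\theta\geq\|\bar z\|_2$ is equivalent to $\cos^2\theta\leq P$, i.e. to $(P-\lambda Q)^2\leq P(P-2\lambda Q+k\lambda^2)$, which simplifies to $Q^2\leq kP$ — exactly Cauchy--Schwarz applied to $(a_i)_{i\in S}$ against the all-ones vector, with equality precisely when the retained $|z_i|$ are all equal.

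The crux is the strict upper bound. Writing $D=P-2\lambda Q+k\lambda^2>0$, the claim $\sin^2\theta<\|\bar z\|_2^2+\lambda^2 k=1-P+\lambda^2 k$ is equivalent to $\cos^2\theta>P-\lambda^2 k$, i.e. $(P-\lambda Q)^2>(P-\lambda^2 k)D$. The hard part is to recognize that the difference of the two sides collapses to a perfect square: I would expand and verify the identity $(P-\lambda Q)^2-(P-\lambda^2 k)D=\lambda^2(Q-\lambda k)^2$. Strictness then follows because, for $\lambda>0$ and $S\neq\emptyset$, every retained $a_i$ exceeds $\lambda$, so $Q-\lambda k=\sum_{i\in S}(a_i-\lambda)>0$, making the square strictly positive. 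Once this factorization is in hand both bounds are immediate, and the only real risk is carrying the expansion of $(P-\lambda^2 k)D$ without a sign slip.
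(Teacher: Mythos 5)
Your proof is correct, and though it tracks exactly the same quantities as the paper's, the packaging differs in a way worth noting. The paper works with the vector decomposition $z=\hat z+y+\bar z$, where $\hat z=S_\lambda(z)$, $y$ is the ``clip'' part whose nonzero entries all have magnitude $\lambda$, and $\bar z$ is the truncated part; it sandwiches $\cos\theta$ using $0<\hat z^Ty\leq\|\hat z\|_2\|y\|_2$. Your scalar bookkeeping is the coordinate form of this: $D=\|\hat z\|_2^2$, $P=\|\tilde z\|_2^2$, and $\lambda(Q-\lambda k)=\hat z^Ty$, and your Cauchy--Schwarz step $Q^2\leq kP$ is precisely the paper's bound $\cos\theta\leq\|\tilde z\|_2$ giving the lower deviation bound. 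The genuine difference is at the strict upper bound. The paper gets there by asserting that the bound is ``approached when $\hat z$ becomes orthogonal to $y$,'' a limiting statement that, read literally, does not establish $\sin^2\theta<\|\bar z\|_2^2+\lambda^2\|x\|_0$: for a fixed $z$ one has $1-\|\hat z\|_2^2=\|\bar z\|_2^2+\lambda^2\|x\|_0+2\hat z^Ty$, which overshoots the target, so the chain $\sin^2\theta<1-\|\hat z\|_2^2$ alone is too weak. Your perfect-square identity $(P-\lambda Q)^2-(P-\lambda^2k)D=\lambda^2(Q-\lambda k)^2$ repairs this cleanly: it shows the slack in the claimed bound is exactly $(\hat z^Ty)^2/\|\hat z\|_2^2$, i.e.\ $\sin^2\theta=\|\bar z\|_2^2+\lambda^2\|x\|_0-(\hat z^Ty)^2/\|\hat z\|_2^2$, so strictness follows from $Q-\lambda k>0$ (equivalently $\hat z^Ty>0$), under the same provisos ($\lambda>0$, $x\neq 0$) that the paper's inequality $0<\hat z^Ty$ needs implicitly. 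Your treatment of the sparsity bounds (identical zero patterns of $S_\lambda$ and $H_\lambda$) and of the attainable zero lower bound (uniform $z$) matches the paper. In short: same ingredients, but your exact-identity route is tighter than the paper's limiting argument.
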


It is still an open question that whether T-$\ell_1$ has the same
upper bound of $\sin(\theta(x,z))$ as T-$\ell_0$. By the relative
lower bounds, we have

\begin{corollary}
The deviation due to soft thresholding is always larger than that of
hard thresholding, if the same $\lambda$ is applied.
\end{corollary}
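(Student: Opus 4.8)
The plan is to read the corollary straight off the two preceding propositions, the only point requiring care being that the ``truncated part'' $\bar{z}$ refers to the same object under both truncation schemes. First I would observe that for a fixed threshold $\lambda$, hard and soft thresholding annihilate exactly the same coordinates: in both cases $x_i=0$ precisely when $|z_i|\leq\lambda$. Consequently the vector $\bar{z}$ (the entries of $z$ that get zeroed out) is identical in the T-$\ell_0$ and T-$\ell_1$ settings, and so is its norm $\|\bar{z}\|_2$.

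With this identification in hand, I would simply chain the two deviation estimates already proved. The T-$\ell_0$ proposition gives the hard-thresholding deviation exactly, $\sin(\theta(x_H,z))=\|\bar{z}\|_2$, while the relative lower bound of the T-$\ell_1$ proposition gives $\sin(\theta(x_S,z))\geq\|\bar{z}\|_2$ for soft thresholding at the same $\lambda$. Combining yields $\sin(\theta(x_S,z))\geq\|\bar{z}\|_2=\sin(\theta(x_H,z))$, which is the claim.

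If a self-contained and more quantitative version were wanted, I would instead decompose $z=z_S+\bar{z}$ into its surviving and truncated coordinates, which are orthogonal since their supports are disjoint. Both $x_H$ and $x_S$ live in the coordinate subspace indexed by the surviving set $S$, so the component of $z$ orthogonal to that subspace is exactly $\bar{z}$. A short computation using $\|z\|_2=1$ then gives $\sin^2(\theta(x_H,z))=\|\bar{z}\|_2^2$ and $\sin^2(\theta(x_S,z))=\|\bar{z}\|_2^2+\|z_S\|_2^2\sin^2\phi$, where $\phi$ is the angle, taken inside $\mathbb{R}^{|S|}$, between $x_S$ and $z_S$. The nonnegative term $\|z_S\|_2^2\sin^2\phi$ is precisely the excess deviation of soft over hard thresholding, which makes the comparison transparent.

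The argument has no genuine obstacle; the only delicate point is the word ``larger''. The excess term vanishes when $\phi=0$, i.e. when soft thresholding leaves the direction of the surviving subvector unchanged, which happens exactly in the degenerate cases where a single entry survives, all surviving magnitudes are equal, or $\lambda=0$. Hence the inequality is weak in general and strict under the mild nondegeneracy that soft thresholding actually tilts the surviving part. I would therefore read the corollary as ``larger than or equal to'', in agreement with the non-strict lower bound supplied by the T-$\ell_1$ proposition.
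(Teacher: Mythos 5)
Your proposal is correct, and its first half coincides with the paper's own (one-line) proof: the paper reads the corollary directly off the exact deviation $\sin(\theta(x,z))=\|\bar{z}\|_2$ of Proposition 7 for hard thresholding and the relative lower bound $\|\bar{z}\|_2\leq\sin(\theta(x,z))$ of Proposition 8 for soft thresholding. Your explicit remark that hard and soft thresholding at the same $\lambda$ annihilate exactly the same support, so that $\bar{z}$ denotes the same vector in both propositions, is the glue the paper leaves implicit. Your second, self-contained argument goes beyond what the paper provides: the decomposition $z=z_S+\bar{z}$ with the identity $\sin^2(\theta(x_S,z))=\|\bar{z}\|_2^2+\|z_S\|_2^2\sin^2\phi$ (valid because $\|z\|_2=1$ and both normalized outputs are supported on $S$) replaces the paper's one-sided bounds by an exact formula, reproves the corollary, and pinpoints the equality cases: $\phi=0$ precisely when $S_\lambda$ rescales $z_S$ without rotating it, i.e.\ when all surviving magnitudes are equal (in particular a single survivor) or $\lambda=0$. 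This also settles the wording issue you raise: since equality genuinely occurs, the corollary's ``larger'' must be read as ``at least as large,'' in agreement with the non-strict inequality of Proposition 8.
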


This implies that results got by T-$\ell_1$ have potentially greater
sparsity and less explained variance than those of T-$\ell_0$.

\begin{proposition}\label{theo:sp}
For T-sp, $\lambda/p\leq s(z) < 1$, and
\begin{equation}
0\leq \sin(\theta(x,z))\leq \sqrt{\lambda/p}\;.
\end{equation}
\end{proposition}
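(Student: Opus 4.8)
The plan is to handle the two assertions of Proposition~\ref{theo:sp} separately: the sparsity bound follows directly from counting how many coordinates $P_\lambda$ annihilates, while the deviation bound reduces to controlling the energy carried by the $\lambda$ smallest entries of the unit vector $z=Z_i$.

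For the sparsity bound I would first note that, by the definition of $P_\lambda$ in Table~\ref{tab:notations}, the operator sets exactly $\lambda$ coordinates of $z$ to zero, so the normalized result $x=P_\lambda(z)/\|P_\lambda(z)\|_2$ satisfies $\|x\|_0\le p-\lambda$. Substituting into $s(x)=1-\|x\|_0/p$ immediately gives the lower bound $s(x)\ge\lambda/p$. For the strict upper bound, since $\lambda\le p-1$ at least one coordinate survives the truncation and $z\neq 0$, whence $\|x\|_0\ge 1$ and therefore $s(x)\le 1-1/p<1$.

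For the deviation bound the starting point is the decomposition already used for T-$\ell_0$: writing $z=\hat z+\bar z$, where $\hat z$ keeps the surviving coordinates and $\bar z$ holds the $\lambda$ truncated ones, the two vectors are orthogonal, $x=\hat z/\|\hat z\|_2$, and a short computation using $\|z\|_2=1$ (recall $Z=VR^T$ is orthonormal, so every column $z=Z_i$ is a unit vector) yields the identity $\sin(\theta(x,z))=\|\bar z\|_2$. It then suffices to bound $\|\bar z\|_2^2=\sum_{j\in\bar S}z_j^2$, the total energy of the $\lambda$ smallest squared entries of $z$.

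The crux, and essentially the only nontrivial step, is an elementary averaging inequality: the mean of the $\lambda$ smallest values among $\{z_1^2,\dots,z_p^2\}$ cannot exceed the mean of all $p$ of them. Sorting the squared entries in ascending order as $a_{(1)}\le\cdots\le a_{(p)}$, I would compare the block average $\frac{1}{\lambda}\sum_{j\le\lambda}a_{(j)}$ with the complementary average $\frac{1}{p-\lambda}\sum_{j>\lambda}a_{(j)}$; since every element of the first block is $\le a_{(\lambda)}\le a_{(\lambda+1)}\le$ every element of the second, the first block average is at most the overall average $\|z\|_2^2/p=1/p$. Multiplying by $\lambda$ gives $\|\bar z\|_2^2\le\lambda/p$, hence $\sin(\theta(x,z))\le\sqrt{\lambda/p}$, while the lower bound $0$ is immediate. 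I do not anticipate any genuine obstacle; the argument is routine, and the only point requiring care is the normalization $\|z\|_2=1$, without which the bound would acquire an extra factor $\|z\|_2$.
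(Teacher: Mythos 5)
Your proof is correct and takes essentially the same route as the paper: the paper's own ``proof'' of this proposition is a one-line deferral (``can be proved in a way similar to T-en''), and the T-en argument is precisely the sorted-blocks averaging idea you use --- the mean of the $\lambda$ smallest squared entries of a unit vector cannot exceed the overall mean $1/p$ --- there phrased as a contradiction, combined with the identity $\sin(\theta(x,z))=\|\bar{z}\|_2$. Your direct block-comparison of averages and your handling of the trivial sparsity bounds fill in exactly what the paper leaves implicit.
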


%\textcolor{red}
{Except the unusual case that $x$ has many zeros, $s(z)=\lambda/p$.}
The main advantage of T-sp lies in its direct control on sparsity.
If specific sparsity is wanted, it can be applied.

\begin{proposition}\label{theo:en}
For T-en, $0\leq \sin(\theta(x,z))\leq \sqrt{\lambda}$. In addition
\begin{equation}\label{equ:en}
\lfloor{\lambda p}\rfloor/p\leq s(x)\leq 1-1/p.
\end{equation}
If $\lambda<1/p$, there is no sparsity guarantee. When $p$ is
moderately large, $\lfloor{\lambda p}\rfloor/p\approx \lambda$.
\end{proposition}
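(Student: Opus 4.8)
The plan is to treat the deviation bound and the two sparsity bounds separately, reusing the geometric identity already established for T-$\ell_0$ and adding one averaging estimate for the sparsity lower bound.

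First, for the deviation, I would note that $z=Z_i$ is a unit vector, since the columns of $Z=VR^T$ are orthonormal, and that T-en simply zeros out a subset of coordinates and renormalizes. Writing $\bar z$ for the truncated part, the kept part $z-\bar z$ and $\bar z$ have disjoint supports and are therefore orthogonal, so $\|z-\bar z\|_2^2+\|\bar z\|_2^2=\|z\|_2^2=1$. Since $x$ points along $z-\bar z$, a direct computation gives $\cos\theta(x,z)=x^Tz=\|z-\bar z\|_2$, whence $\sin\theta(x,z)=\|\bar z\|_2$, exactly as in the T-$\ell_0$ case. The definition of $E_\lambda$ caps the truncated energy at $\|\bar z\|_2^2\le\lambda\|z\|_2^2=\lambda$, which yields $0\le\sin\theta(x,z)\le\sqrt{\lambda}$ immediately.

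Next, for sparsity, let $k$ be the number of truncated coordinates; then $\|x\|_0=p-k$ and $s(x)=k/p$, so both bounds reduce to bounding $k$. For the upper bound $s(x)\le 1-1/p$, I would argue $k\le p-1$: truncating all $p$ coordinates would require cumulative energy $\|z\|_2^2=1\le\lambda$, contradicting $\lambda<1$, so at least one coordinate survives. For the lower bound, I would sort the squared magnitudes ascending as $a_1^2\le\cdots\le a_p^2$ and use the averaging inequality that the smallest $m$ squares carry at most an $m/p$ fraction of the total energy, i.e. $\sum_{j=1}^m a_j^2\le (m/p)\|z\|_2^2=m/p$, because each of the smallest $m$ values lies below the mean. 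Consequently any integer $m\le\lambda p$ satisfies the energy constraint defining $E_\lambda$, so the maximal admissible count obeys $k\ge\lfloor\lambda p\rfloor$, giving $s(x)\ge\lfloor\lambda p\rfloor/p$.

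The remaining claims are corollaries of the lower bound: when $\lambda<1/p$ we have $\lfloor\lambda p\rfloor=0$, so the guarantee is vacuous, and for moderately large $p$ the floor is negligible, $\lfloor\lambda p\rfloor/p\approx\lambda$. I expect the only genuinely nontrivial step to be the sparsity lower bound, specifically the averaging argument that the smallest $m$ squared entries hold no more than an $m/p$ share of the unit energy; once the T-$\ell_0$ geometry is in hand, the deviation identity and the upper bound are essentially bookkeeping.
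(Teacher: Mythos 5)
Your outline is sound, and for the one genuinely nontrivial claim it follows essentially the same route as the paper: the paper's appendix proof of this proposition covers only the sparsity lower bound (the deviation identity and the upper bound are treated as immediate, just as you treat them), and that proof is precisely the contrapositive of your averaging lemma. However, your justification of that lemma is false as stated. You assert $\sum_{j=1}^m a_j^2\le (m/p)\|z\|_2^2$ ``because each of the smallest $m$ values lies below the mean.'' That premise fails in general: for $z$ with squared entries $(0,\tfrac12,\tfrac12)$ and $m=2$, the second-smallest squared entry $\tfrac12$ exceeds the mean $\tfrac13$. The inequality you want is nevertheless true, but it requires comparing group means rather than individual entries against the overall mean: writing $S_1=\sum_{j\le m}a_j^2$ and $S_2=\sum_{j>m}a_j^2$, every term of $S_1$ is at most every term of $S_2$, so $S_1/m\le a_m^2\le a_{m+1}^2\le S_2/(p-m)$; rearranging $(p-m)S_1\le mS_2$ gives $S_1\le(m/p)(S_1+S_2)=m/p$. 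This boundary comparison is exactly where the paper's contradiction argument does its work: assuming fewer than $k_0=\lfloor\lambda p\rfloor$ entries can be truncated, it deduces $\sum_{i=1}^{k_0}\hat z_i^2>k_0/p$, hence $\hat z_{k_0}^2$ lies strictly above $1/p$ while $\hat z_{k_0+1}^2$ lies strictly below $1/p$, contradicting the ascending order.

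With that one-line repair, your direct argument goes through and is arguably cleaner than the paper's contrapositive phrasing; the remaining steps are correct bookkeeping. The deviation identity $\sin\theta(x,z)=\|\bar z\|_2\le\sqrt{\lambda}$ follows from disjoint supports exactly as in the T-$\ell_0$ case, and the upper bound follows since truncating all coordinates would force $1=\|z\|_2^2\le\lambda<1$. One small point worth making explicit: your identity $\|x\|_0=p-k$ uses the fact that any zero entries of $z$ carry zero energy and hence are always counted among the $k$ truncated ones, so every kept entry is nonzero; for the lower bound alone the automatic inequality $\|x\|_0\le p-k$ already suffices.
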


Due to the discrete nature of operand, the actually truncated energy
can be less than $\lambda$. But in practice, especially when $p$ is
moderately large, the effect is negligible. So we usually have
$\sin(\theta(x,z))\approx \sqrt{\lambda}$. The main advantage of
T-en is that it has direct control on deviation. Recall that the
deviation has direct influence on the explained variance. Thus, if
it is desirable to gain specific explained variance, T-en is
preferable. Besides, if $p$ is moderately large, T-en also gives
nice control on sparsity.

\vspace{2ex}
\subsubsection{Explained Variance}

Finally, we derive bounds on the explained variance $EV(X)$. Two
results are provided. The first one is general and is applicable to
any basis $X$ not limited to sparse ones. The second one is tailored
to SPCArt.

\begin{theorem}\label{theo:dminev}
Let rank-$r$ SVD of $A\in\mathbb{R}^{n\times p}$ be $U\Sigma V^T$,
$\Sigma\in\mathbb{R}^{r\times r}$. Given $X\in\mathbb{R}^{p\times
r}$, assume SVD of $X^TV$ is $WDQ^T$, $D\in\mathbb{R}^{r\times r}$,
$d_{min}=\min_i D_{ii}$. Then
\begin{equation}\label{equ:dminEV}
d_{min}^2\cdot EV(V)\leq EV(X),
\end{equation}
and $EV(V)=\sum_i \Sigma^2_{ii}$.
\end{theorem}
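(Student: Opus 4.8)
The plan is to reduce $EV(X)=\mathrm{tr}(X^TA^TAX)$ to a trace expression in the singular values. First I would use the rank-$r$ SVD $A=U\Sigma V^T$ (with $U^TU=V^TV=I_r$) to write $A^TA=V\Sigma^2V^T$. Substituting gives $EV(X)=\mathrm{tr}(X^TV\Sigma^2V^TX)$, and setting $M:=V^TX$ (so $M^T=X^TV$) this becomes $EV(X)=\mathrm{tr}(M^T\Sigma^2M)=\mathrm{tr}(\Sigma^2MM^T)$ after a cyclic permutation. The same computation applied to $X=V$ immediately yields $EV(V)=\mathrm{tr}(V^TV\Sigma^2V^TV)=\mathrm{tr}(\Sigma^2)=\sum_i\Sigma_{ii}^2$, which is the last claim of the theorem.

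Next I would feed in the hypothesis that $X^TV=WDQ^T$ is the SVD, with $W,D,Q\in\mathbb{R}^{r\times r}$ and $W^TW=Q^TQ=I_r$. Transposing, $M=V^TX=QDW^T$, hence $MM^T=QDW^TWDQ^T=QD^2Q^T$. Because $Q$ is orthogonal and $D^2-d_{min}^2I\succeq0$ (every diagonal entry of $D^2$ is at least $d_{min}^2$), I obtain the positive-semidefinite inequality $MM^T=QD^2Q^T\succeq d_{min}^2QQ^T=d_{min}^2I$.

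Finally I would combine these. Since $\Sigma^2\succeq0$ and $MM^T-d_{min}^2I\succeq0$, the trace of their product is nonnegative, so $\mathrm{tr}(\Sigma^2MM^T)\ge d_{min}^2\,\mathrm{tr}(\Sigma^2)$. Rewriting the two sides gives exactly $EV(X)\ge d_{min}^2\,EV(V)$, as desired.

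The only nontrivial step is this last inequality; the rest is bookkeeping with the two SVDs, and the main thing to keep straight is the transpose relating the given SVD of $X^TV$ to $M=V^TX$. I would justify the inequality by the standard fact that $\mathrm{tr}(PS)\ge0$ whenever $P,S\succeq0$ (write $P=\sum_k p_kv_kv_k^T$ with $p_k\ge0$, so $\mathrm{tr}(PS)=\sum_k p_k\,v_k^TSv_k\ge0$), applied with $P=\Sigma^2$ and $S=MM^T-d_{min}^2I$. An alternative that avoids PSD reasoning, and which I expect to be cleanest to write out, is to expand directly: $\mathrm{tr}(\Sigma^2QD^2Q^T)=\sum_{i,j}\Sigma_{ii}^2Q_{ij}^2D_{jj}^2\ge d_{min}^2\sum_{i,j}\Sigma_{ii}^2Q_{ij}^2=d_{min}^2\sum_i\Sigma_{ii}^2$, using $\sum_jQ_{ij}^2=1$ from orthogonality of $Q$.
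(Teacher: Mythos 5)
Your argument has one genuine gap, right at the start: you write $A^TA = V\Sigma^2V^T$ from ``the rank-$r$ SVD $A=U\Sigma V^T$''. But the rank-$r$ SVD in this theorem is the \emph{truncated} SVD: $V\in\mathbb{R}^{p\times r}$ holds only the top $r$ right singular vectors, and $A$ itself generally has rank $\min(n,p)>r$ (in the paper $r$ is a small number of loadings). So $U\Sigma V^T\neq A$ and $A^TA\neq V\Sigma^2V^T$; the correct identity is $A^TA=V\Sigma^2V^T+V_2\Lambda_2V_2^T$, where $V_2$ carries the remaining eigenvectors of $A^TA$ and $\Lambda_2\succeq 0$. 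Consequently your claimed equality $EV(X)=\mathrm{tr}(\Sigma^2MM^T)$ is false in general; what is true is the inequality
\begin{equation*}
EV(X)=\mathrm{tr}(X^TV\Sigma^2V^TX)+\mathrm{tr}(X^TV_2\Lambda_2V_2^TX)\geq \mathrm{tr}(\Sigma^2MM^T),
\end{equation*}
since the discarded term is the trace of a positive semidefinite matrix. This is exactly how the paper's proof begins: it expands $A^TA$ in its full eigendecomposition and drops the nonnegative remainder. Your computation of $EV(V)$ happens to give the right answer only because $V^TV_2=0$ annihilates the extra term, so the conclusion $EV(V)=\sum_i\Sigma_{ii}^2$ stands, but as written it rests on the same false identity.

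Once that first equality is weakened to the inequality above, the rest of your proof is correct and is essentially the paper's argument: from $X^TV=WDQ^T$ you get $MM^T=QD^2Q^T$, and your expansion $\mathrm{tr}(\Sigma^2QD^2Q^T)=\sum_{i,j}\Sigma_{ii}^2Q_{ij}^2D_{jj}^2\geq d_{min}^2\sum_{i,j}\Sigma_{ii}^2Q_{ij}^2=d_{min}^2\sum_i\Sigma_{ii}^2$ is the same step the paper writes as $\mathrm{tr}(Q^T\Lambda QD^2)\geq d_{min}^2\,\mathrm{tr}(Q^T\Lambda Q)$ with $\Lambda=\Sigma^2$. So the repair is a single line, but it is a necessary one: as it stands, your proof only covers the case $\mathrm{rank}(A)\leq r$, which excludes the typical setting in which the theorem is applied.
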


The theorem can be interpreted as follows. If $X$ is a basis that
approximates rotated PCA loadings well, then $d_{min}$ will be close
to one, and so the variance explained by $X$ is close to that
explained by PCA. Note that variance explained by PCA loadings is
the largest value that is possible to be achieved by orthonormal
basis. Conversely, if $X$ deviates much from the rotated PCA
loadings, then $d_{min}$ tends to zero, so the variance explained by
$X$ is not guaranteed to be much. We see that the less the sparse
loadings deviates from rotated PCA loadings, the more variance they
explain.

When SPCArt converges, i.e.
$X_i=T_\lambda(Z_i)/\|T_\lambda(Z_i)\|_2$, $Z=VR^T$, and
$R=Polar(X^TV)$ hold simultaneously, we have another estimation. It
is mainly valid for T-en.

\begin{theorem}\label{theo:cosev}
Let $C=Z^TX$, i.e. $C_{ij}=\cos(\theta(Z_i,X_j))$, and let $\bar{C}$
be $C$ with diagonal elements removed. Assume
$\theta(Z_i,X_i)=\theta$ and $\sum^r_{j}C_{ij}^2\leq 1$, $\forall
i$, then
\begin{equation}
(\cos^2(\theta)-\sqrt{r-1}\sin(2\theta))\cdot EV(V)\leq EV(X).
\end{equation}
When $\theta$ is sufficiently small,
\begin{equation}\label{equ:cosEV}
(\cos^2(\theta)-O(\theta))\cdot EV(V)\leq EV(X).
\end{equation}
\end{theorem}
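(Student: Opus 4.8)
The plan is to bound $EV(X)=tr(X^TA^TAX)$ from below by discarding the part of the variance that is hard to control and keeping only the component living in the PCA eigen-subspace. First I would write $A^TA=\sum_k\sigma_k^2v_kv_k^T$ over all singular triples and observe that $A^TA\succeq V\Sigma^2V^T$, since the tail beyond rank $r$ is positive semidefinite; hence $EV(X)\ge tr(X^TV\Sigma^2V^TX)$. Next I would pass to the rotated basis: from $Z=VR^T$ with $R^TR=I$ we get $V=ZR$, so $V^TX=R^TZ^TX=R^TC$ with $C=Z^TX$. Substituting gives $EV(X)\ge tr(C^TR\Sigma^2R^TC)=tr(MCC^T)$, where $M:=R\Sigma^2R^T\succeq0$ and, crucially, $tr(M)=tr(\Sigma^2)=EV(V)$. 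This converts the statement into a purely algebraic trace inequality about the $r\times r$ matrices $M$ and $C$.

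The second step is to exploit the hypotheses $C_{ii}=\cos\theta$ and $\sum_jC_{ij}^2\le1$. Writing $C=\cos\theta\,I+\bar C$ and expanding,
\begin{equation}
tr(MCC^T)=\cos^2\theta\,tr(M)+2\cos\theta\,tr(M\bar C)+tr(M\bar C\bar C^T).
\end{equation}
Both $M$ and $\bar C\bar C^T$ are positive semidefinite, so $tr(M\bar C\bar C^T)\ge0$ may be dropped, and $\cos\theta\ge0$ since $0\le\theta\le\pi/2$. It then suffices to prove $|tr(M\bar C)|\le\sqrt{r-1}\,\sin\theta\,tr(M)$, because substituting this yields the coefficient $\cos^2\theta-2\sqrt{r-1}\sin\theta\cos\theta=\cos^2\theta-\sqrt{r-1}\sin(2\theta)$.

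The heart of the argument, and the step I expect to be most delicate, is the bound on the cross term. Using symmetry of $M$, $tr(M\bar C)=\sum_i\langle M_{i,\cdot},\bar C_{i,\cdot}\rangle$, where the $i$-th row of $\bar C$ has a zero in position $i$ and, by the hypothesis together with $C_{ii}=\cos\theta$, squared norm $\sum_{j\ne i}C_{ij}^2\le1-\cos^2\theta=\sin^2\theta$. A per-row Cauchy--Schwarz then gives $|tr(M\bar C)|\le\sin\theta\sum_i\big(\sum_{j\ne i}M_{ij}^2\big)^{1/2}$. Now I would invoke positive semidefiniteness of $M$ through the $2\times2$-minor inequality $M_{ij}^2\le M_{ii}M_{jj}$, so that $\sum_{j\ne i}M_{ij}^2\le M_{ii}(tr(M)-M_{ii})$; a second Cauchy--Schwarz over $i$, $\sum_i\sqrt{M_{ii}}\sqrt{tr(M)-M_{ii}}\le\big(\sum_iM_{ii}\big)^{1/2}\big(\sum_i(tr(M)-M_{ii})\big)^{1/2}=\sqrt{tr(M)}\cdot\sqrt{(r-1)tr(M)}$, produces exactly $\sum_i(\sum_{j\ne i}M_{ij}^2)^{1/2}\le\sqrt{r-1}\,tr(M)$. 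Pinning the constant to $\sqrt{r-1}$ rather than the looser $\sqrt{r}$ is precisely what this two-stage Cauchy--Schwarz (with the PSD elementwise bound feeding the second one) buys, and it is the main obstacle. Combining everything gives $(\cos^2\theta-\sqrt{r-1}\sin(2\theta))EV(V)\le EV(X)$; finally, treating $r$ as fixed and using $\sin(2\theta)=2\theta+o(\theta)$, for small $\theta$ this reads $(\cos^2\theta-O(\theta))EV(V)\le EV(X)$, as claimed.
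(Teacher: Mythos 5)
Your proof is correct, and it shares the paper's overall skeleton: both arguments discard the rank tail to get $EV(X)\geq tr(X^TV\Sigma^2V^TX)$, pass to the rotated frame so the bound becomes $tr(MCC^T)$ with $M=R\Sigma^2R^T$ and $tr(M)=EV(V)$, expand $C=\cos(\theta)I+\bar{C}$, and drop the nonnegative term $tr(M\bar{C}\bar{C}^T)$. Where you genuinely diverge is the decisive step of bounding the cross term with the constant $\sqrt{r-1}$. The paper estimates the minimum eigenvalue of the symmetric matrix $S=\bar{C}+\bar{C}^T$ via the Gershgorin circle theorem: each Gershgorin radius splits into a row sum of $\bar{C}$, controlled by the hypothesis $\sum_j C_{ij}^2\leq 1$, and a column sum, controlled by the orthonormality of $Z$ through $\|Z^TX_i\|_2\leq\|X_i\|_2=1$; each piece is at most $\sqrt{r-1}\sin(\theta)$ by the norm comparison $\|x\|_1\leq\sqrt{r-1}\|x\|_2$, giving $\lambda_{\min}(S)\geq -2\sqrt{r-1}\sin(\theta)$, and the conclusion follows from $tr(MS)\geq\lambda_{\min}(S)\,tr(M)$. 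You instead bound $|tr(M\bar{C})|$ directly: a per-row Cauchy--Schwarz against the rows of $\bar{C}$, then the PSD principal-minor inequality $M_{ij}^2\leq M_{ii}M_{jj}$, then a second Cauchy--Schwarz over $i$, which lands on exactly the same constant. The trade-off is instructive: your route exploits the positive semidefiniteness of $M$ entrywise but needs only the row hypothesis on $C$ (you never invoke the orthonormality of $Z$, so your version holds under slightly weaker assumptions), whereas the paper's route treats $M$ only through its trace and smallest eigenvalue but must control both the rows and the columns of $\bar{C}$. Both give the identical final bound $(\cos^2(\theta)-\sqrt{r-1}\sin(2\theta))EV(V)\leq EV(X)$, and the small-$\theta$ asymptotic statement follows in the same way in either case.
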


Since the sparse loadings are obtained by truncating small entries
of the rotated PCA loadings, and $\theta$ is the deviation angle
between these sparse loadings and the rotated PCA loadings, the
theorem implies, if the deviation is small then the variance
explained by the sparse loadings is close to that of PCA, as
$\cos^2(\theta)\approx 1$. For example, if the truncated energy
$\|\bar{z}\|^2_2=\sin^2 (\theta)$ is about 0.05, then 95\% EV of PCA
loadings is guaranteed.

The assumptions $\theta(Z_i,X_i)=\theta$ and $\sum^r_{j}C_{ij}^2\leq
1$, $\forall i$, are roughly satisfied by T-en using small
$\lambda$. Uniform deviation $\theta(Z_i,X_i)=\theta$, $\forall i$,
can be achieved by T-en as indicated by Proposition~\ref{theo:en}.
$\sum^r_{j}C_{ij}^2\leq 1$ means the sum of projected length is less
than 1, when $Z_i$ is projected onto each $X_j$. It must be
satisfied if $X$ is exactly orthogonal, whereas it is likely to be
satisfied if $X$ is nearly orthogonal (note $Z_i$ may not lie in the
subspace spanned by $X$), which can be achieved by setting small
$\lambda$ according to Proposition~\ref{theo:nonortho}. In this
case, about $(1-\lambda)EV(V)$ is guaranteed.

In practice, we prefer CPEV \cite{shen2008sparse} to EV. CPEV
measures the variance explained by subspace rather than basis. Since
it is also the projected length of $A$ onto the subspace spanned by
$X$, the higher CPEV is, the better $X$ represents the data. If $X$
is not an orthogonal basis, EV may overestimates or underestimates
the variance. However, if $X$ is nearly orthogonal, the difference
is small, and it is nearly proportional to CPEV.

\section{A Unified View to Some Prior Work}\label{sec:unified}
A series of methods: PCA \cite{jolliffe2002principal}, SCoTLASS
\cite{jolliffe2003modified}, SPCA \cite{zou2006sparse}, GPower
\cite{journee2010generalized}, rSVD \cite{shen2008sparse}, TPower
\cite{yuan2013truncated}, SPC \cite{witten2009penalized}, and
SPCArt, though proposed independently and formulated in various
forms, can be derived from the common source of Theorem
\ref{theo:Eckart-Young}, the Eckart-Young Theorem. Most of them can
be seen as the problems of matrix approximation
(\ref{equ:Eckart-Young}), with different sparsity penalties. Most of
them have two matrix variables, and the solutions of them usually
can be obtained by an alternating scheme: fix one and solve the
other. Similar to SPCArt, the two subproblems are a sparsity
penalized/constrained regression problem and a Procrustes problem.

\textbf{PCA} \cite{jolliffe2002principal}. Since $Y^*=AX^*$,
substituting $Y=AX$ into (\ref{equ:Eckart-Young}) and optimizing
$X$, the problem is equivalent to
\begin{equation}\label{equ:PCA}
\max_{X} tr(X^TA^TAX),\;s.t.\,X^TX=I.
\end{equation}
The solution is provided by Ky Fan theorem
\cite{fan1961generalization}: $X^*=V_{1:r}R$, $\forall R^TR=I$. If
$A$ has been centered to have mean zero, the special solution
$X^*=V_{1:r}$ are exactly the $r$ loadings obtained by PCA.

\textbf{SCoTLASS} \cite{jolliffe2003modified}. Constraining $X$ to
be sparse in (\ref{equ:PCA}), we get SCotLASS
\begin{equation}\label{equ:SCoTLASS}
\max_{X} tr(X^TA^TAX),\,s.t.\,X^TX=I,\,\forall i,\, \|X_i\|_1\leq
\lambda.
\end{equation}
However, the problem is not easy to solve.

\textbf{SPCA} \cite{zou2006sparse}. If we substitute $Y=AX$ into
(\ref{equ:Eckart-Young}) and separate the two $X$'s into two
independent variables $X$ and $Z$ (so as to solve the problem via
alternating), and then impose some penalties on $Z$, we get SPCA
\begin{equation}\label{equ:SPCA}
\begin{split}
&\min_{Z,\,X}\;
\|A-AZX^T\|^2_F+\lambda\|Z\|^2_F+\sum_i\lambda_{1i}\|Z_i\|_1,\,\\
&s.t.\,X^TX=I,
\end{split}
\end{equation}
where $Z$ is treated as target sparse loadings and $\lambda$'s are
weights. When $X$ is fixed, the problem is equivalent to $r$
elastic-net problems: $\min_{Z_i}
\|AX_i-AZ_i\|^2_F+\lambda\|Z_i\|^2_2+\lambda_{1i}\|Z_i\|_1$. When
$Z$ is fixed, it is a Procrustes problem: $\min_{X}
\|A-AZX^T\|^2_F,\;s.t.\,X^TX=I$, and $X^*=Polar(A^TAZ)$.

\textbf{GPower} \cite{journee2010generalized}. Except some
artificial factors, the original GPower solves the following
$\ell_0$ and $\ell_1$ versions of objectives:
\begin{equation}\label{equ:ori-GPower-l0}
\begin{split}
&\max_{Y,W}
\sum_i(Y_i^TAW_i)^2-\lambda_{i}\|W_i\|_0,s.t.Y^TY=I,\forall i,
\|W_i\|_2=1,
\end{split}
\end{equation}
\begin{equation}\label{equ:ori-GPower-l1}
\begin{split}
&\max_{Y,W} \sum_i
Y_i^TAW_i-\lambda_{i}\|W_i\|_1,s.t.\,Y^TY=I,\,\forall i,\,
\|W_i\|_2=1.
\end{split}
\end{equation}
They can be seen as derived from the following more fundamental ones
(details are included in Appendix~\ref{sec:app gpower}).
\begin{equation}\label{equ:GPower-l0}
\min_{Y,X} \|A-YX^T\|^2_F+\sum_i\lambda_{i}\|X_i\|_0,\;s.t.\,Y^TY=I,
\end{equation}
\begin{equation}\label{equ:GPower-l1}
\min_{Y,X}
\frac{1}{2}\|A-YX^T\|^2_F+\sum_i\lambda_{i}\|X_i\|_1,\;s.t.\,Y^TY=I.
\end{equation}

These two objectives can be seen as derived from
(\ref{equ:Eckart-Young}): a mirror version of
Theorem~\ref{theo:Eckart-Young} exists $\min_{Y,X} \|A-YX^T\|^2_F$,
$s.t.$ $Y^TY=I$, where $A\in \mathbb{R}^{n\times p}$ is still seen
as a data matrix containing $n$ samples of dimension $p$. The
solution is $X^*=V_{1:r}\Sigma_{1:r}R$ and
$Y^*=Polar(AX^*)=U_{1:r}R$. Adding sparsity penalties to $X$, we get
(\ref{equ:GPower-l0}) and (\ref{equ:GPower-l1}).

Following the alternating optimization scheme. When $X$ is fixed, in
both cases $Y^*=Polar(AX)$. When $Y$ is fixed, the $\ell_0$ case
becomes $\min_{X} \|A^TY-X\|^2_F+\sum_i\lambda_{i}\|X_i\|_0$. Let
$Z=A^TY$, then $X^*_i=H_{\sqrt {\lambda_i}}(Z_i)$; the $\ell_1$ case
becomes $\min_{X} 1/2\|A^TY-X\|^2_F+\sum_i\lambda_{i}\|X_i\|_1$,
$X^*_i=S_\lambda(Z_i)$. The $i$th loading is obtained by normalizing
$X_i$ to unit length.

{The iterative steps combined together produce essentially the same
solution processes to the original ones in
\cite{journee2010generalized}. But, the matrix approximation
formulation makes the relation of GPower to SPCArt and others
apparent. %\textcolor{red}
{The three methods rSVD, TPower, and SPC below can be seen as
special cases of GPower.}

\textbf{rSVD} \cite{shen2008sparse}. rSVD can be seen as a special
case of GPower, i.e. the single component case $r=1$. Here
$Polar(\cdot)$ reduces to unit length normalization. More loadings
can be got via deflation \cite{mackey2009deflation, shen2008sparse},
e.g. update $A\leftarrow A(I-x^*x^{*T})$ and run the procedure
again. Now, since $Ax^*=0$, the subsequent loadings obtained are
nearly orthogonal to $x^*$.

If the penalties in rSVD are replaced with constraints, we obtain
TPower and SPC.

\textbf{TPower} \cite{yuan2013truncated}. The $\ell_0$ case is
\begin{equation}\label{equ:TPower}
\min_{y\in\mathbb{R}^n,x\in\mathbb{R}^p}
\|A-yx^T\|^2_F,\;s.t.\,\|x\|_0\leq \lambda,\,\|y\|_2=1.
\end{equation}
There are closed form solutions $y^*=Ax/\|Ax\|_2$,
$x^*=P_{p-\lambda}(A^Ty)$. $P_\lambda(\cdot)$ sets the smallest
$\lambda$ entries to zero.\footnote{\cite{shen2008sparse} did
implement this version for rSVD, but using as a heuristic trick.} By
iteration, $x^{(t+1)}\propto P_{p-\lambda}(A^TAx^{(t)})$, which
indicates equivalence to the original TPower.

\textbf{SPC} \cite{witten2009penalized}. The $\ell_1$ case is
$\min_{y,d,x}$ $ \|A-ydx^T\|^2_F$, $s.t.$ $\|x\|_1\leq \lambda$,
$\|y\|_2=1$, $\|x\|_2=1$, $d\in\mathbb{R}$. $d$ serves as the length
of $x$ in (\ref{equ:TPower}). If the other variables are fixed,
$d^*=y^TAx$. If $d$ is fixed, the problem is:
$\max_{y,x}\;tr(y^TAx)$, $s.t.$ $\|x\|_1\leq \lambda$, $\|y\|_2=1$,
$\|x\|_2=1$. A small modification leads to SPC:
\begin{equation*}\label{equ:SPC}
\max_{y,x}\;tr(y^TAx),\;s.t.\,\|x\|_1\leq \lambda,\,\|y\|_2\leq
1,\,\|x\|_2\leq 1,
\end{equation*}
which is biconvex. $y^*=Ax/\|Ax\|_2$. However, there is no analytic
solution for $x$; it is solved by linear searching.

\section{Relation of GPower to SPCArt and an Extension}\label{sec:relation
to GPower}
\subsection{Relation of GPower to SPCArt} Note that
(\ref{equ:GPower-l0}) and (\ref{equ:GPower-l1}) are of similar forms
to (\ref{equ:SPCArtl0}) and (\ref{equ:SPCArtl1}) respectively. There
are two important points of differences. First, SPCArt deals with
orthonormal PCA loadings rather than original data. Second, SPCArt
takes rotation matrix rather than merely orthonormal matrix as
variable. These differences are the key points for the success of
SPCArt.

Compared with SPCArt, GPower has some drawbacks. GPower can work on
both the deflation mode ($r=1$, i.e. rSVD) and the block mode
($r>1$). In the block mode, there is no mechanism to ensure the
orthogonality of the loadings. Here $Z=A^TY$ is not orthogonal, so
after thresholding, {$X$ also does not tend to be orthogonal}.
Besides, it is not easy to determine the weights, since lengths of
$Z_i$'s usually vary in great range. E.g., if we initialize
$Y=U_{1:r}$, then $Z=A^TY=V_{1:r}\Sigma_{1:r}$, which are scaled PCA
loadings whose lengths usually decay exponentially. Thus, if we
simply set the thresholds $\lambda_i$'s uniformly, it is easy to
lead to unbalanced sparsity among loadings, in which leading
loadings may be highly denser. This deviates from the goal of sparse
PCA. {For the deflation mode, though it produces nearly orthogonal
loadings, the greedy scheme makes its solution not optimal. And
there still exists a problem of how to set the weights
appropriately.\footnote{Even if $y$ is initialized with the
maximum-length column of $A$ as \cite{journee2010generalized} does,
it is likely to align with $U_1$.} Besides, for both modes,
performance analysis may be difficult to obtain}.

\subsection{Extending rSVD and GPower to
rSVD-GP}\label{sec:rSVD-GP}

A major drawback of rSVD and GPower is that they cannot use uniform
thresholds when applying thresholding $x=T_\lambda(z)$. The problem
does not exist in SPCArt since the inputs are of unit length. But,
we can extend the similar idea to GPower and rSVD: let
$x=\|z\|_2\cdot T_\lambda(z/\|z\|_2)$, which is equivalent to
truncating $z$ according to its length, or using adaptive thresholds
$x=T_{\lambda\|z\|_2}(z)$. The other truncation types T-en and T-sp
can be introduced into GPower too. T-sp is insensitive to length, so
there is no trouble in parameter setting; and the deflation version
happens to be TPower.

The deflation version of the improved algorithm rSVD-GP is shown in
Algorithm~\ref{alg:rSVD-GP}, and the block version rSVD-GPB is shown
in Algorithm~\ref{alg:rSVD-GPB}. rSVD-GPB follows the optimization
described in Section~\ref{sec:unified}. For rSVD-GP, since
$Polar(\cdot)$ reduces to normalization of vector, and the extended
truncation is insensitive to the length of input, we can combine the
$Polar$ step with the $Z=A^TY$ step and ignore the length during the
iterations. Besides, it is more efficient to work with the
covariance matrix, if $n>p$.

% algorithm rSVD-GP ------------------------
\begin{algorithm}[h]
   \caption{rSVD-GP (deflation version)}
   \label{alg:rSVD-GP}
\begin{algorithmic}[1]
   \STATE {\bfseries Input:} data matrix $A\in\mathbb{R}^{n\times p}$ (or covariance matrix
$C\in\mathbb{R}^{p\times p}$), number of loadings $r$, truncation
type $T$, parameter $\lambda$.
   \STATE {\bfseries Output:} $r$ sparse loading vectors $x_i\in\mathbb{R}^{p}$.
   \FOR{$i=1$ {\bfseries to} $r$}
   \STATE Initialize $x_i$: $j=\arg\max_k \|A_k\|_2$ (or $\arg\max_k C_{kk}$), set $x_{ij}=1$, $x_{ik}=0$, $\forall k\neq j$.

   \REPEAT
   \STATE $z=A^TAx_i$ (or $z=Cx_i$).
   \STATE Truncation: $x_i=T_\lambda(z/\|z\|_2)$.
   \UNTIL{convergence}
   \STATE Normalization: $x_i=x_i/\|x_i\|_2$.
   \STATE Deflation: $A=A(I-x_ix^T_i)$ (or $C=(I-x_ix^T_i)C(I-x_ix^T_i)$).
   \ENDFOR
\end{algorithmic}
\end{algorithm}
%----------------------------------------

% algorithm rSVD-GPB ------------------------
\begin{algorithm}[h]
   \caption{rSVD-GPB (block version)}
   \label{alg:rSVD-GPB}
\begin{algorithmic}[1]
   \STATE {\bfseries Input:} data matrix $A\in\mathbb{R}^{n\times p}$, number of loadings $r$, truncation type $T$,
parameter $\lambda$.
   \STATE {\bfseries Output:} sparse loadings $X=[X_1,\dots,X_r]\in\mathbb{R}^{p\times r}$.
   \STATE PCA: compute rank-$r$ SVD of $A$: $Y\Sigma V^T$.
   \REPEAT
   \STATE $Z=A^TY$.
   \STATE Truncation: $\forall i$, $X_i=\|Z_i\|_2\cdot T_\lambda(Z_i/\|Z_i\|_2)$.
   \STATE Update $Y$: thin SVD of $AX$: $WDQ^T$, $Y=WQ^T$.
   \UNTIL{convergence}
   \STATE Normalize $X$: $\forall i$, $X_i=X_i/\|X_i\|_2$.
\end{algorithmic}
\end{algorithm}
%----------------------------------------

\section{Experiments} \label{sec:experiment}
The data sets used include: (1) a synthetic data with some
underlying sparse loadings \cite{zou2006sparse}; (2) the classical
Pitprops data \cite{jeffers1967two}; (3) a natural image data with
moderate dimension and relatively large sample size, on which
comprehensive evaluations are conducted; (4) a gene data with high
dimension and small sample size \cite{golub1999molecular}; (5) a set
of random data with increasing dimensions for the purpose of speed
test.

We compare our methods with five methods: SPCA \cite{zou2006sparse},
PathSPCA \cite{dAspremont2008optimal}, ALSPCA
\cite{lu2009augmented}, GPower \cite{journee2010generalized}, and
TPower \cite{yuan2013truncated}. For SPCA, we use toolbox
\cite{sjstrand2005matlab}, which implements $\ell_0$ and $\ell_1$
constraint versions. We use GPowerB to denote the block version of
GPower, as rSVD-GPB. We use SPCArt(T-$\ell_0$) to denote SPCArt
using T-$\ell_0$; the other methods use the similar abbreviations.
Note that, rSVD-GP(T-sp) is equivalent to TPower
\cite{yuan2013truncated}. Except our SPCArt and rSVD-GP(B), the
codes of the others are downloaded from the authors' websites.

There are mainly five criteria for the evaluation. (1) SP: mean of
sparsity of loadings. (2) STD: standard deviation of sparsity of
loadings. (3) CPEV: cumulative percentage of explained variance
(that of PCA loadings is CPEV(V)). (4) NOR: nonorthogonality of
loadings, $1/(r(r-1))\sum_{i\neq j} |\cos \theta (X_i,X_j)|$ where
$r$ is the number of loadings. (5) Time cost, including the
initialization. Sometimes we may use the worst sparsity among
loadings, $min_i (1-\|X_i\|_0/p)$, instead of STD, when it is more
appropriate to show the imbalance of sparsity.

All methods involved in the comparison have only one parameter
$\lambda$ that induces sparsity. For those methods that have direct
control on sparsity, we view them as belonging to T-sp and let
$\lambda$ denote the number of zeros of a vector. GPowerB is
initialized with PCA, and its parameters are set as $\mu_j=1$,
$\forall j$ and $\lambda$'s are uniform for all loadings.\footnote
{The original random initialization for r-1 vectors
\cite{journee2010generalized} may fall out of data subspace and
result in zero solution. When using PCA as initialization, distinct
$\mu_j$ setting in effect artificially alters data variance.} For
ALSPCA, since we do not consider correlation among principal
components, we set $\Delta_{ij}=+\infty$, $\epsilon_I=+\infty$,
$\epsilon_E=0.03$, and $\epsilon_O=0.1$. In SPCArt, for T-$\ell_0$
and T-$\ell_1$ we set $\lambda=1/\sqrt{p}$ by default, since it is
the minimal threshold to ensure sparsity and the maximal threshold
to avoid truncating to zero vector. The termination conditions of
SPCArt, SPCA are the relative change of loadings
$\|X^{(t)}-X^{(t-1)}\|_F/\sqrt r<0.01$ or iterations exceed $200$.
{rSVD-GP(B) uses similar setting.} All codes are implemented using
MATLAB, run on a computer with 2.93GHz duo core CPU and 2GB memory.

\begin{table}[t]
\caption{Recovering of sparse loadings on a synthetic data. $r=2$.
CPEV(V) = 0.9973. Loading pattern 5-10; 1-4,9-10 means the nonzero
support of the first loading vector is 5 to 10, and the second is 1
to 4 and 9 to 10.} \label{synthetic data} \vskip 0.15in
\begin{centering}
\begin{small} %
\begin{tabular}{|c|c||c|c|c|}
\hline \multicolumn{2}{|c||}{{ algorithm}} & $\lambda$ &
\tabincell{c}{loading\\ pattern}  & CPEV\\\hline\hline
 \multirow{2}{*}{SPCA}
& T-sp  & 4  & 5-10; 1-4  & 0.9848\\\cline{2-5}
 & $\ell_1$  & 2.2  & 1-4,9-10; 5-8  & 0.8286\\\hline

PathSPCA & T-sp & 4  & 5-10; 1-4,9-10 &0.9960\\\hline

\multicolumn{2}{|c||}{ALSPCA } & 0.7  & 5-10; 1-4 & 0.9849\\\hline

 \multirow{4}{*}{rSVD-GP} & T-$\ell_0$  & $1/\sqrt{p}$ & 5-10;
1-4  & 0.9849\\\cline{2-5}
 & T-$\ell_1$  & $1/\sqrt{p}$  & 5-10; 1-4  & 0.9808\\\cline{2-5}
 & T-sp  & 4  & 5-10; 1-4,9-10  & 0.9960\\\cline{2-5}
 & T-en  & 0.1  & 5-10; 1-4  & 0.9849\\\hline

\multirow{4}{*}{SPCArt} & T-$\ell_0$  & $1/\sqrt{p}$  & 5-10; 1-4 &
0.9848\\\cline{2-5}
 & T-$\ell_1$  & $1/\sqrt{p}$  & 5-10; 1-4  & 0.9728\\\cline{2-5}
 & T-sp  & 4  & 5-10; 1-4,9-10  & 0.9968\\\cline{2-5}
 & T-en  & 0.1  & 5-10; 1-4  & 0.9848\\
\hline
\end{tabular}
\end{small}
\par\end{centering}
\vskip -0.1in
\end{table}

\begin{table}[h]
\caption{A comparison of algorithms on the Pitprops data. $r=6$,
CPEV(V) = 0.8700. Loading patterns here describe the cardinality of
each loading vector.}\label{pitprops} \vskip 0.1in
\begin{centering}
\begin{small}
\begin{tabular}{|c|c||c|c|c|c|c|c|}
\hline \multicolumn{2}{|c||}{{ algorithm}} & $\lambda$ & NZ &
\tabincell{c}{loading\\ patterns}  & STD  & NOR & CPEV\\\hline\hline

\multicolumn{2}{|c||}{ALSPCA } & 0.65 & 17 & 722213 & 0.1644 &
0.0008 & 0.8011\\\hline\hline

\multirow{5}{*}{T-$\ell_0$}

& GPower & 0.1 & 19 & 712162 & 0.2030 & 0.0259 & 0.8111\\\cline{2-8}

& rSVD-GP & 0.27 & 17 & 612422 & 0.1411 & 0.0209 &
\tb{0.8117}\\\cline{2-8}

& GPowerB & 0.115 & 17 & 724112 & 0.1782 & 0.0186  &
0.8087\\\cline{2-8}

& rSVD-GPB & 0.3 & 18 & 534132 & 0.1088 & 0.0222 &
0.7744\\\cline{2-8}

& SPCArt & $1/\sqrt{p}$ & 18 & 424332 & \tb{0.0688} & \tb{0.0181} &
0.8013\\\cline{2-8}\hline\hline

\multirow{5}{*}{T-sp}

& SPCA & 10  & 18 & 333333 & 0 & \tb{0.0095} & 0.7727\\\cline{2-8}

& PathSPCA & 10 & 18 & 333333 & 0 & 0.0484 &
\tb{0.7840}\\\cline{2-8}

& rSVD-GP & 10 & 18 & 333333  & 0 & 0.0455 & 0.7819\\\cline{2-8}

& rSVD-GPB & 10 & 18 & 333333  & 0 & 0.0525  & 0.7610\\\cline{2-8}

& SPCArt & 10 & 18 & 333333  & 0 & 0.0428  &
0.7514\\\cline{2-8}\hline
\end{tabular}
\end{small}
\par\end{centering}

\vskip -0.1in
\end{table}

\begin{figure*}[h]
\center{ \subfigure[Relative change of
$X$]{\label{fig:SPCArt-l0:dx}\includegraphics[width=4cm]{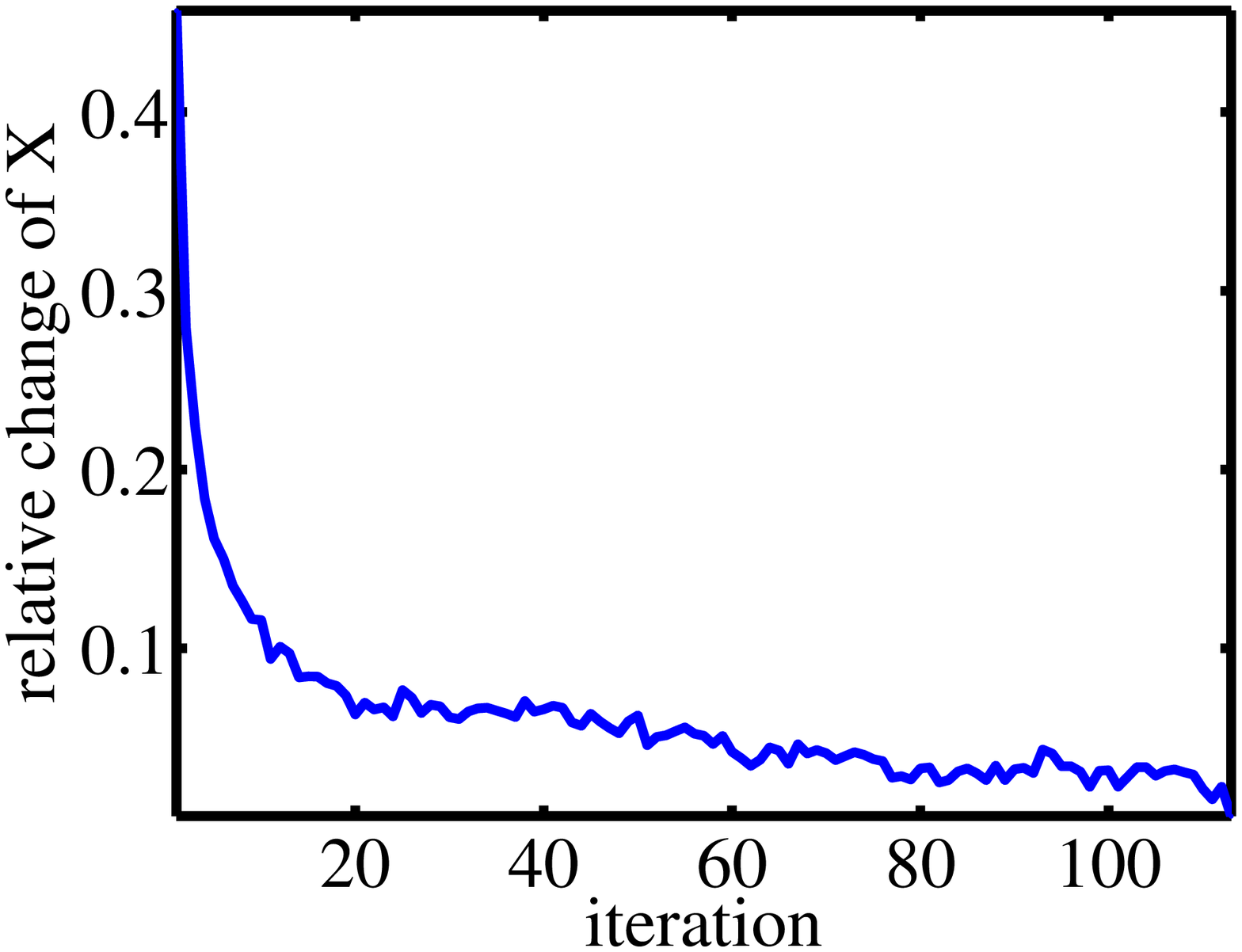}}
\subfigure[Truncated
energy]{\label{fig:SPCArt-l0:en}\includegraphics[width=4cm]{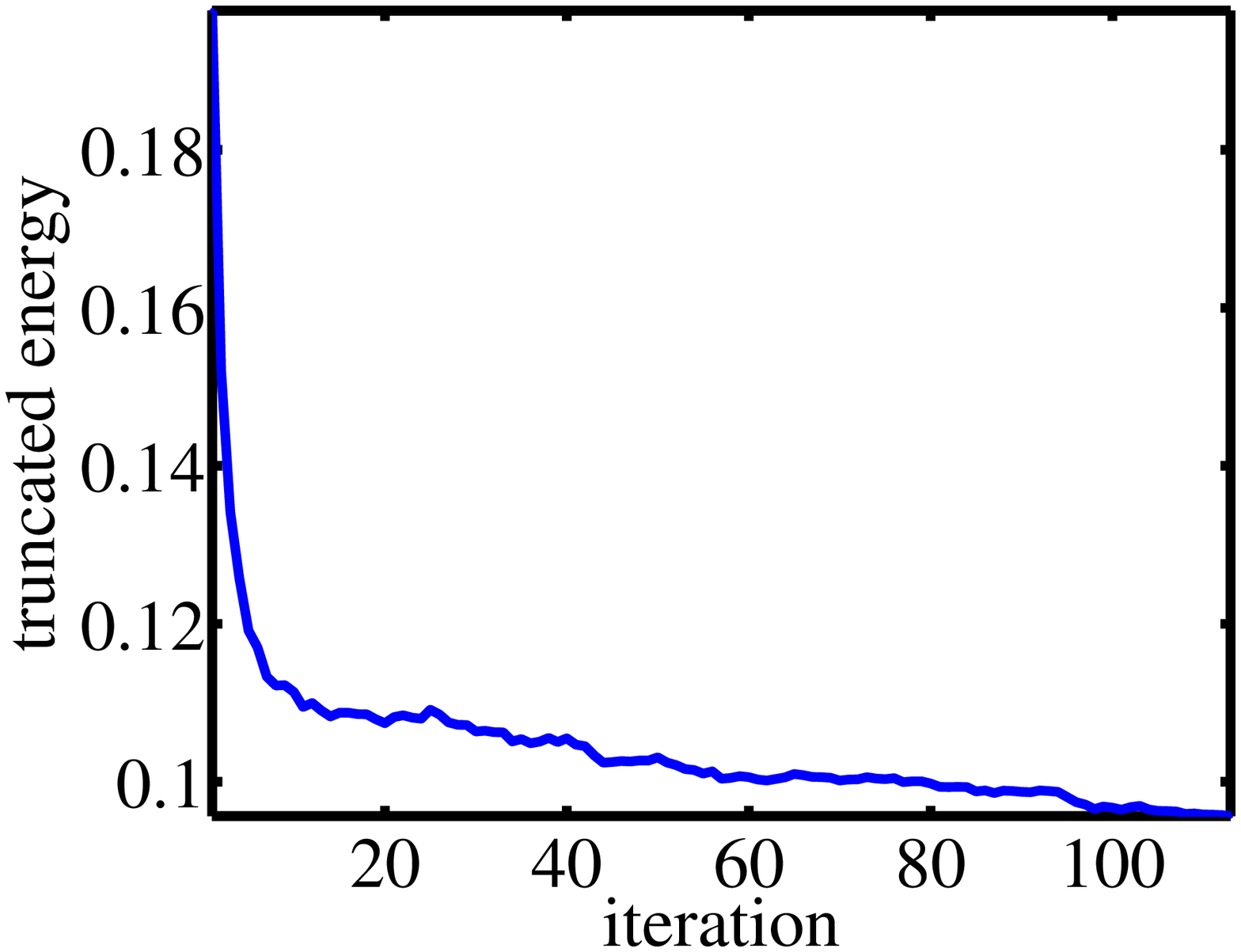}}\\
\subfigure[SP]{\label{fig:SPCArt-l0:sp}\includegraphics[width=4cm]{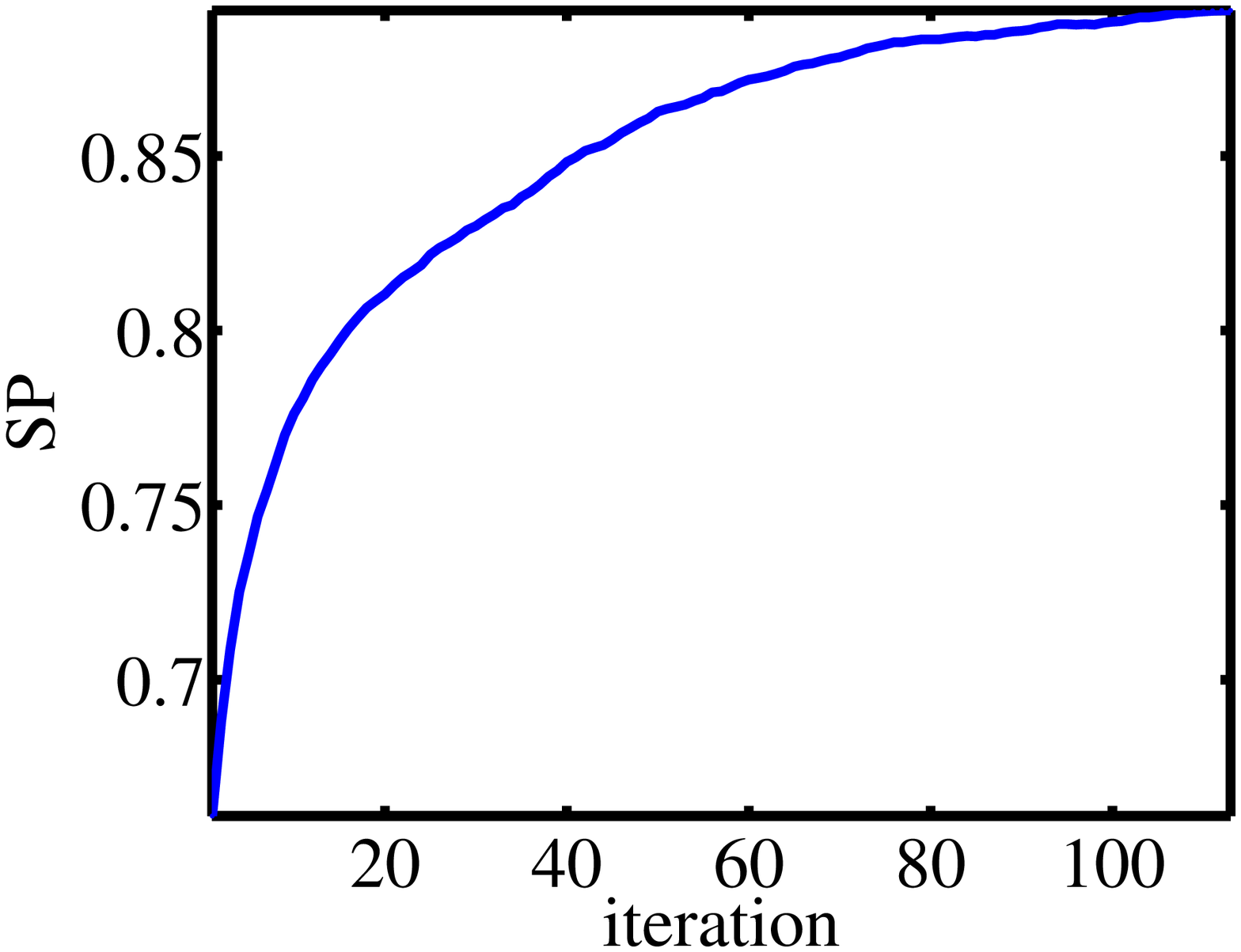}}
\subfigure[CPEV]{\label{fig:SPCArt-l0:va}\includegraphics[width=4cm]{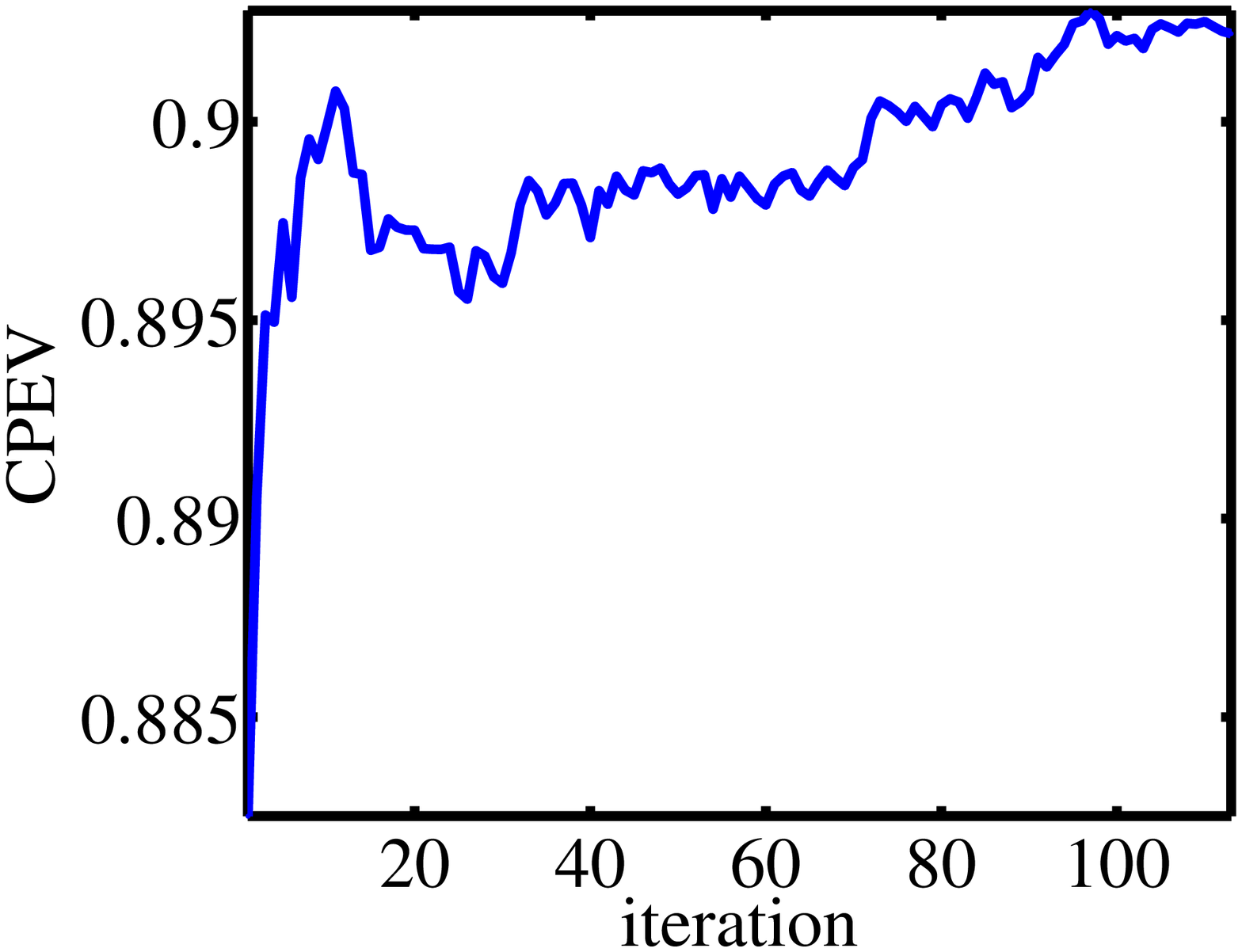}}
\subfigure[NOR]{\label{fig:SPCArt-l0:or}\includegraphics[width=4cm]{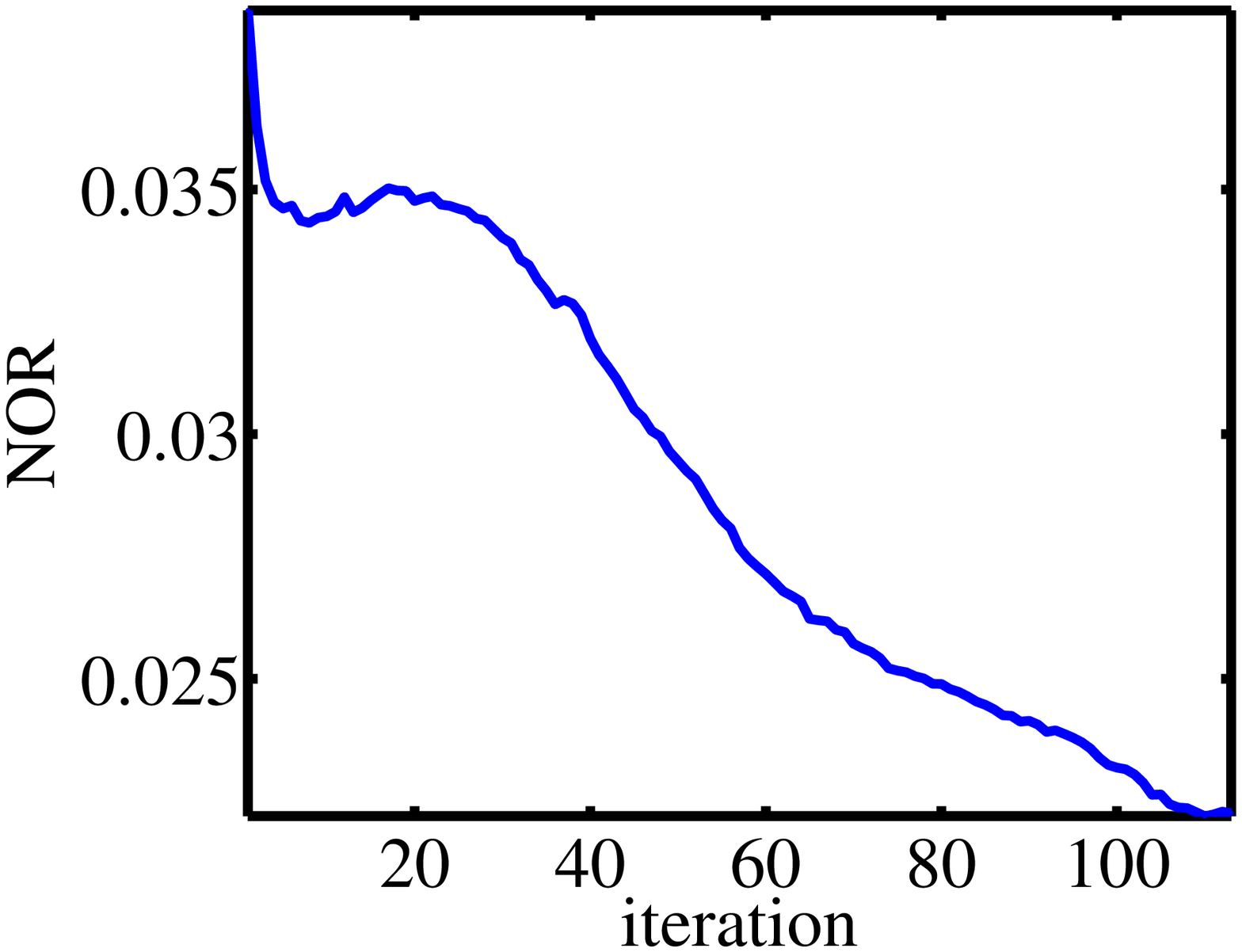}}
} \caption{Convergence of SPCArt(T-$\ell_0$) on image data. The
convergence is relatively stable, and the criteria improve along
with the iteration. }\label{fig:SPCArt-l0}
\end{figure*}

\begin{figure*}[h]
\centering{
\subfigure[SP]{\label{fig:SPCArt-en:sp}\includegraphics[width=5cm]{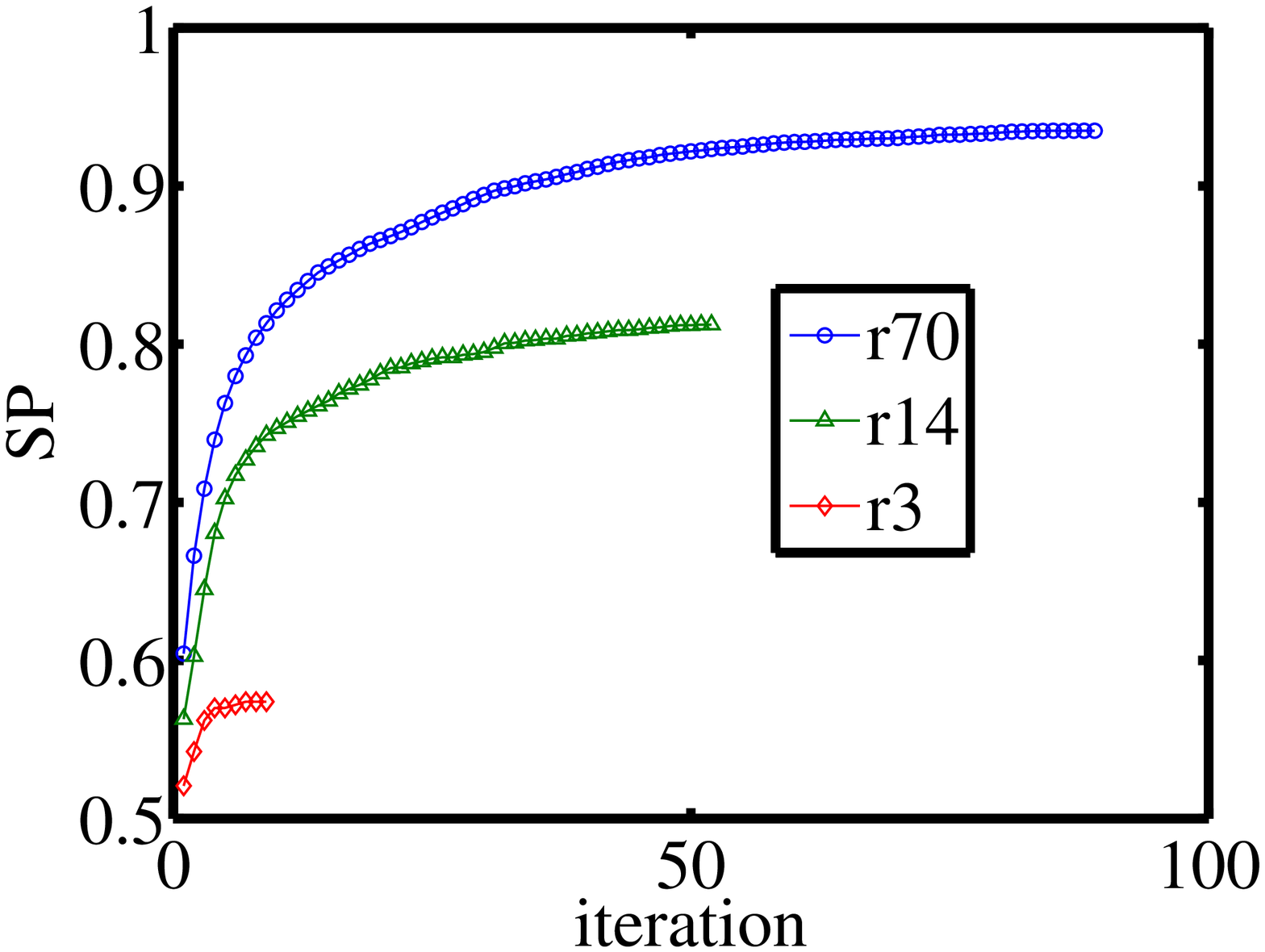}}
\subfigure[NOR]{\label{fig:SPCArt-en:or}\includegraphics[width=5cm]{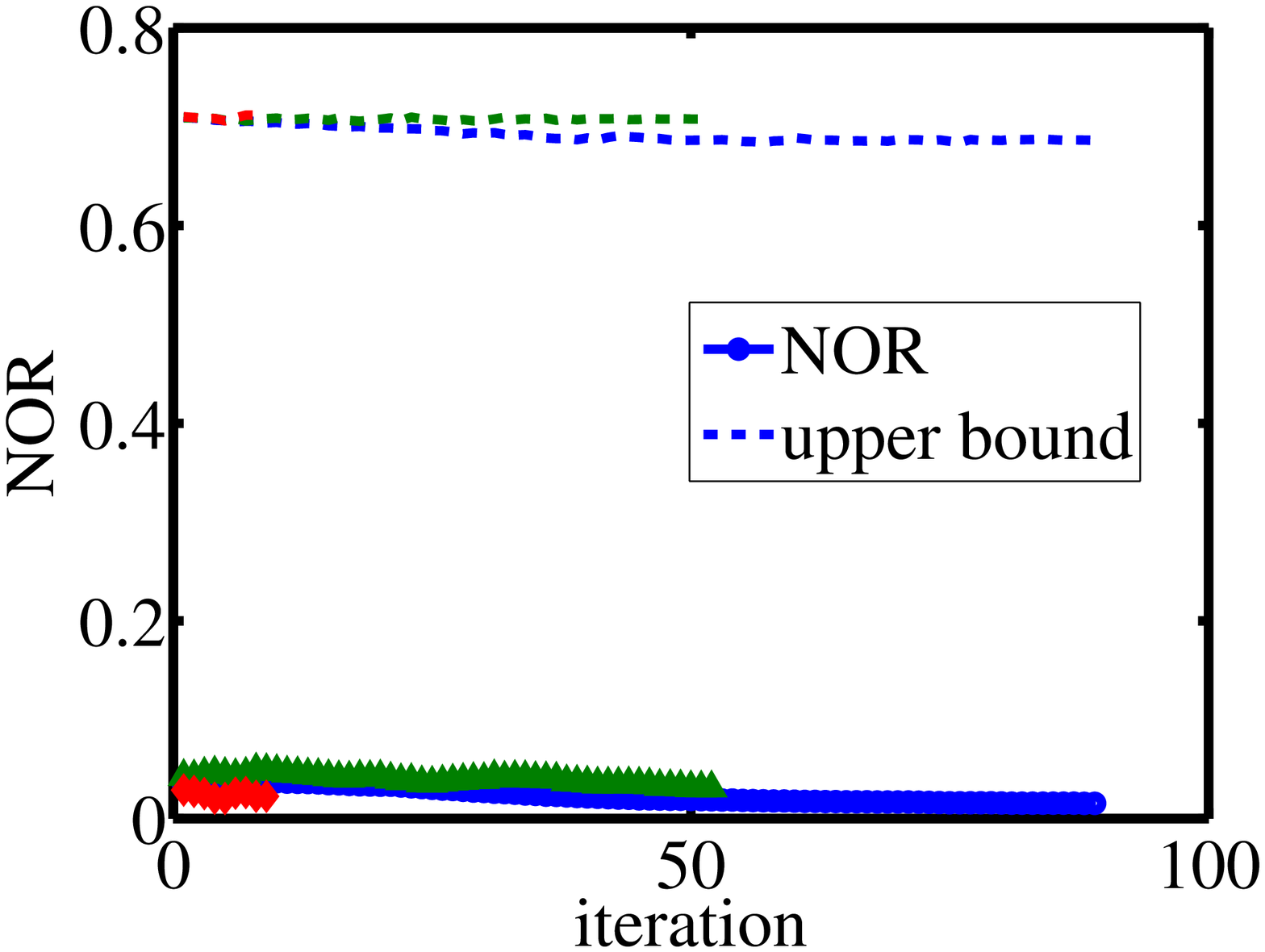}}\\
\subfigure[CPEV,
$r=3$]{\label{fig:SPCArt-en:va3}\includegraphics[width=5cm]{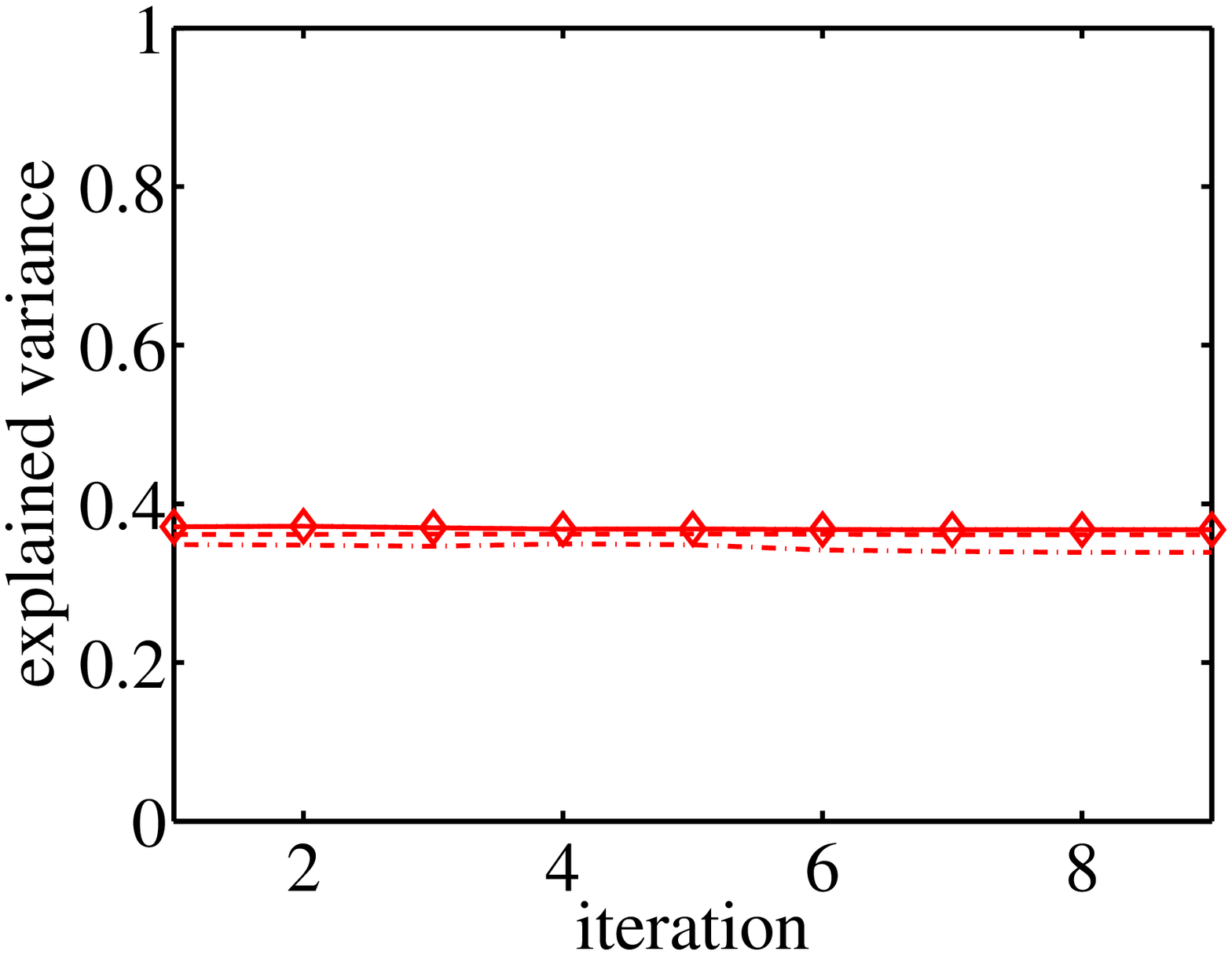}}
\subfigure[CPEV,
$r=14$]{\label{fig:SPCArt-en:va14}\includegraphics[width=5cm]{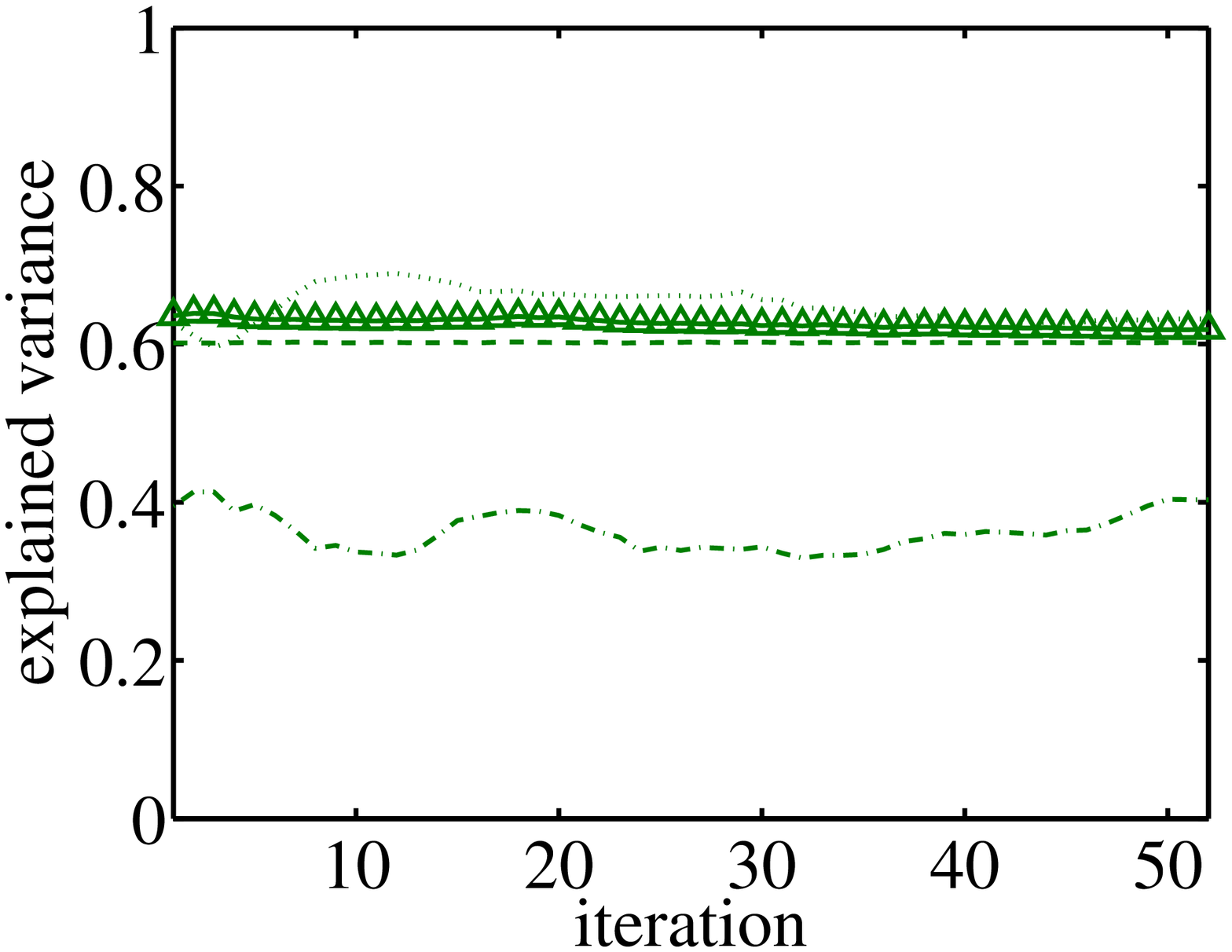}}
\subfigure[CPEV,
$r=70$]{\label{fig:SPCArt-en:va70}\includegraphics[width=5cm]{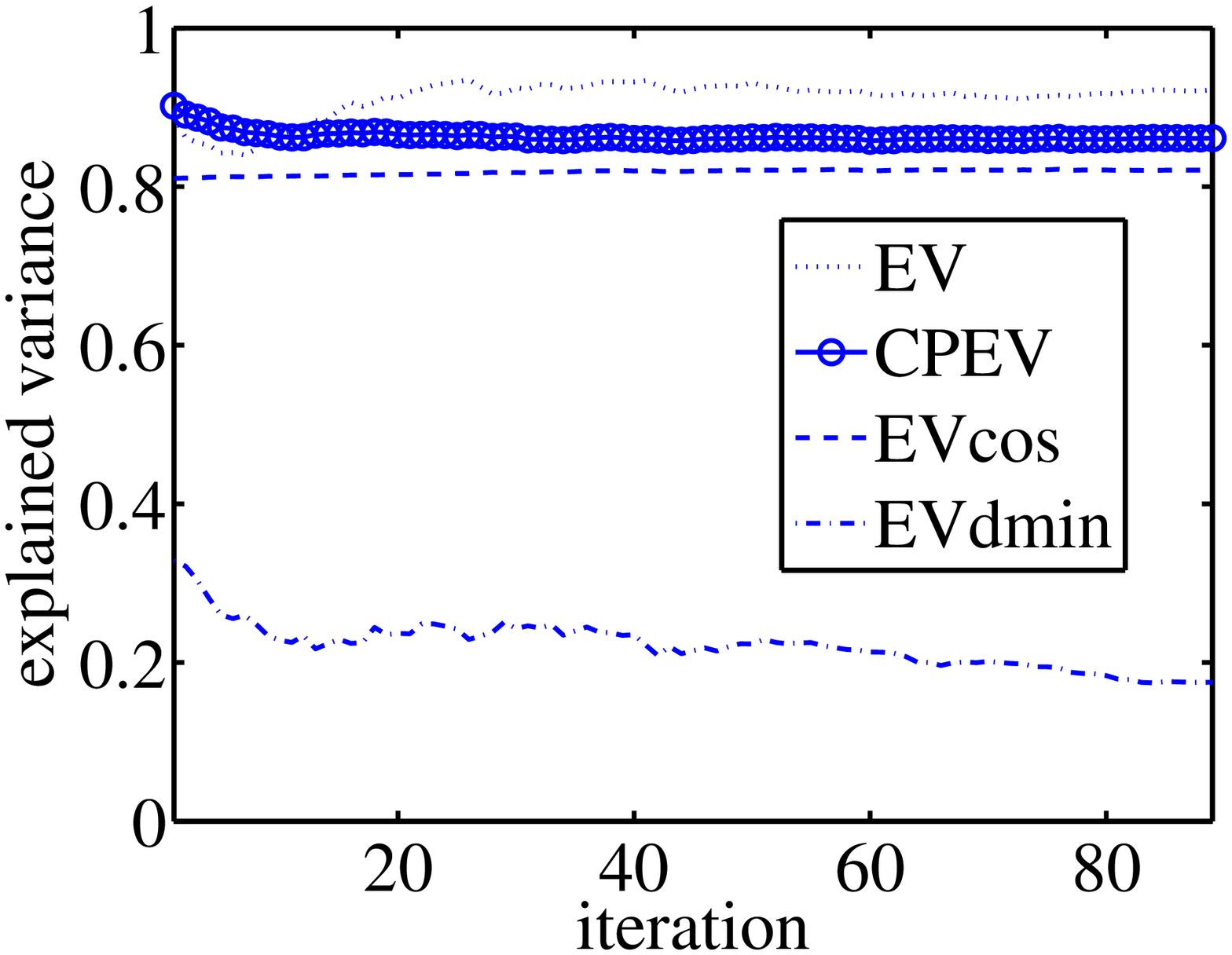}}
} \caption{Performance bounds of SPCArt(T-en) for 3 levels of $r$ on
image data. The legends of (c) and (d) are similar to that of (e).
EV is a normalized version EV = $tr(X^TA^TAX)/tr(A^TA)$ so that it
can be compared with CPEV. EVdmin = $d_{min}^2EV(V)/tr(A^TA)$ and
EVcos = $\cos^2(\theta)EV(V)/tr(A^TA)$. We see EVcos is better than
EVdmin in estimation, and EVcos meets empirical performance well.
For NOR, the algorithm performs far optimistic than those upper
bounds. Owning to the good orthogonality, EV are comparable to CPEV.
For each $r$, as iteration goes, SP improves a lot while CPEV
sacrifices little.}\label{fig:SPCArt-en}
\end{figure*}

\subsection{Synthetic Data}
We will test whether SPCArt and rSVD-GP can recover some underlying
sparse loadings. The synthetic data was introduced by
\cite{zou2006sparse} and became classical for sparse PCA problem. It
considers three hidden Gaussian factors: $h_{1}\sim N(0,290)$,
$h_{2}\sim N(0,300)$, $h_{3}=-0.3h_{1}+0.925h_{2}+\epsilon$,
$\epsilon\sim N(0,1)$. Then ten variables are generated:
$a_{i}=h_{j}+\epsilon_{i}$, $\epsilon_{i}\sim N(0,1)$,
$i=1,\dots,10$, with $j=1$ for $i=1,\dots,4$, $j=2$ for
$i=5,\dots,8$, $j=3$ for $i=9,10$. In words, $h_1$ and $h_2$ are
independent, while $h_3$ has correlations with both of them,
particularly $h_2$. The first 1-4 variables are generated by $h_1$,
while the 5-8 variables are generated by $h_2$. So these two sets of
variables are independent. The last variables 9-10 are generated by
$h_3$, so they have correlations with both of the 1-4 and 5-8
variables, particularly the latter. The covariance matrix $C$
determined by $a_i$'s is fed into the algorithms. For those
algorithms that only accept data matrix, an artificial data
$\tilde{A}=V\Sigma^{-1/2}V^T$ is made where $V\Sigma V^T=C$ is the
SVD of $C$. This is reasonable since they share the same loadings.

The algorithms are required to find two sparse loadings. Besides
CPEV, the nonzero supports of the loadings are recorded, which
should be consistent with the above generating model. The results
are reported in Table~\ref{synthetic data}. Except SPCA(T-$\ell_1$),
the others, including SPCArt and rSVD-GP, successfully recovered the
two most acceptable loading patterns 1-4,9-10; 5-10 and 1-4; 5-10,
as can be seen from the CPEV. \footnote{Setting $\lambda=6$ for
T-sp, all recover another well accepted 5-8; 1-4 pattern, see
\cite{shen2008sparse} for detail.}

\subsection{Pitprops Data}

The Pitprops data is a classical data to test sparse PCA
\cite{jeffers1967two}. There is a covariance matrix of 13 variables
available: $C\in\mathbb{R}^{13\times13}$. For those algorithms that
only accept data matrix as input, an artificial data matrix
$A=V\Sigma^{-1/2}V^T$ is made where $V\Sigma V^T=C$. The algorithms
are tested to find $r=6$ sparse loadings. For fairness, $\lambda$'s
are tuned so that each algorithm yields total cardinality of all
loadings, denoted by NZ, about 18; and mainly T-sp and T-$\ell_0$
algorithms are tested. Criteria STD, NOR, and CPEV are reported. The
results are shown in Table~\ref{pitprops}. For T-$\ell_0$, SPCArt
does best overall, although its CPEV is not the best. The others,
especially GPower(B), suffer from unbalanced cardinality, as can be
seen from the loading patterns and STD; their CPEV may be high but
they are mainly contributed by the dense leading vector, which
aligns with the direction of maximal variance, i.e. leading PCA
loading. The improvements of rSVD-GP(B) over GPower(B) on this point
is significant, as can be seen from the tradeoff between STD and
CPEV. For T-sp, focusing on NOR and CPEV, the performance of rSVD-GP
is good while that of SPCArt is somewhat bad, for the CPEV is the
worst although the NOR ranks two.

\subsection{Natural Image Data}
The investigation of the distribution of natural image patches is
important for computer vision and pattern recognition communities.
On this data set, we will evaluate the convergence of SPCArt, the
performance bounds, and make a comprehensive comparisons between
different algorithms. We randomly select 5000 gray-scale
$13\times13$ patches from BSDS \cite{Martin01}. Each patch is
reshaped to a vector of dimension 169. The DC component of each
patch is removed first, and then the mean of the data set is
removed.

\subsubsection{Convergence of SPCArt}
We will show the stability of convergence and the improvement of
SPCArt over simple thresholding \cite{cadima1995loading}. We take
T-$\ell_0$, $r=70$ as example. CPEV(V) = 0.95. The results are shown
in Figure~\ref{fig:SPCArt-l0}. Gradually, SPCArt has found a local
optimal rotation such that less truncated energy from the rotated
PCA loadings is needed (Figure~\ref{fig:SPCArt-l0:en}) to get a
sparser (Figure~\ref{fig:SPCArt-l0:sp}), more variance explained
(Figure~\ref{fig:SPCArt-l0:va}), and more orthogonal
(Figure~\ref{fig:SPCArt-l0:or}) basis. Note that, the results in the
first iteration are equal to those of simple thresholding
\cite{cadima1995loading}. The final solution of SPCArt significantly
improves over simple thresholding.

\subsubsection{Performance Bounds of SPCArt}
We now compare the theoretical bounds provided in
Section~\ref{sec:analysis} with empirical performance. T-en with
$\lambda=0.15$ is taken as example, in which about 85\% EV(V) is
guaranteed. To achieve a more systematic evaluation, three levels of
subspace dimension are tested: $r$ = [3 14 70] with the
corresponding CPEV(V) = [0.42 0.71 0.95]. The results are shown in
Figure~\ref{fig:SPCArt-en}. Note that most of the theoretical bounds
are the absolute bounds without assuming the specific distribution
of data, so they may be very different from the empirical
performance.

First, for sparsity, the lower bound given in (\ref{equ:en}) is
about 15\%. But as seen in Figure~\ref{fig:SPCArt-en:sp}, the
empirical sparsity is far better than expectation, especially when
$r$ is larger.

Similar situation occurs for nonorthogonality, as seen in
Figure~\ref{fig:SPCArt-en:or}. The upper bounds are far too
pessimistic to be practical. It may be caused by the high dimension
of data.

Finally, for explained variance, it can be found from
Figure~\ref{fig:SPCArt-en:va3}, \ref{fig:SPCArt-en:va14},
\ref{fig:SPCArt-en:va70} that there is no large discrepancy between
EV and CPEV, owning to the near orthogonality of the sparse basis as
indicated in Figure~\ref{fig:SPCArt-en:or}. On the other hand, the
specific bound EVcos is better than the universal bound EVdmin. In
contrast to sparsity and nonorthogonality, EVcos meets the empirical
performance well, as analyzed in Section~\ref{sec:analysis}.

\begin{figure*}[h]
\centering{
\subfigure[CPEV]{\label{fig:GP-GPower:va}\includegraphics[width=4cm]{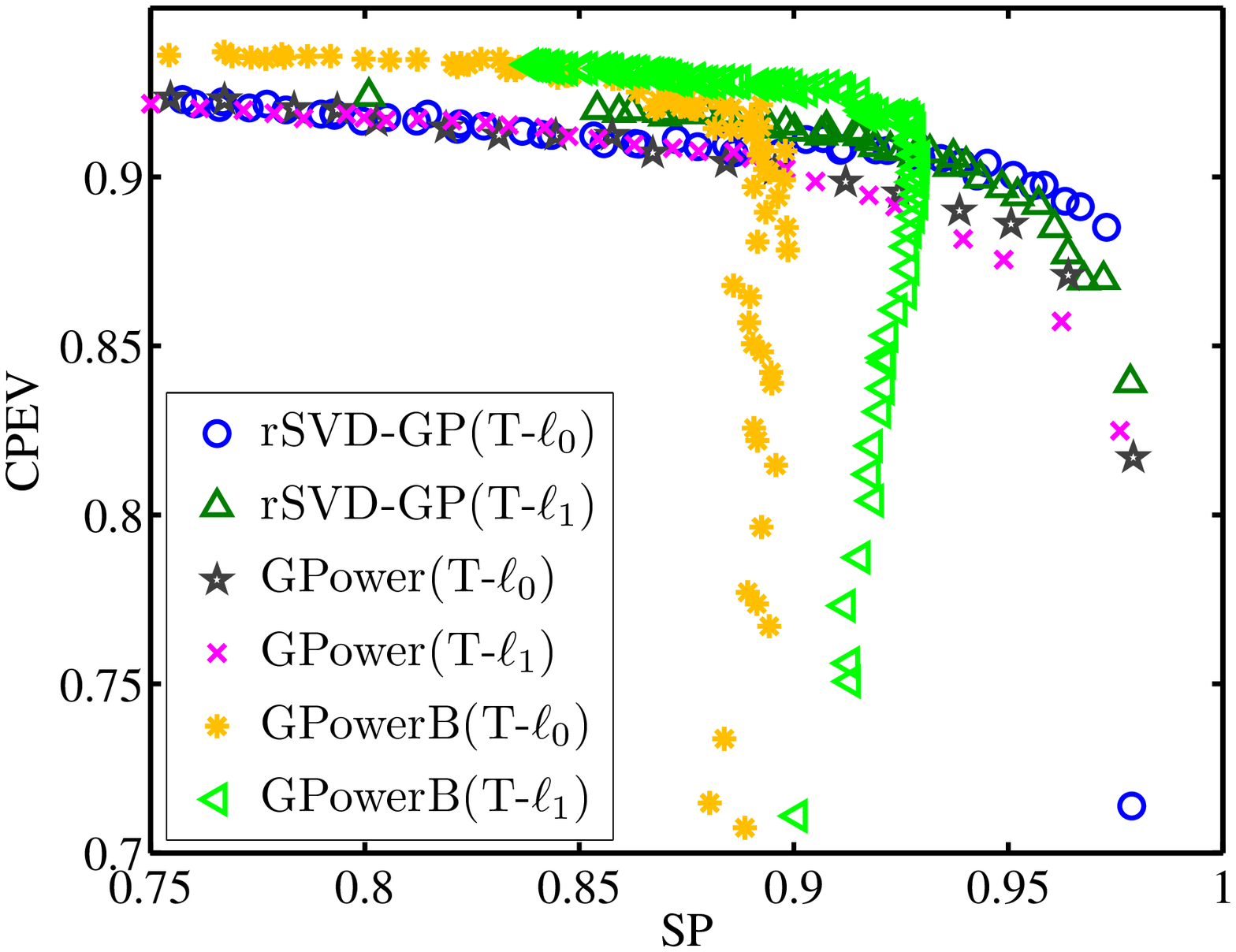}}
\subfigure[NOR]{\label{fig:GP-GPower:or}\includegraphics[width=4cm]{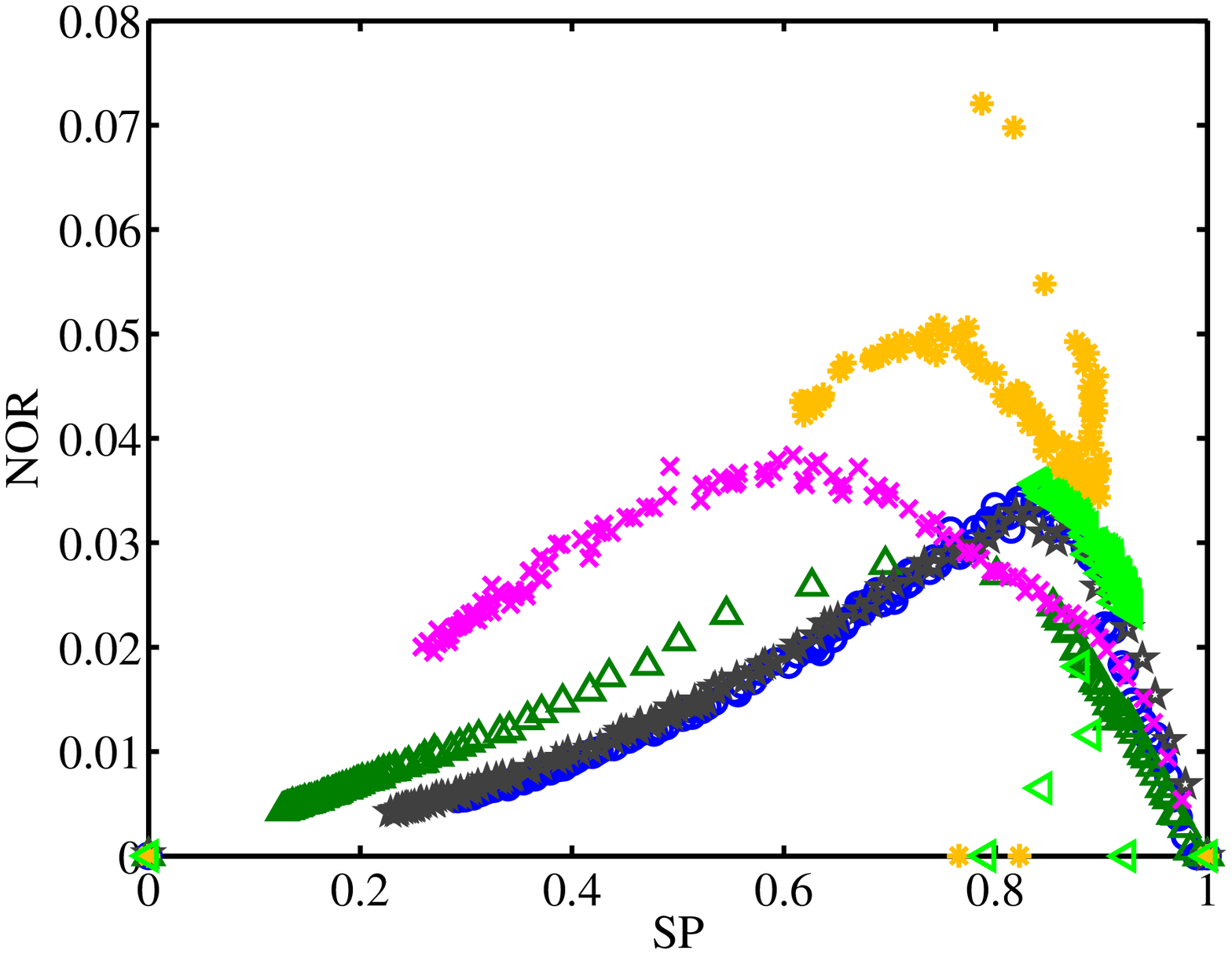}}
\subfigure[Worst
sparsity]{\label{fig:GP-GPower:std}\includegraphics[width=4cm]{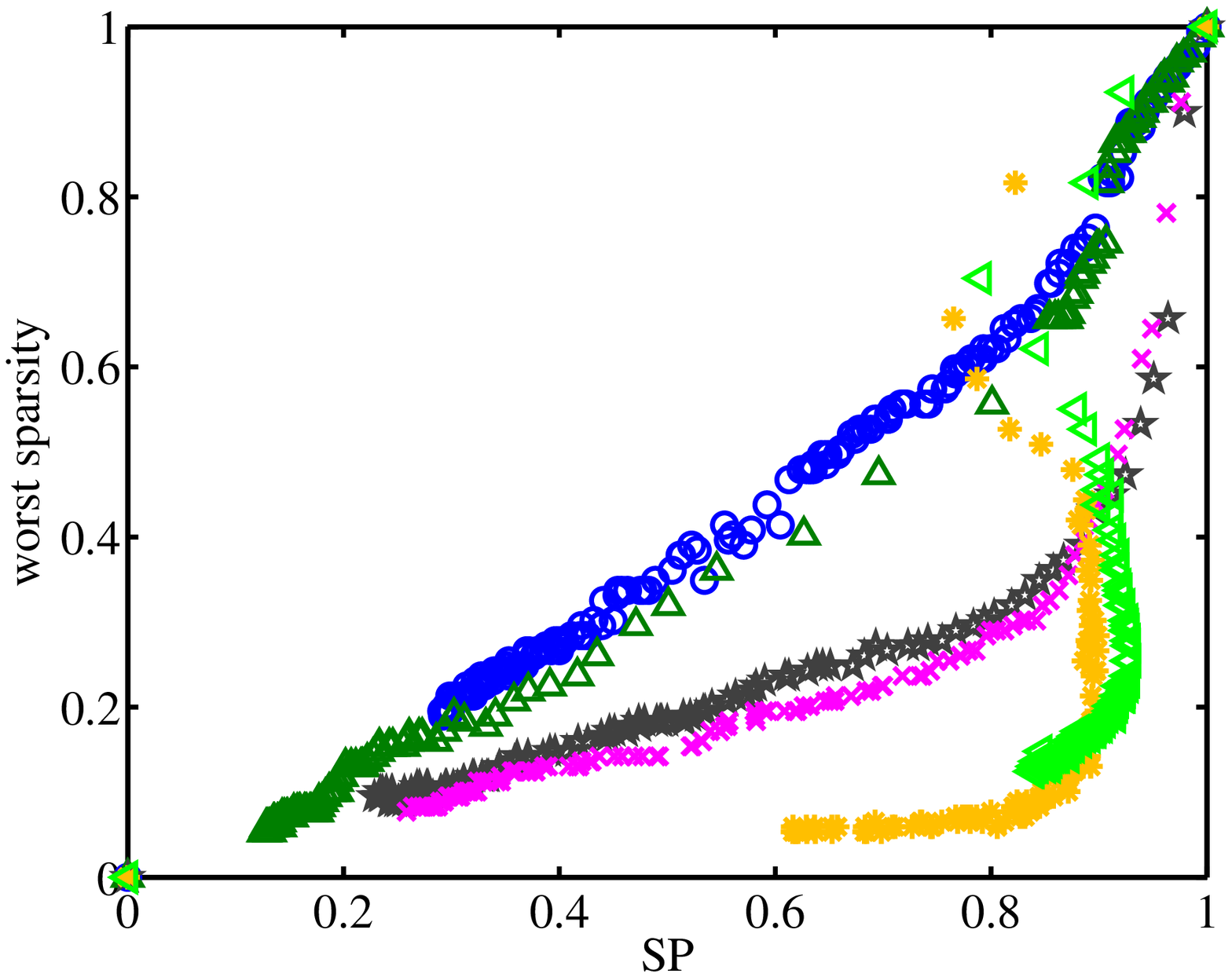}}
\subfigure[Time
cost]{\label{fig:GP-GPower:tm}\includegraphics[width=4cm]{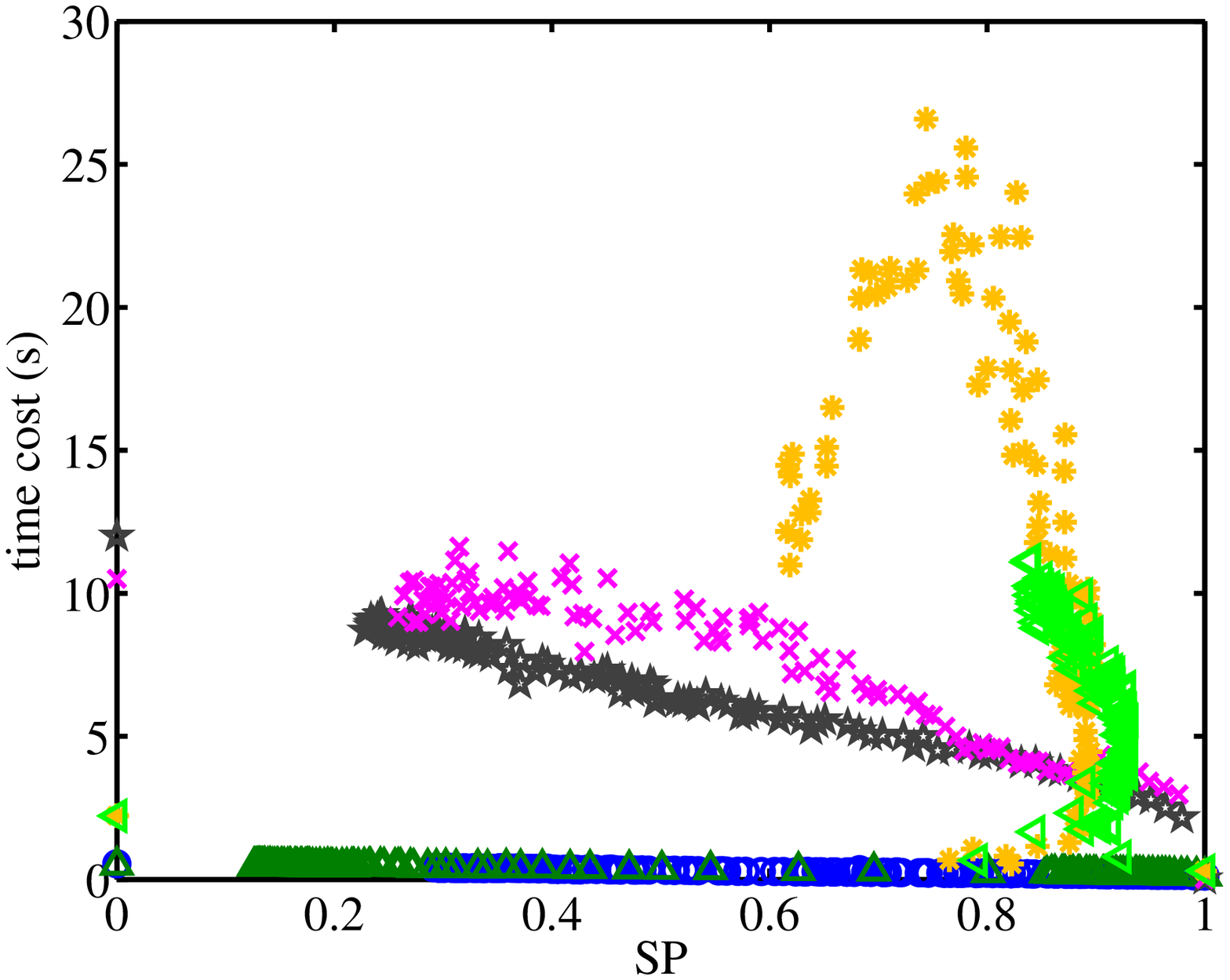}}
} \caption{rSVD-GP v.s. GPower(B) on image data. From (c), we see
that for GPower(B), the uniform parameter setting leads to
unbalanced sparsity, the worst case is rather dense. rSVD-GP
significantly improves over GPower(B) on the balance of sparsity as
well as the other criteria.}\label{fig:GP-GPower}
\end{figure*}

%\begin{figure}[h]
%\centering{
%\includegraphics[width=4cm]{fig/r70_rSVD-GPl0l1ensp_rSVD-GPBl0l1ensp_sp-or}
%\includegraphics[width=1.7cm]{fig/legend}
%} \caption{rSVD-GP v.s. rSVD-GPB on NOR on image data. Both are
%initialized with PCA. The block version gets much worse
%orthogonality than the deflation version.}\label{fig:GP-GPB}
%\end{figure}

\begin{figure*}[h]
\centering{
\subfigure[CPEV]{\label{fig:GP-GPB:va}\includegraphics[width=4cm]{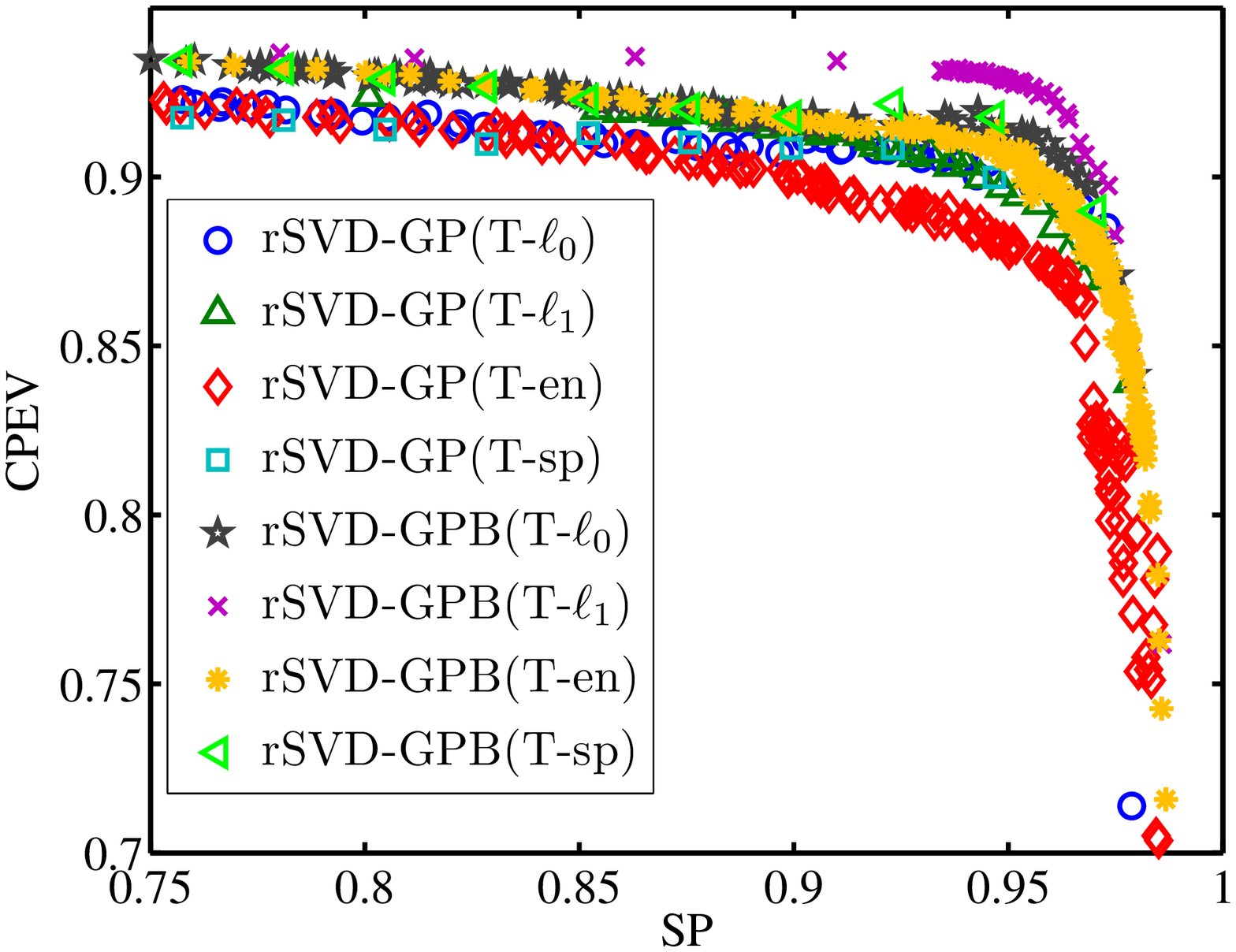}}
\subfigure[NOR]{\label{fig:GP-GPB:or}\includegraphics[width=4cm]{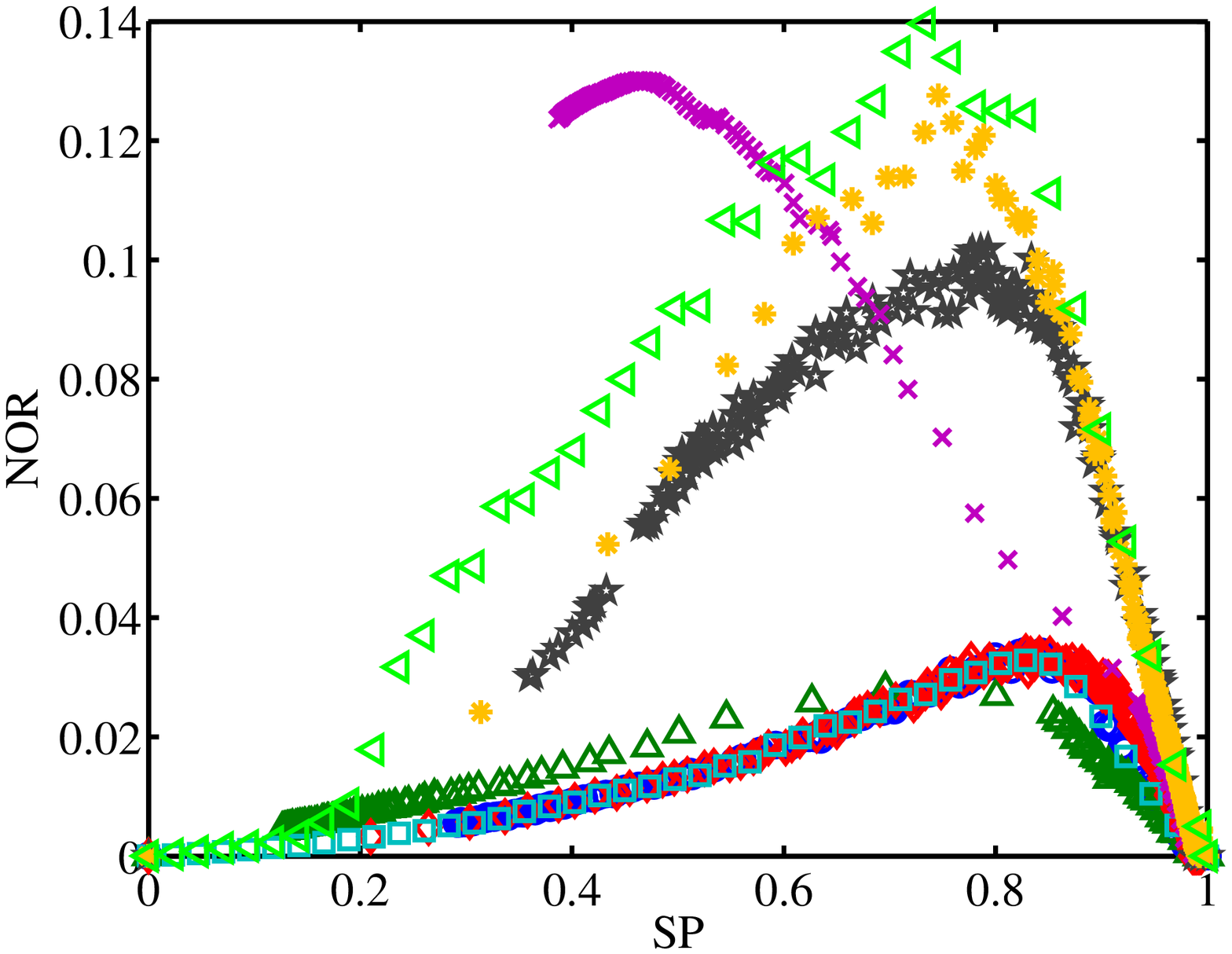}}
\subfigure[STD]{\label{fig:GP-GPB:std}\includegraphics[width=4cm]{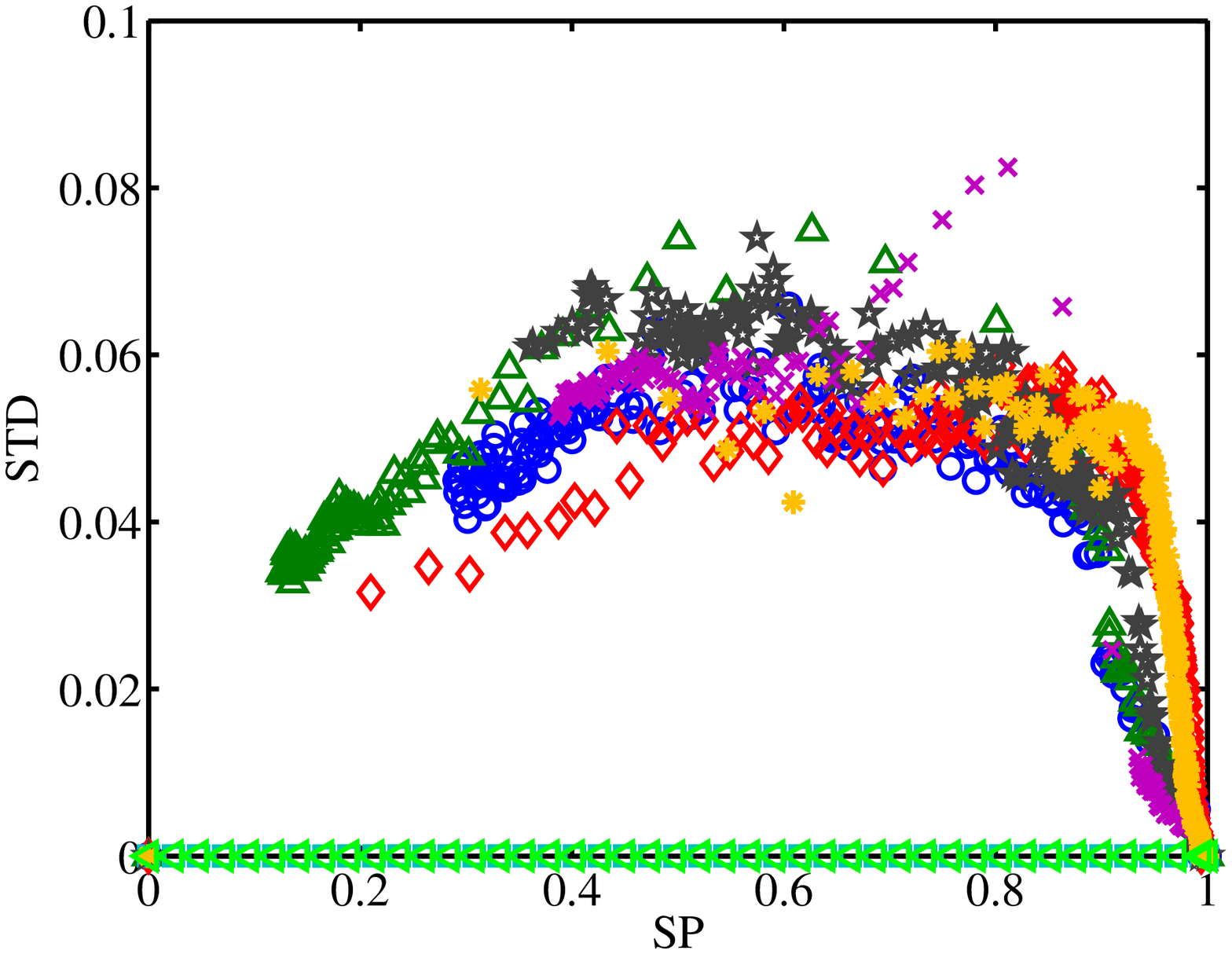}}
\subfigure[Time
cost]{\label{fig:GP-GPB:tm}\includegraphics[width=4cm]{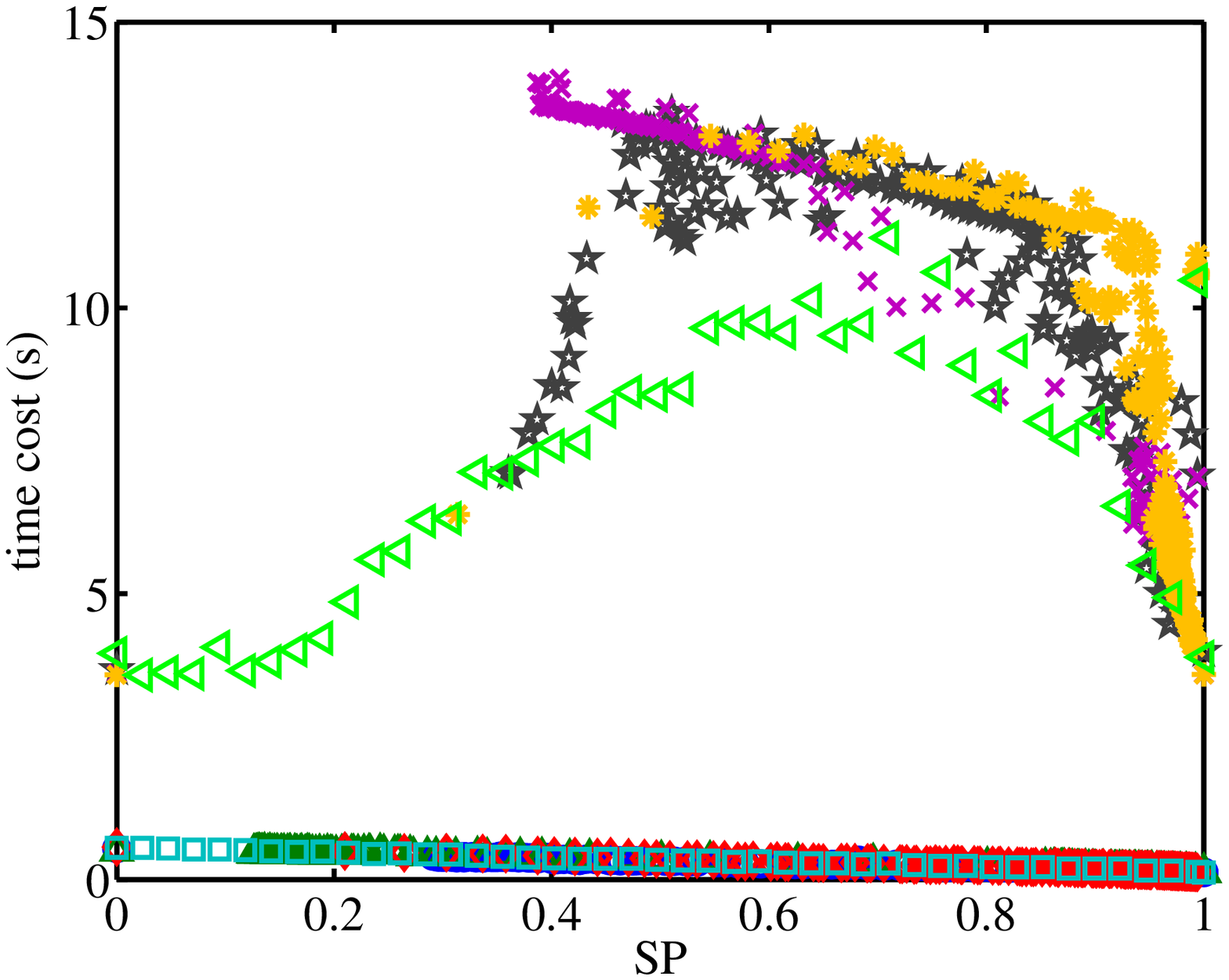}}
} \caption{rSVD-GP v.s. rSVD-GPB on image data. Both are initialized
with PCA. From (b), we see the block version gets much worse
orthogonality than the deflation version. The other criteria are
comparable except time cost.}\label{fig:GP-GPB}
\end{figure*}

\begin{figure*}[h]
\centering{
\subfigure[CPEV]{\label{fig:SPCArt-GP:va}\includegraphics[width=4cm]{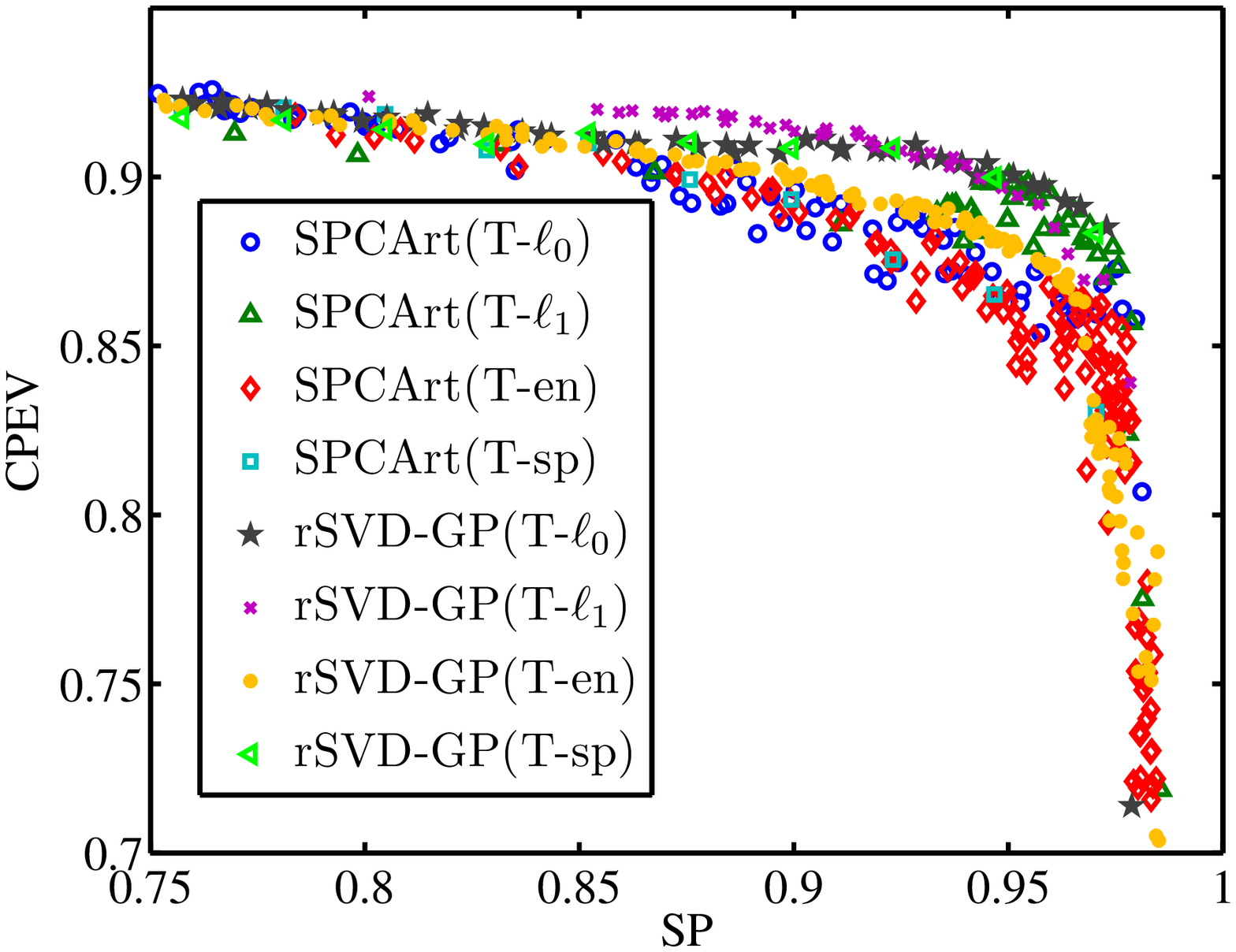}}
\subfigure[NOR]{\label{fig:SPCArt-GP:or}\includegraphics[width=4cm]{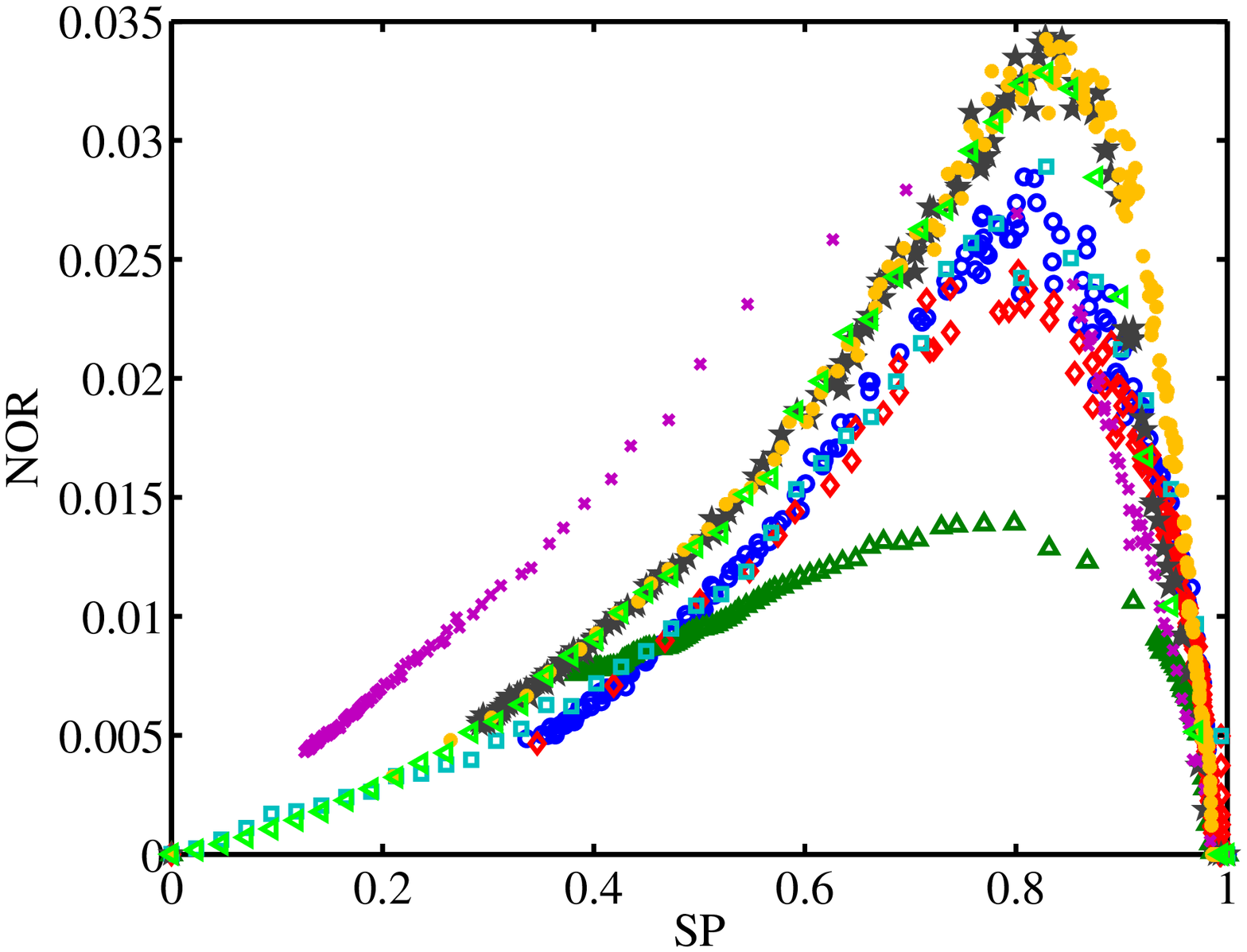}}
\subfigure[STD]{\label{fig:SPCArt-GP:std}\includegraphics[width=4cm]{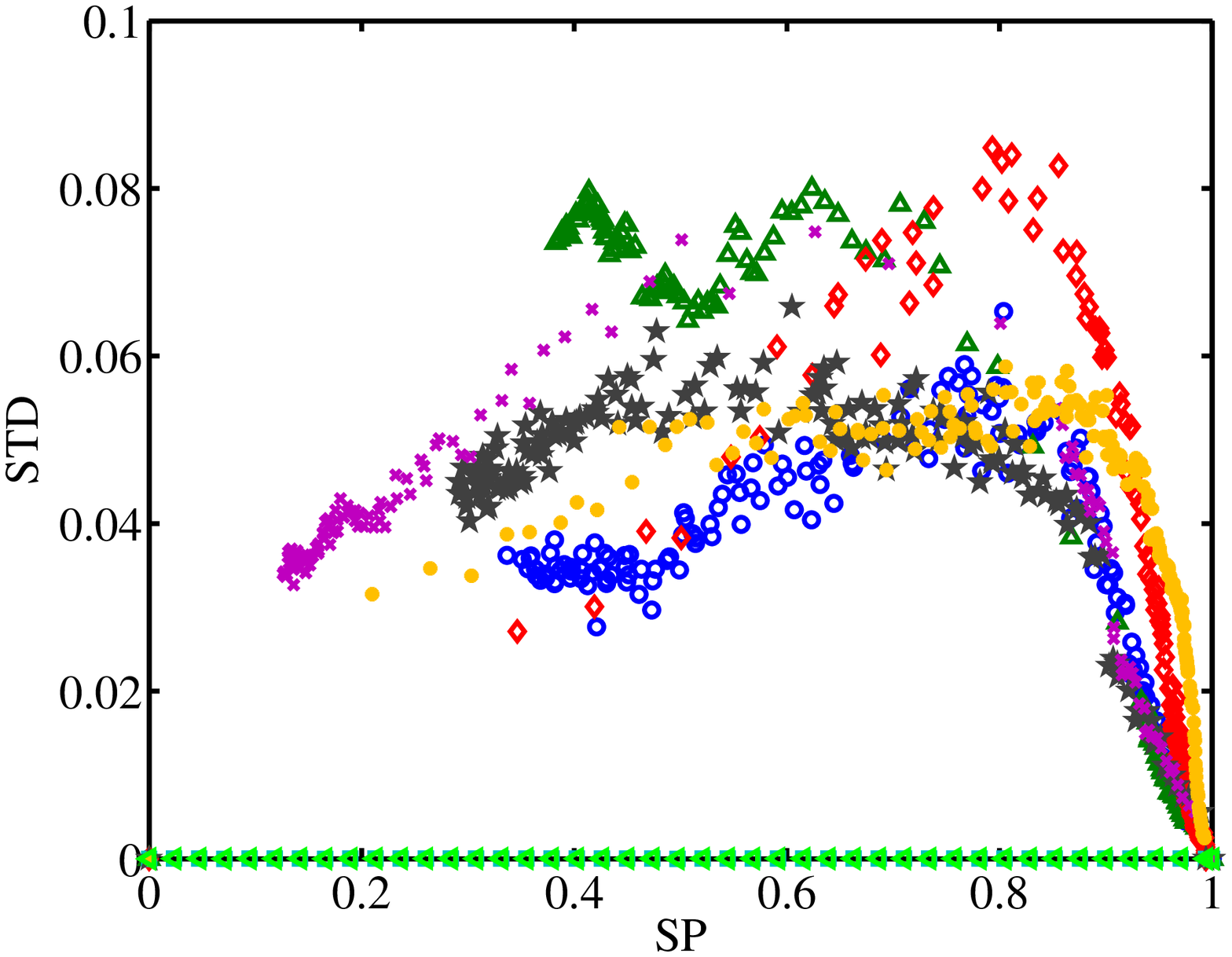}}
\subfigure[Time
cost]{\label{fig:SPCArt-GP:tm}\includegraphics[width=4cm]{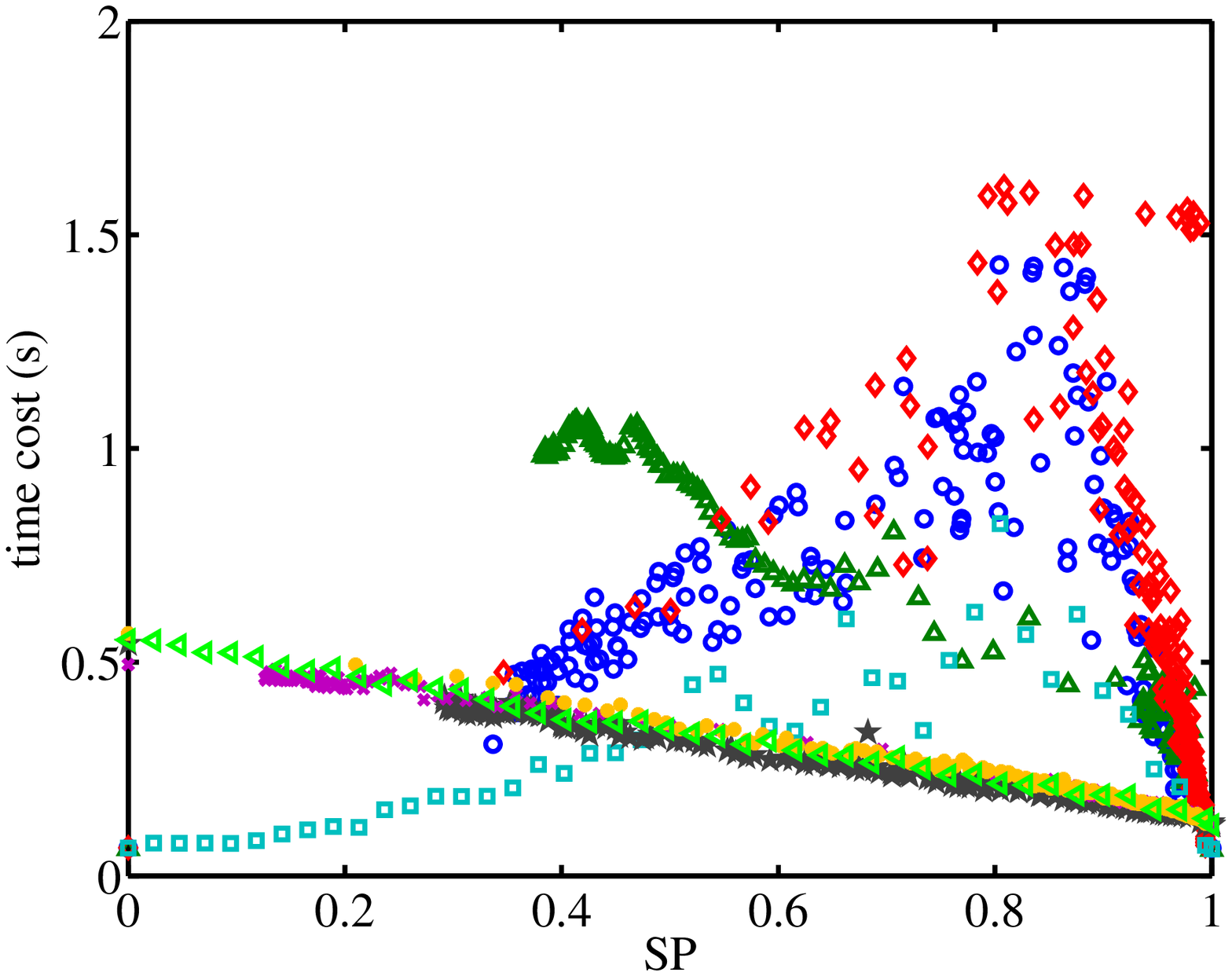}}
} \caption{SPCArt v.s. rSVD-GP on image data. The two methods
{obtain} comparable results on these criteria.}\label{fig:SPCArt-GP}
\end{figure*}

\begin{figure*}[h]
\centering{
\subfigure[CPEV]{\label{fig:SPCArt-other:va}\includegraphics[width=4cm]{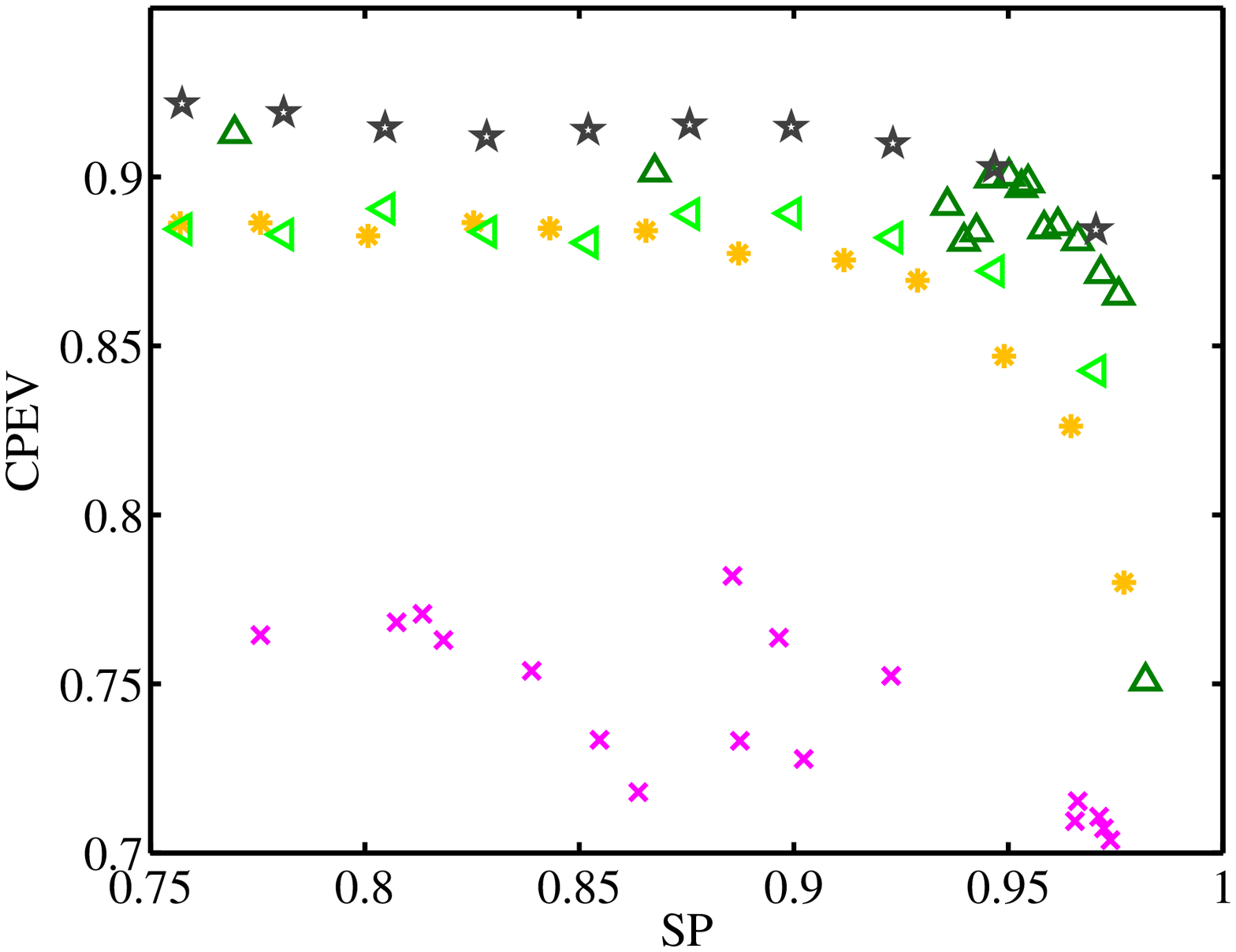}}
\subfigure[NOR]{\label{fig:SPCArt-other:or}\includegraphics[width=4cm]{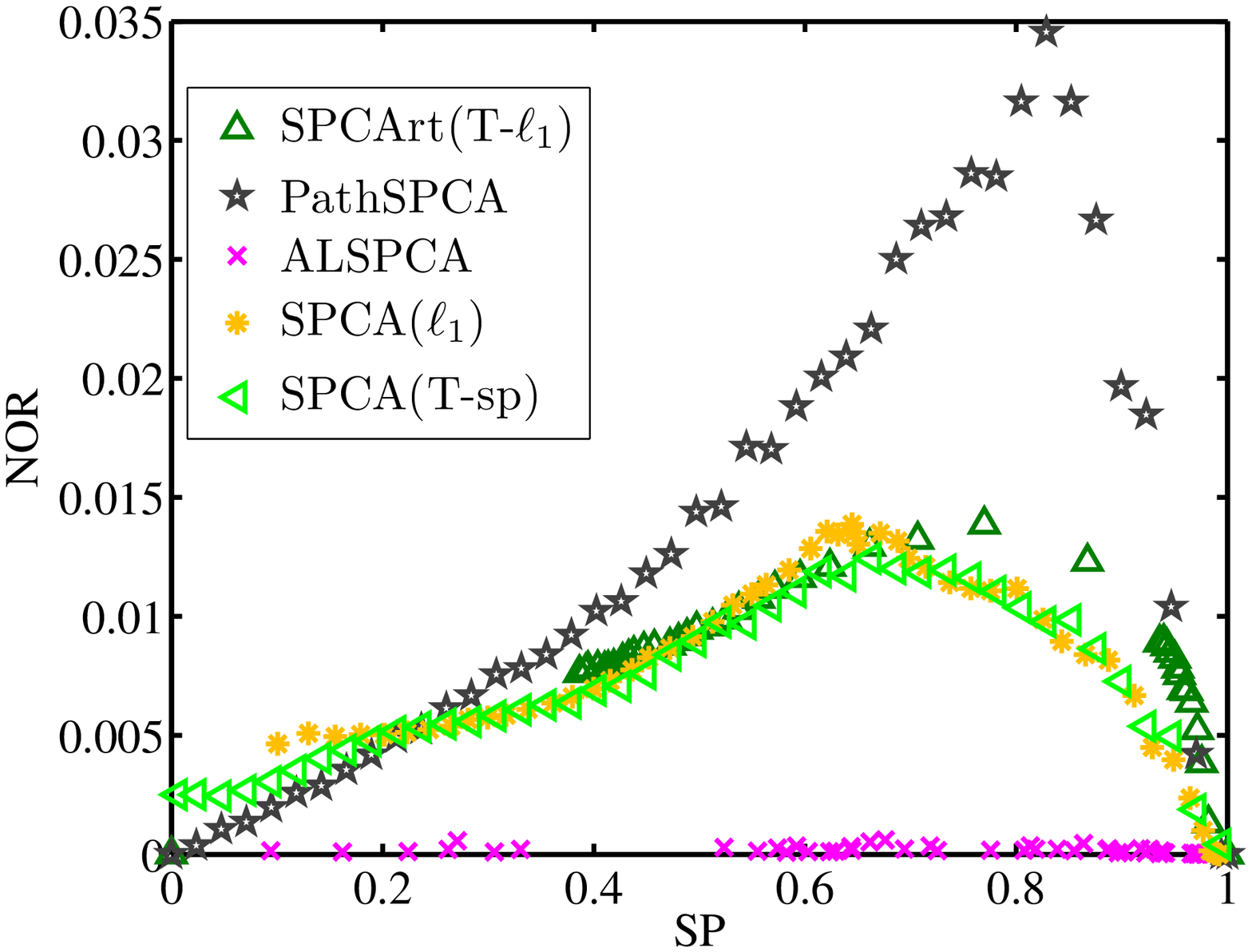}}
\subfigure[STD]{\label{fig:SPCArt-other:std}\includegraphics[width=4cm]{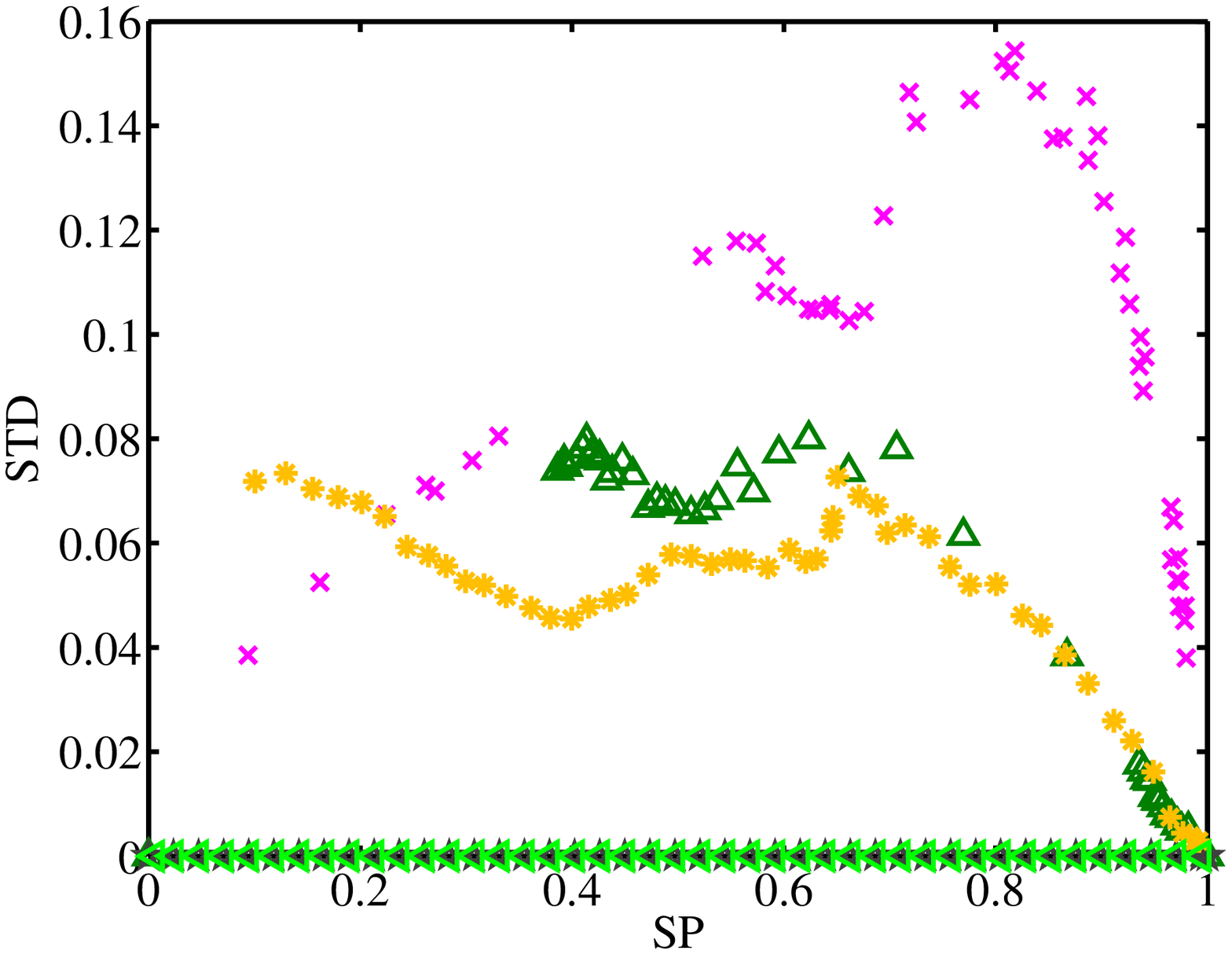}}
\subfigure[Time
cost]{\label{fig:SPCArt-other:tm}\includegraphics[width=4cm]{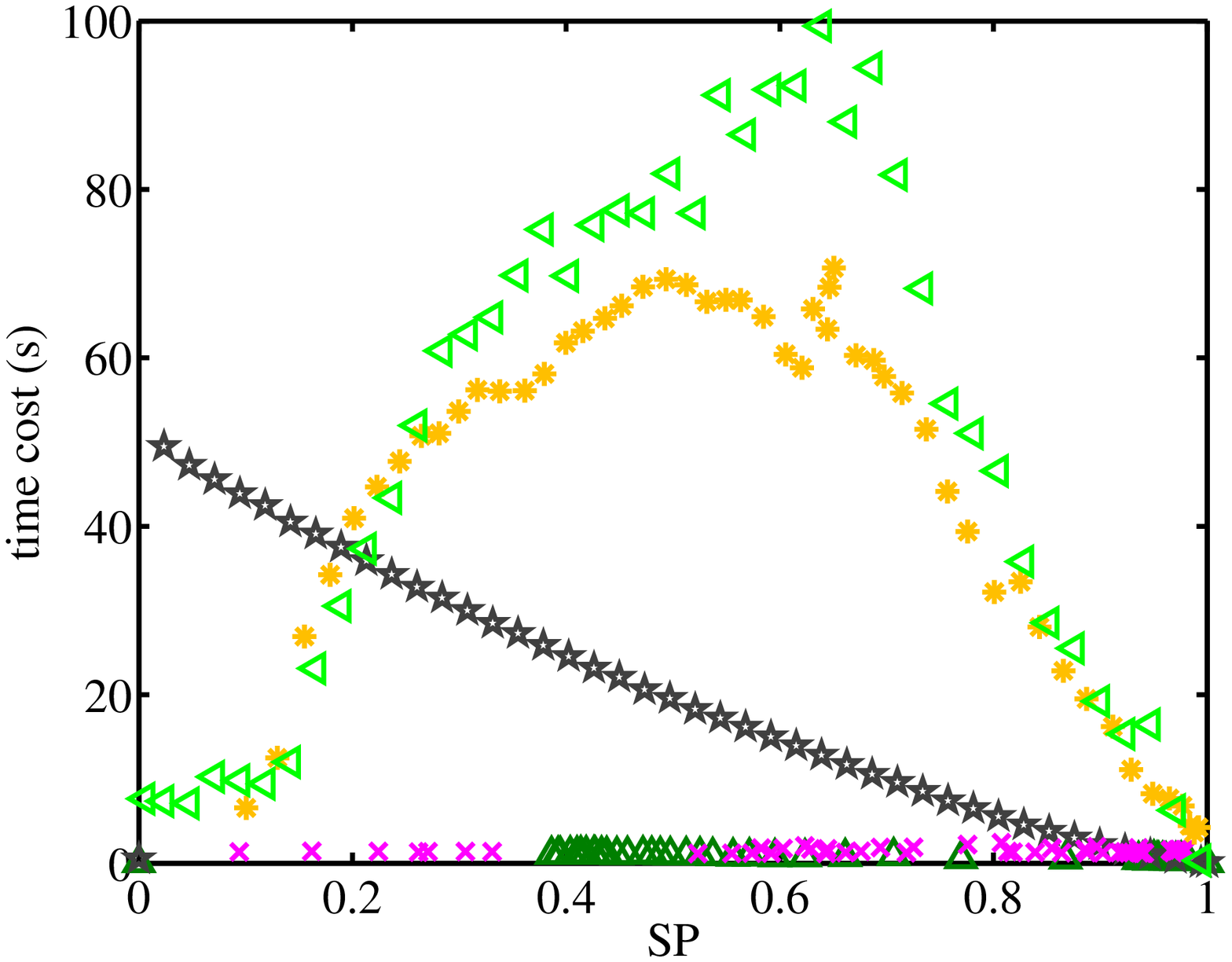}}
} \caption{SPCArt v.s. SPCA, PathSPCA, ALSPCA on image data. To make
the figures less messy, T-$\ell_1$ is taken as representative for
SPCArt. SPCArt performs best overall, while PathSPCA performs best
at CPEV. ALSPCA and SPCA are unstable. PathSPCA and SPCA are time
consuming.}\label{fig:SPCArt-other}
\end{figure*}

\subsubsection{Performance Comparisons between Algorithms}
We fix $r=70$ and run the algorithms over a range of parameter
$\lambda$ to produce a series of results, then the algorithms are
compared based on the same sparsity. We first verify the improvement
of rSVD-GP over GPower(B) on the balance of sparsity, and take
rSVD-GP(B) as example to show that the block group produces worse
orthogonality than the deflation group. Then we compare SPCArt with
the other algorithms.

(1) rSVD-GP v.s. GPower(B), see Figure~\ref{fig:GP-GPower}. For
GPower(B), the uniform parameter setting leads to unbalanced
sparsity. In fact, the worst case is usually achieved by the leading
loadings. rSVD-GP significantly improves over GPower(B) on this
criterion as well as the others.

(2) rSVD-GP v.s. rSVD-GPB, see Figure~\ref{fig:GP-GPB}. The block
version always gets worse orthogonality. This is because there is no
mechanism in it to ensure orthogonality.

(3) SPCArt v.s. rSVD-GP, see Figure~\ref{fig:SPCArt-GP}. The two
methods obtain comparable results on these criteria.

(4) SPCArt v.s. SPCA, PathSPCA, and ALSPCA, see
Figure~\ref{fig:SPCArt-other}. SPCArt performs best overall.
Generally, PathSPCA performs best at CPEV, but its time cost
increases with cardinality. ALSPCA is unstable and sensitive to
parameter, so is SPCA. Besides, SPCA is time consuming.

\begin{figure*}[h]
\centering{
\subfigure[SP]{\label{fig:varyr-other:sp}\includegraphics[width=4cm]{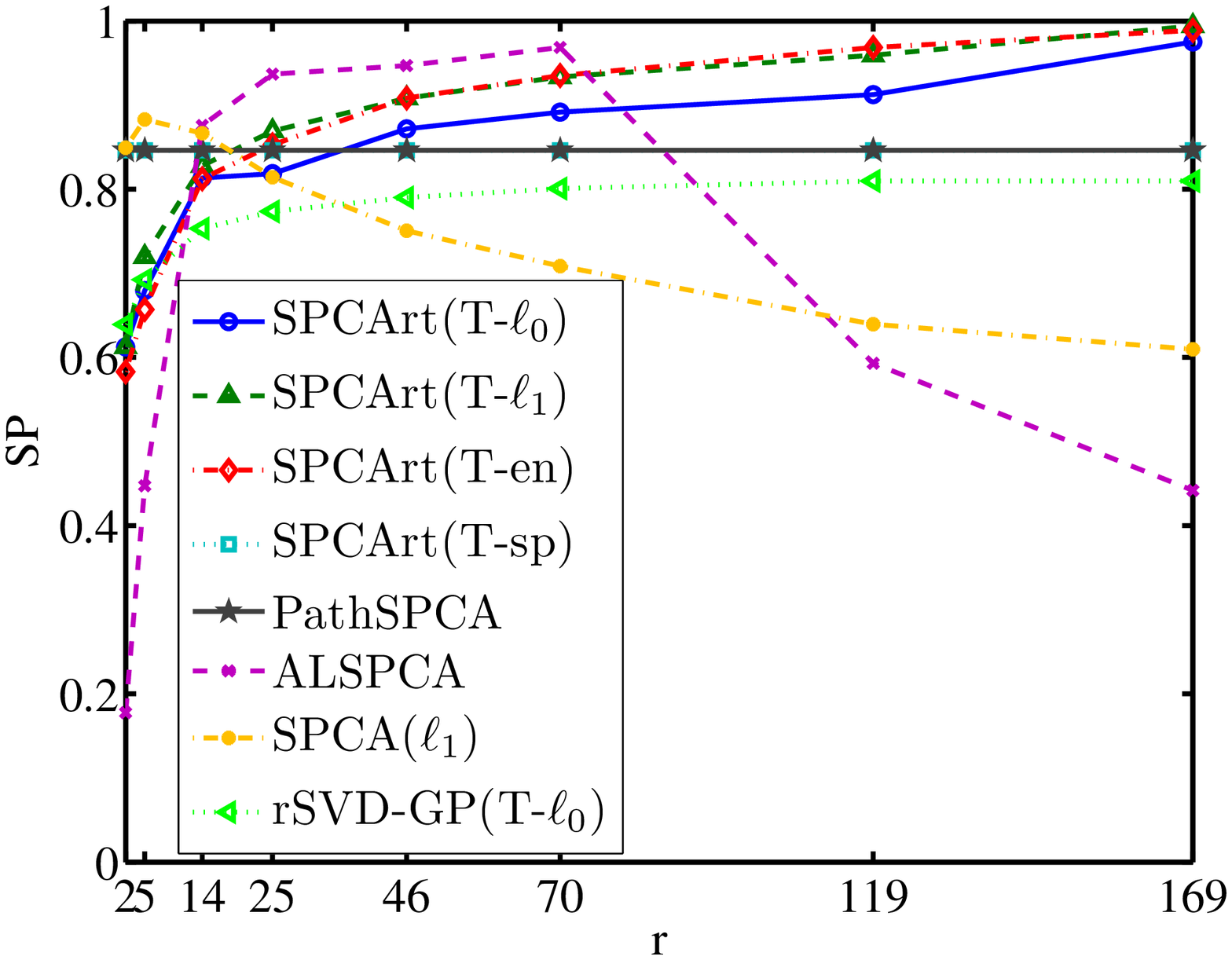}}
\subfigure[CPEV]{\label{fig:varyr-other:va}\includegraphics[width=4cm]{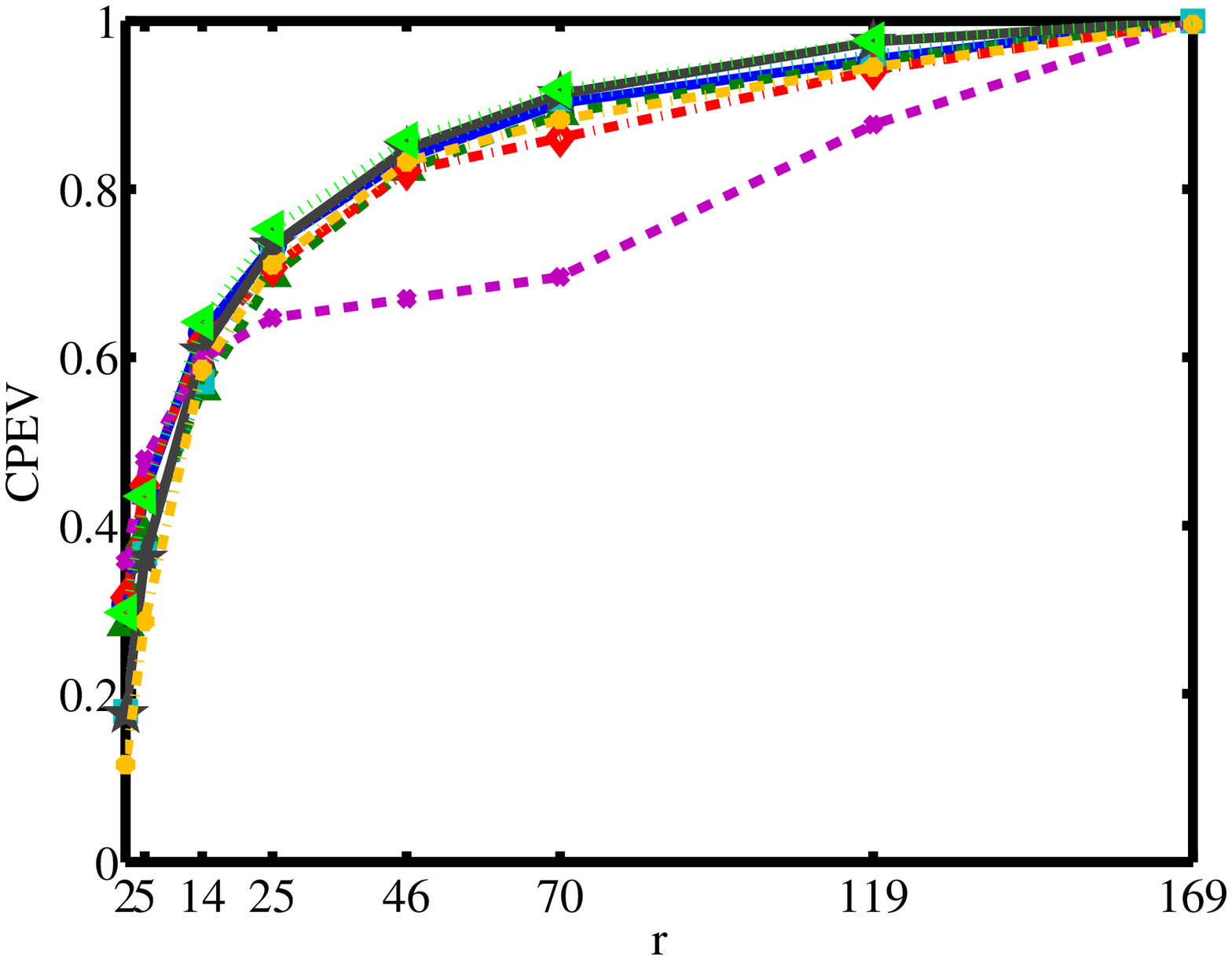}}
\subfigure[NOR]{\label{fig:varyr-other:or}\includegraphics[width=4cm]{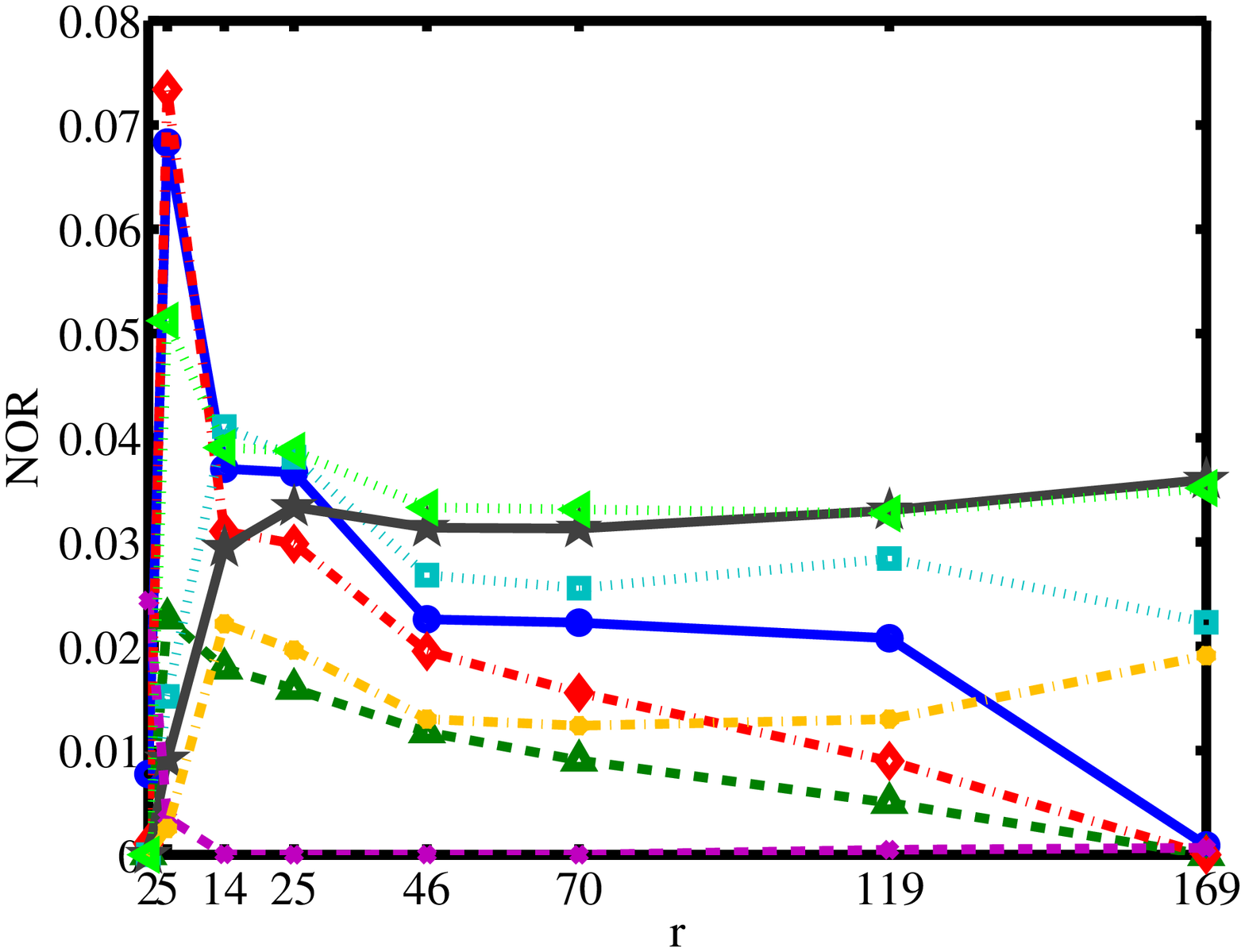}}\\
\subfigure[Worst
sparsity]{\label{fig:varyr-other:std}\includegraphics[width=4cm]{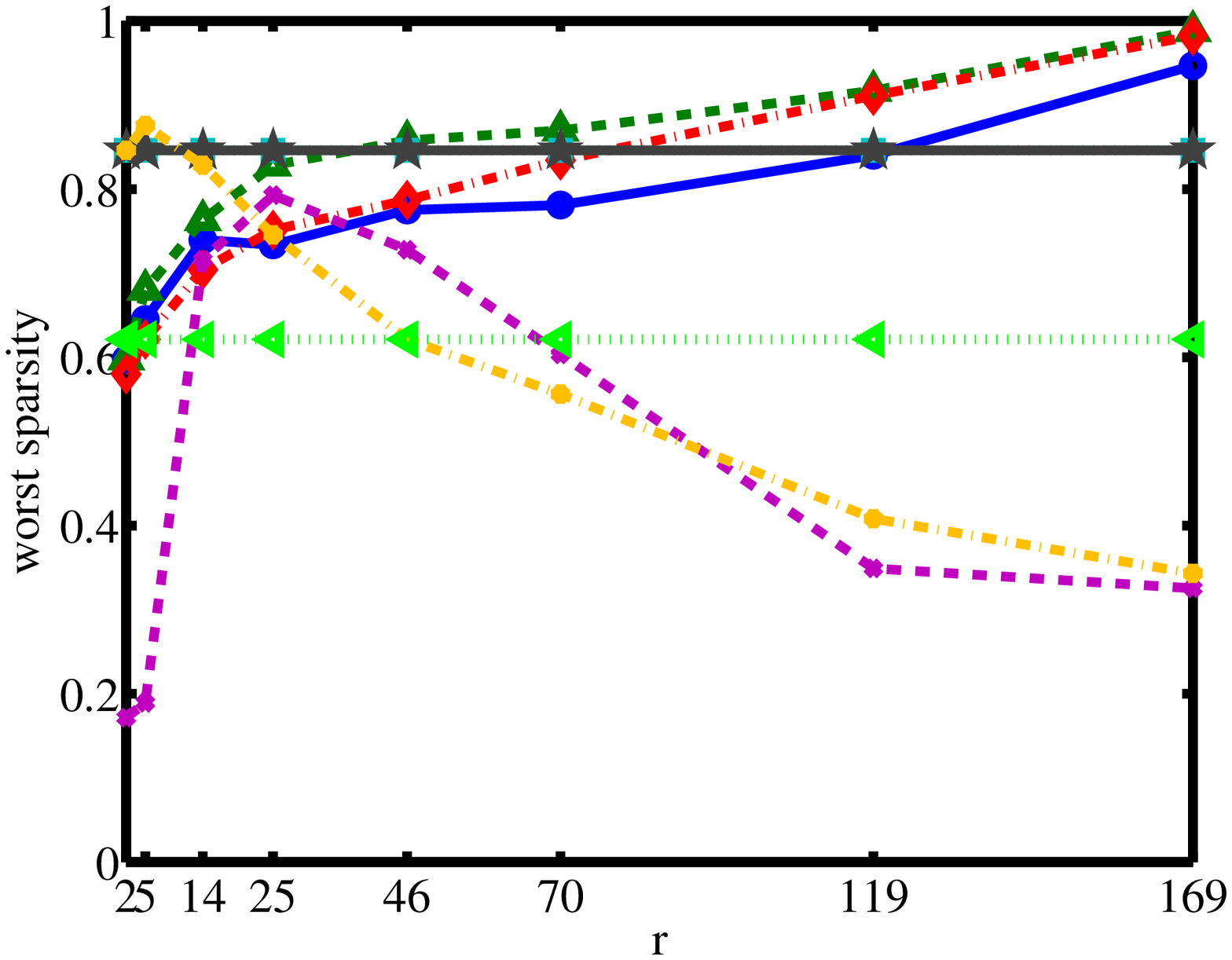}}
\subfigure[Time
cost]{\label{fig:varyr-tm:tm1}\includegraphics[width=4cm]{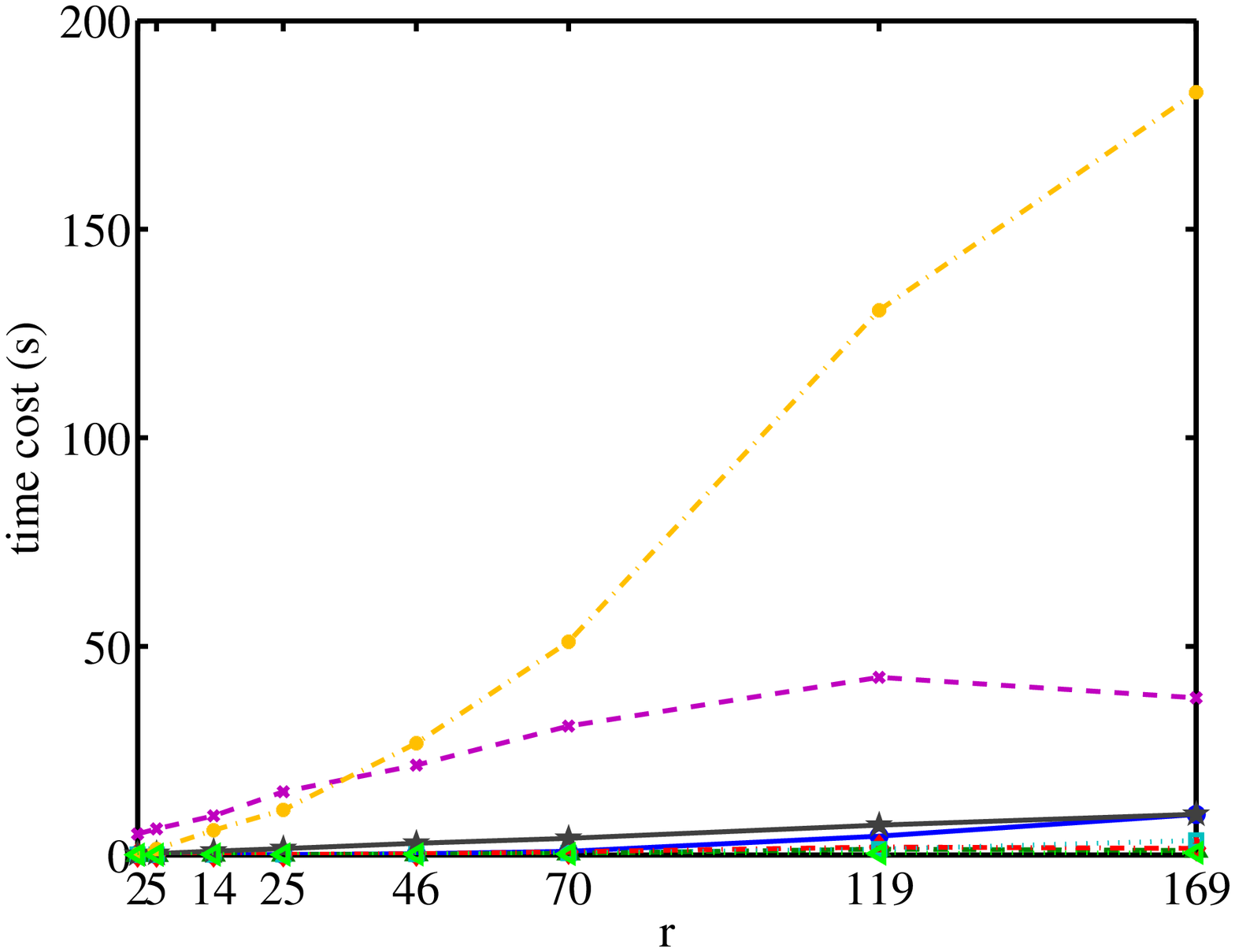}}
\subfigure[Time cost per
iteration]{\label{fig:varyr-tm:tm2}\includegraphics[width=4cm]{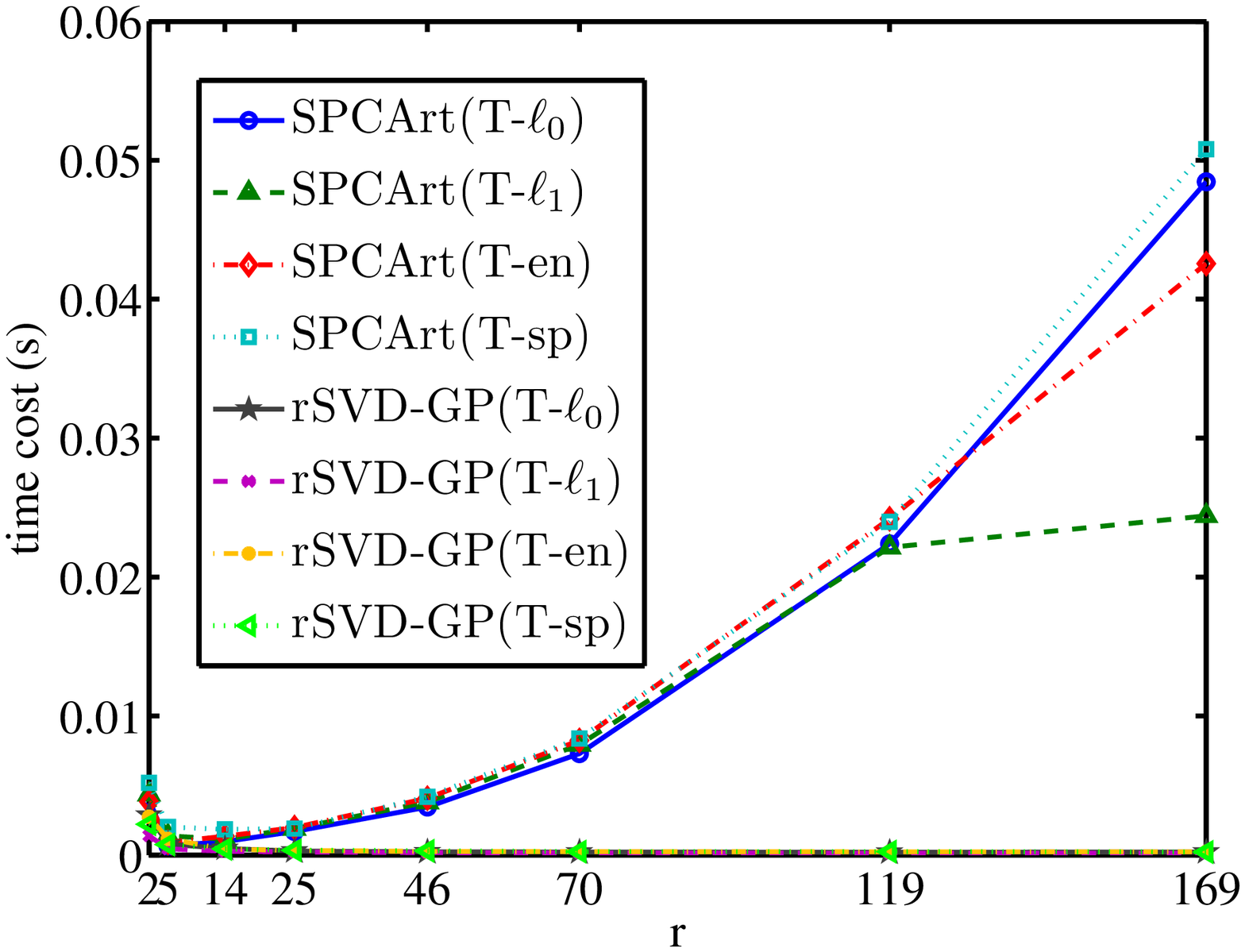}}
} \caption{Evolution of solution as $r$ increases on image data,
SPCArt v.s. PathSPCA, ALSPCA, SPCA, and rSVD-GP(T-$\ell_0$). In (f),
only SPCArt and rSVD-GP are shown. SPCArt is insensitive to
parameter setting. Compared with the deflation algorithms (PathSPCA,
rSVD-GP), the loadings of SPCArt are adaptive with $r$, whose
properties gradually improve. When $r$ becomes the full dimension,
T-$\ell_1$ perfectly recovers the natural basis which is globally
optimal, as can be seen from SP, worst sparsity, and NOR. Both the
two sparsity criteria reach $(p-1)/p=0.994$ and NOR touches bottom
0. T-en achieves similar results.}\label{fig:varyr-other}
\end{figure*}

\begin{figure}[h]
\center{
\includegraphics[width=8.5cm]{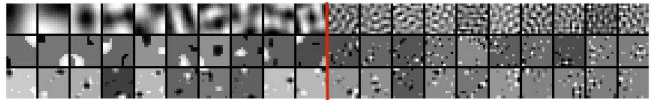}
}\caption{Images of the first 10 and the last 10 loadings among the
total 70 loadings on image data. 1st line: PCA; 2nd line:
rSVD-GP(T-sp); 3rd line: SPCArt(T-sp).
$\lambda=\lfloor0.85p\rfloor$. rSVD-GP is greedy, and the results of
it are more confined to those of PCA, while SPCArt is more
flexible.}\label{fig:Ximage}
\end{figure}

\begin{figure*}[h]
\centering{
\subfigure[CPEV]{\label{fig:gene:va}\includegraphics[width=4cm]{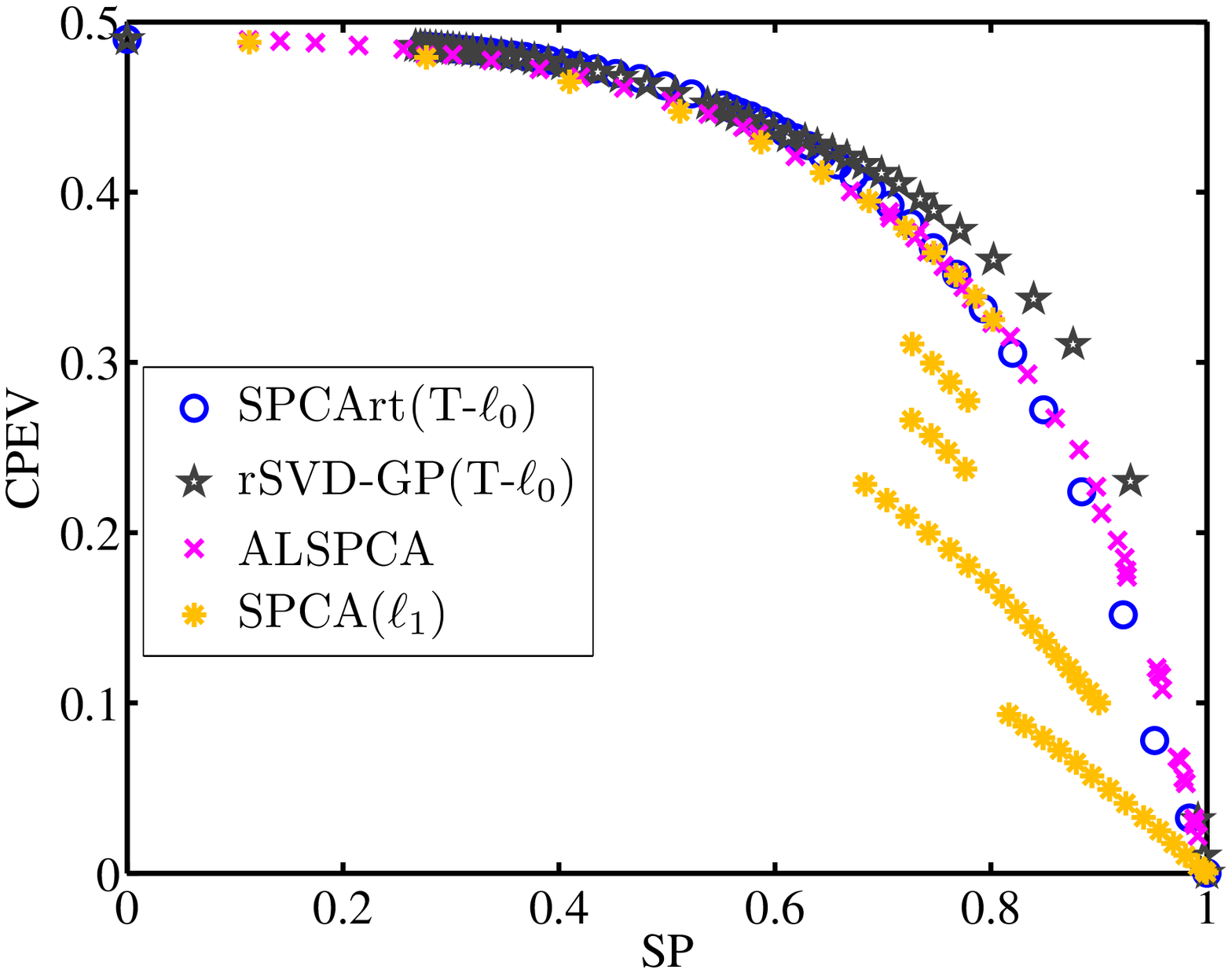}}
\subfigure[NOR]{\label{fig:gene:or}\includegraphics[width=4cm]{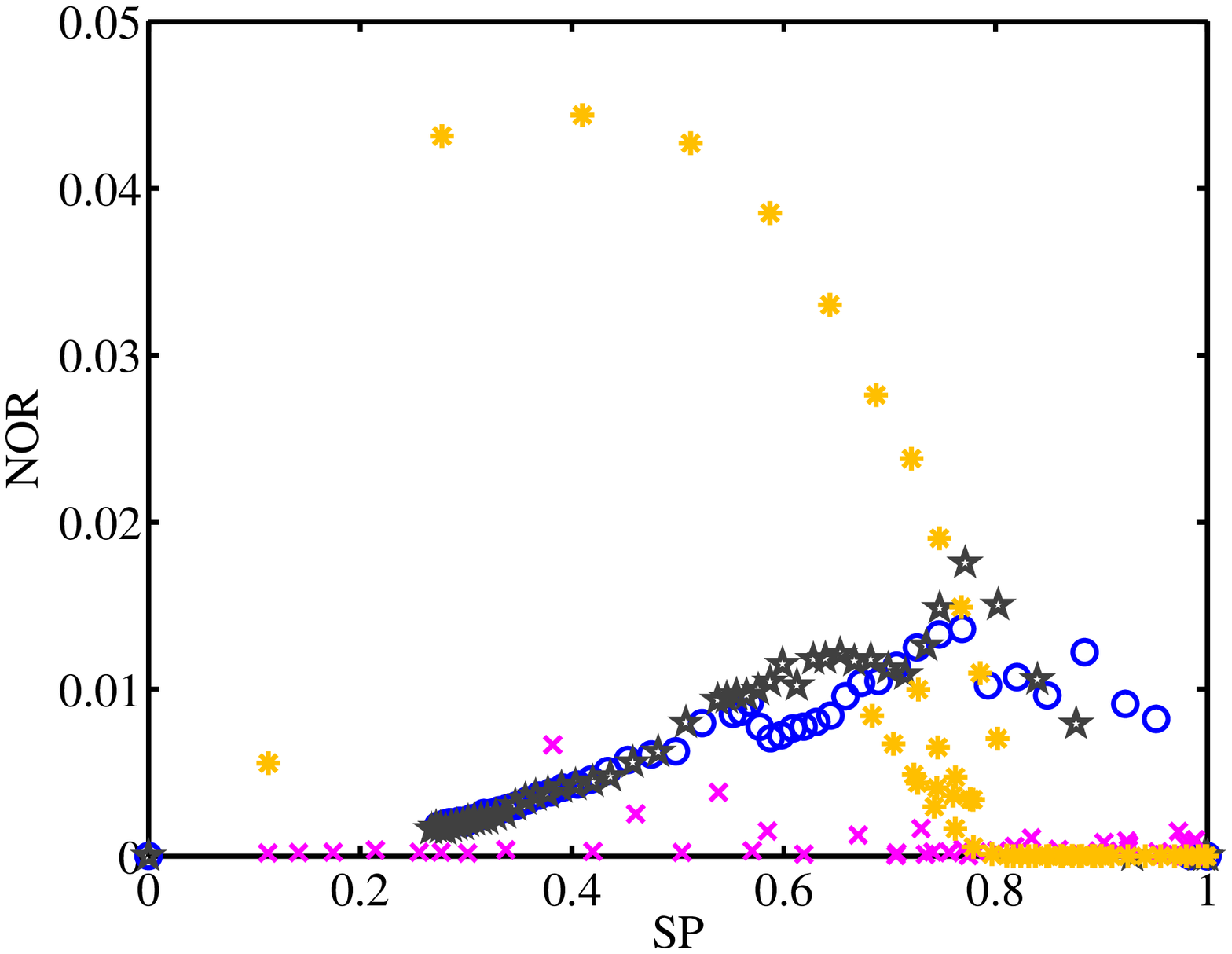}}
\subfigure[STD]{\label{fig:gene:std}\includegraphics[width=4cm]{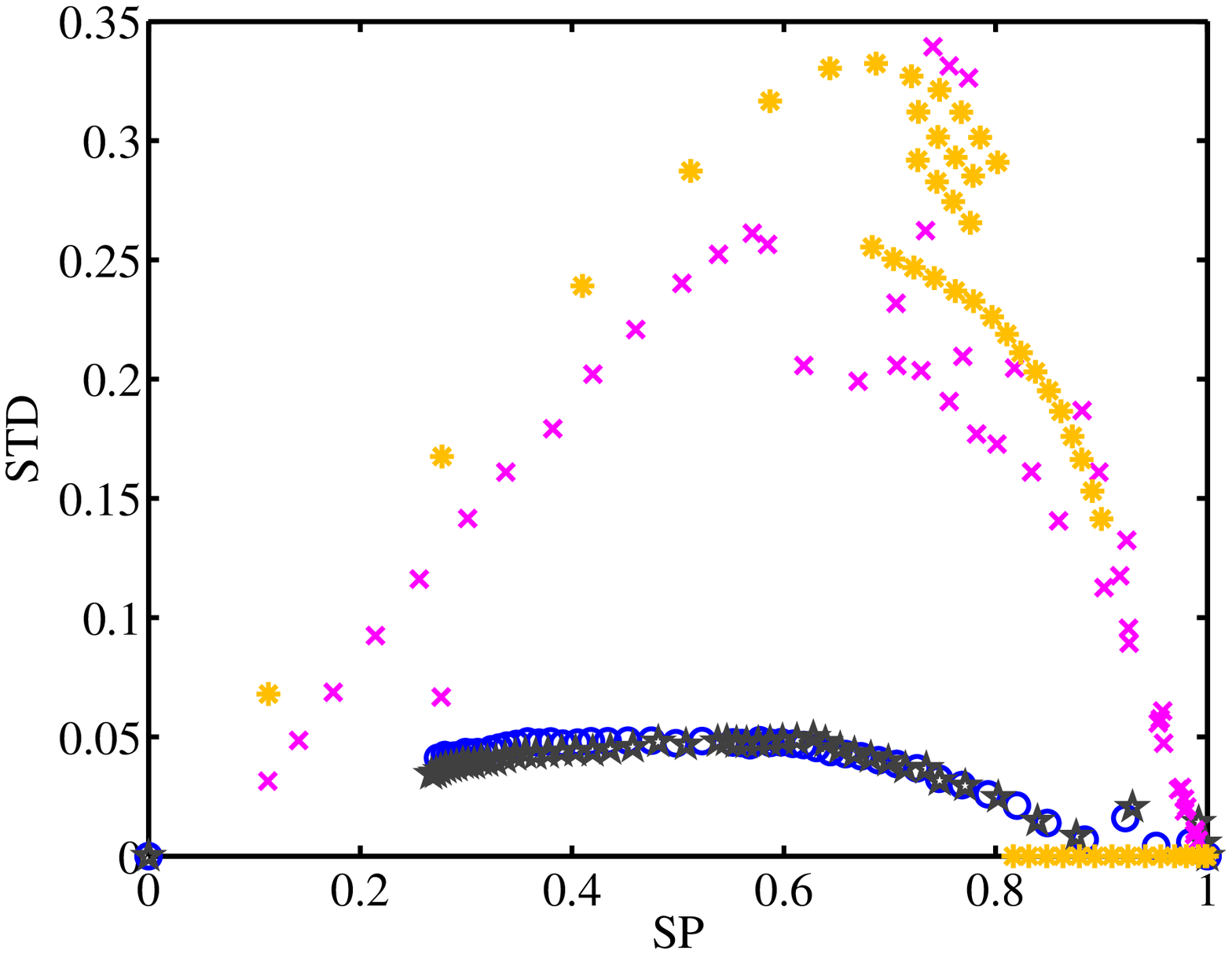}}
\subfigure[Time
cost]{\label{fig:gene:tm}\includegraphics[width=4cm]{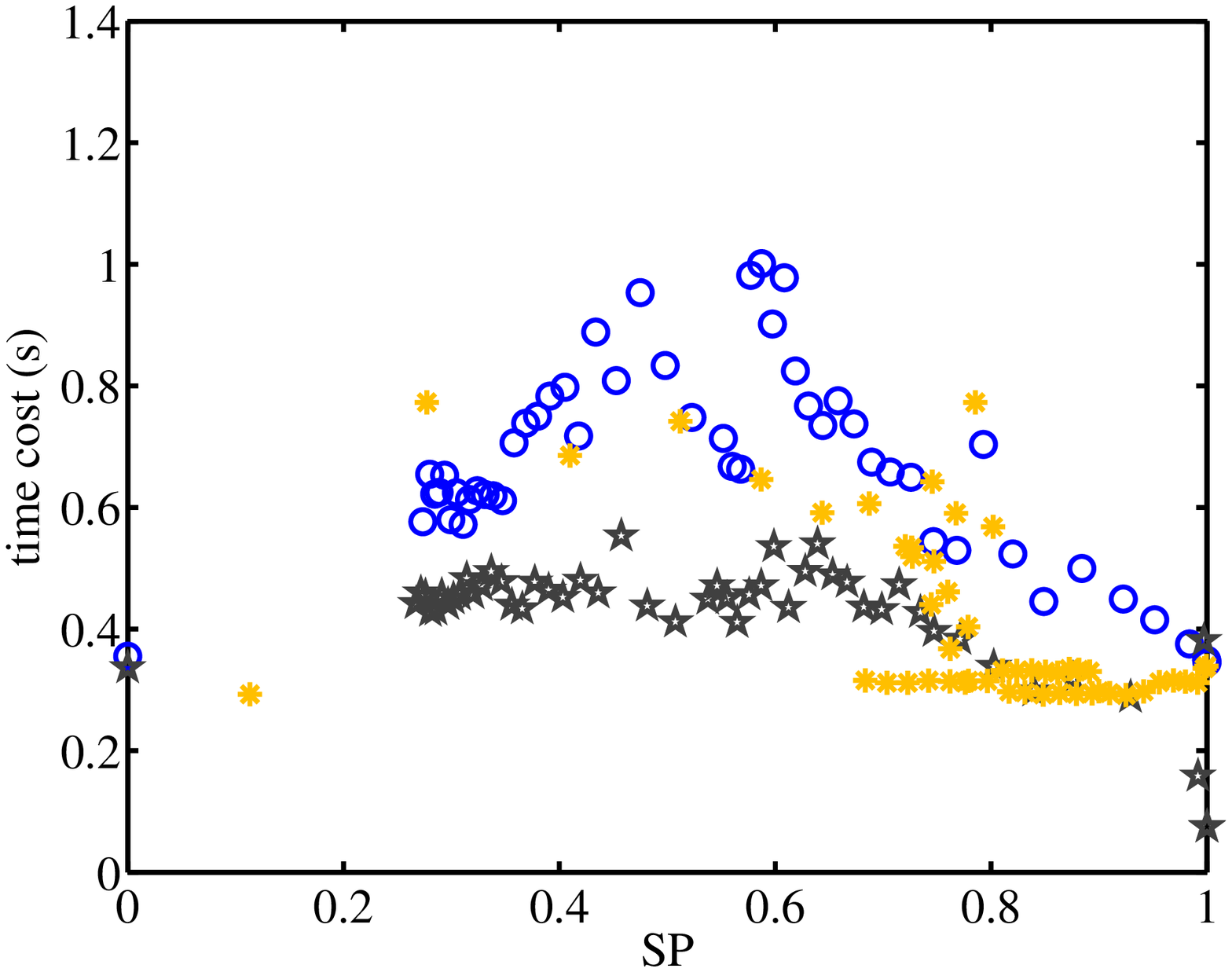}}
} \caption{SPCArt(T-$\ell_0$) vs. rSVD-GP(T-$\ell_0$), ALSPCA, and
SPCA on gene data, $r=6$. To be less messy, the other truncation
types are not shown. ALSPCA is much more costly so it is not shown
in (d). SPCArt(T-$\ell_0$) and rSVD-GP(T-$\ell_0$) perform best, and
both finish within 1 second in such high dimensional
data.}\label{fig:gene}
\end{figure*}

\subsubsection{Evolution of Solution as $r$ Increases}

Finally, we evaluate how the solution evolves as $r$ increases. $r$
is sampled so that CPEV(V) = [0.3 0.5 0.7 0.8 0.9 0.95 0.99 1]. For
simplicity, the $\lambda$'s are kept fixed, they are set as follows.
T-$\ell_0$: $1/\sqrt p$; T-sp: $\lfloor0.85p\rfloor$; T-en: $0.15$;
T-$\ell_1$: SPCArt $1/\sqrt p$, SPCA $4$, ALSPCA $0.7$. The results
are plotted in Figure~\ref{fig:varyr-other}. We can observe that:

(1) Using the same threshold, T-$\ell_1$ is always more sparse and
orthogonal than T-$\ell_0$, while explaining less variance.

(2) SPCArt is insensitive to parameter. A constant setting produces
satisfactory results across $r$'s. But it is not the case for
rSVD-GP.

(3) In contrast to the deflation algorithms (PathSPCA, rSVD-GP),
SPCArt is a block algorithm. Its solution evolves as $r$. The
sparsity, explained variance, orthogonality, and balance of sparsity
improve as $r$ increases, and it has the potential to get optimal
solution. This is evident for T-en and T-$\ell_1$ when $r$ becomes
the full dimension 169. T-$\ell_1$ perfectly recovers the natural
basis which is globally optimal; and T-en obtains similar results.
Visualized images of the loadings of the deflation and block
algorithm are shown in Figure~\ref{fig:Ximage}. {Due to the greedy
nature, the results obtained by deflation algorithm} are more
confined to those of PCA; and the first 10 loadings differ
significantly from the last 10 loadings.

\subsection{Gene Data ($n\ll p$)}
We now try the algorithms on the Leukemia dataset
\cite{golub1999molecular}, which contains 7129 genes and 72 samples,
i.e. $p\gg n$ data. This is a classical application that motivates
the development of sparse PCA. Because from the thousands of genes,
a sparse basis can help us to locate a few of them that determines
the distribution of data. The results are shown in
Figure~\ref{fig:gene}. For this type of data, SPCA is run on the
$p\gg n$ mode \cite{zou2006sparse} for efficiency. PathSPCA is very
slow except when SP$\geq 97\%$, so it is not involved in the
comparison. SPCArt(T-$\ell_0$) and rSVD-GP(T-$\ell_0$) perform best
(the later is slightly better).

\subsection{Random Data ($n>p$)}
Finally, we test the computational efficiency on a set of random
data with increasing dimensions $p$ = [100 400 700 1000 1300].
Following
\cite{aspremont2007direct,lu2009augmented,journee2010generalized},
zero-mean, unit-variance Gaussian data is used for the test. To make
how the computational cost depends on $p$ clear, we let $n=p+1$. For
fair comparison, only T-sp with $\lambda=\lfloor0.85p\rfloor$ are
tested. $r$ is set to 20. The results are shown in
Figure~\ref{fig:varyp}. rSVD-GP and PathSPCA increase nonlinearly
against $p$, while SPCArt grows much slowly. Remember in
Figure~\ref{fig:SPCArt-other:tm}, we already showed that the time
complexity of PathSPCA increases nonlinearly against the
cardinality, and from Figure~\ref{fig:varyr-tm:tm2}, we saw SPCArt
increases nonlinearly against $r$. All these are consistent with
Table~\ref{tab:timeO}. When dealing with high dimensional data and
pursuing a few loadings, SPCArt is advantageous.

\begin{figure}[thpb]
\centering{ \subfigure[Time
cost]{\label{fig:varyp:tm}\includegraphics[width=4cm]{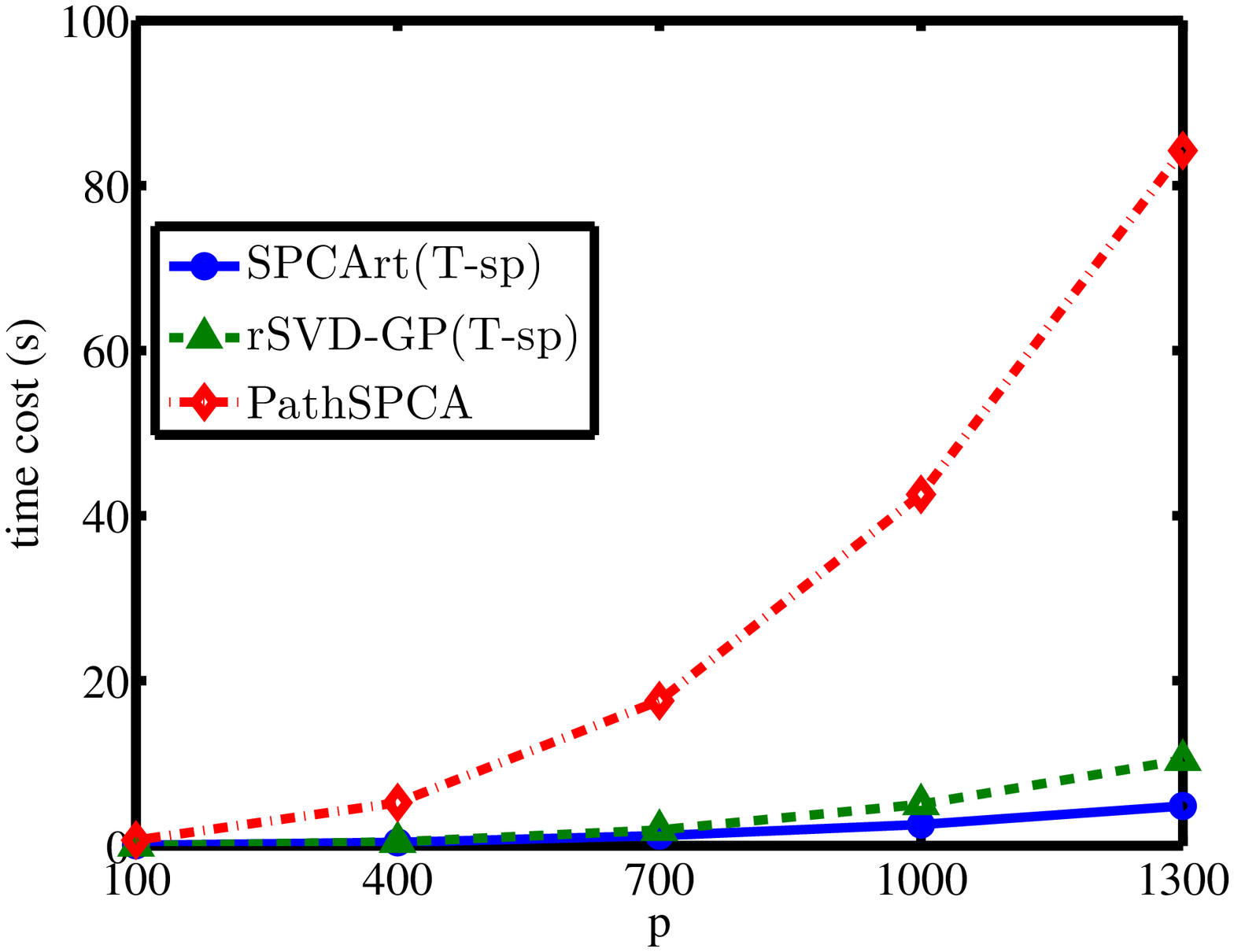}}
\subfigure[Time cost per
iteration]{\label{fig:varyp:tm-it}\includegraphics[width=4cm]{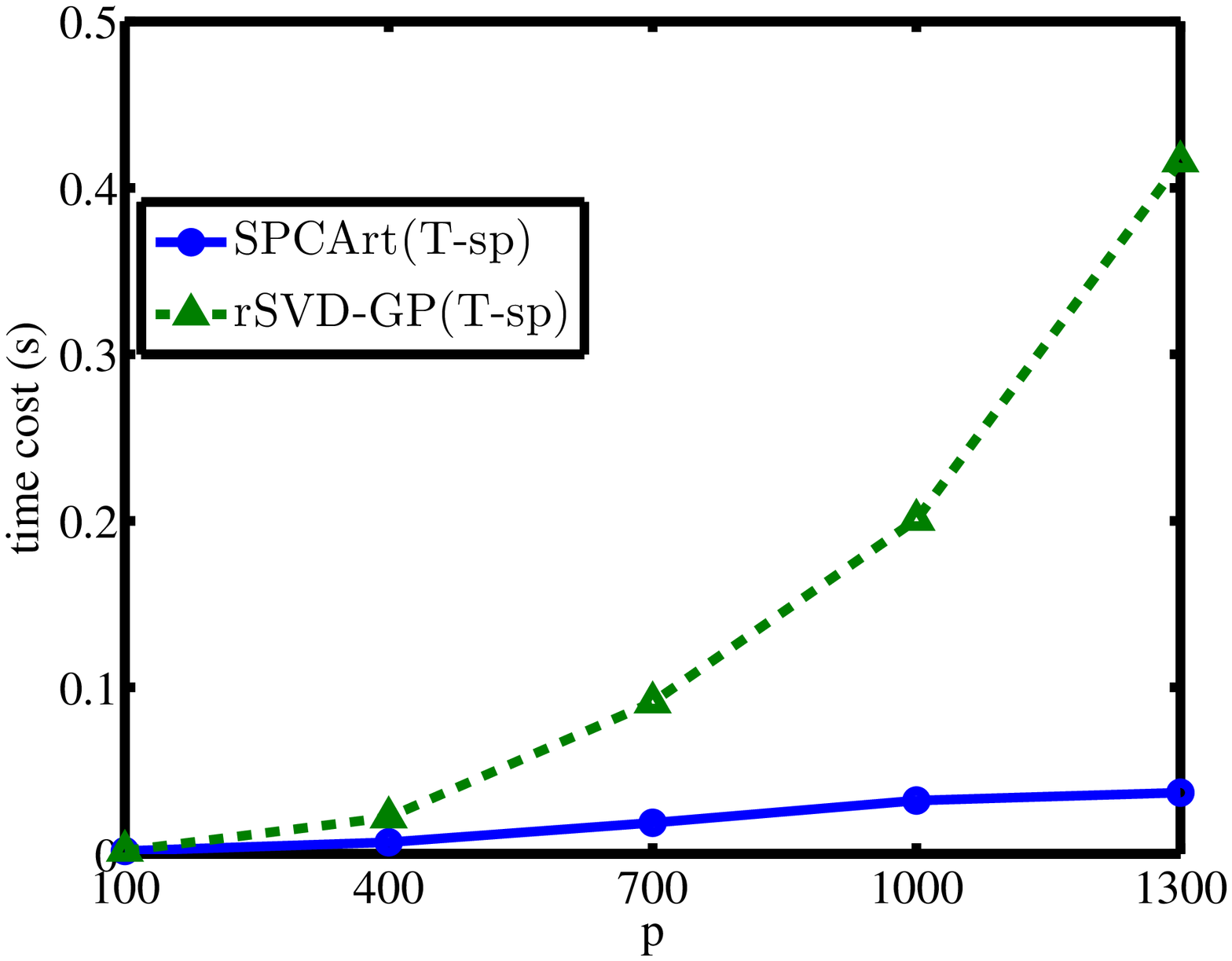}}
} \caption{Speed test on random data with increasing dimension $p$.
SPCArt grows much slowly as $p$.}\label{fig:varyp}
\end{figure}

\section{Conclusion} \label{sec:conclusion}
According to the experiments, SPCArt significantly improves simple
thresholding. rSVD-GP(B) improves GPower(B). rSVD-GP obtains
loadings more orthogonal than rSVD-GPB. SPCArt, rSVD-GP, and
PathSPCA generally perform well. PathSPCA consistently explains most
variance, but it is the most time-consuming among the three. rSVD-GP
and SPCArt perform similarly on sparsity, explained variance,
orthogonality, and balance of sparsity. However rSVD-GP is more
sensitive to parameter setting (except rSVD-GP(T-sp), i.e. TPower),
and it is a greedy deflation algorithm. SPCArt belongs to the block
group, its solution improves with the target dimension, and it has
the potential to obtain globally optimal solution.

When the sample size is larger than the dimension, the time cost of
PathSPCA and rSVD-GP go nonlinearly with the dimension, while SPCArt
increases much slowly. They can deal with high dimensional data
under different situations, SPCArt: the number of loadings is small;
rSVD-GP: the sample size is small; PathSPCA: the target cardinality
is small.

The four truncation types of SPCArt work well in different aspects:
T-$\ell_0$ hard thresholding performs well overall; T-$\ell_1$ soft
thresholding gets best sparsity and orthogonality; T-sp hard
sparsity constraint directly controls sparsity and has zero sparsity
variance; T-en truncation by energy guarantees explained variance,
and the performance bound is tight.

There are two open questions unresolved. (1) Under what conditions
can SPCArt, with each truncation type, recover the underlying sparse
basis? Efforts have been made recently on this problem
\cite{amini2009high, paul2012augmented, yuan2013truncated,
ma2013sparse}. (2) Is there any explicit objective formulation for
T-en?

\appendices
\section{Proof of the solution of T-sp}\label{sec:app T-sp}
When $R$ is fixed, define $Z=VR^T$, (\ref{equ:SPCArtsp}) becomes $r$
independent subproblems:
\begin{equation}\label{equ:SPCArtsp-X}
\min_{X_i}\;\|Z_i-X_i\|^2_F,\;s.t.\,\|X_i\|_0\leq
p-\lambda,\;\|X_i\|_2=1.
\end{equation}

\begin{proposition}
$X_i^*=P_\lambda(Z_i)/\|P_\lambda(Z_i)\|_2$ is the solution of
(\ref{equ:SPCArtsp-X}).
\end{proposition}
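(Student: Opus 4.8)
The plan is to convert this constrained least-squares problem into a support-selection problem and then close it with Cauchy--Schwarz. Writing $z=Z_i$ and $x=X_i$ for brevity, I would first expand the objective as $\|z-x\|_2^2=\|z\|_2^2-2z^Tx+\|x\|_2^2$. Since $\|z\|_2$ is a constant and the constraint fixes $\|x\|_2=1$, minimizing $\|z-x\|_2^2$ is equivalent to maximizing the inner product $z^Tx$ subject to $\|x\|_0\leq p-\lambda$ and $\|x\|_2=1$. This reduction is what removes the quadratic coupling and lets the combinatorial structure surface.

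Next I would decouple the maximization into an inner optimization over values and an outer optimization over supports. For any fixed index set $S$ with $|S|\leq p-\lambda$, restricting $x$ to be supported on $S$ and applying Cauchy--Schwarz gives $z^Tx=\sum_{j\in S}z_jx_j\leq\sqrt{\sum_{j\in S}z_j^2}$, with equality exactly when $x_j=z_j/\sqrt{\sum_{k\in S}z_k^2}$ for $j\in S$ and $x_j=0$ otherwise. Hence the best attainable objective value for a given support equals $\|z_S\|_2$, the energy of $z$ restricted to $S$, and the optimizing $x$ on that support is the normalized restriction of $z$.

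It then remains to choose $S$ so as to maximize $\|z_S\|_2$, i.e. to maximize $\sum_{j\in S}z_j^2$, over all $S$ with $|S|\leq p-\lambda$. Because every retained coordinate contributes a nonnegative term $z_j^2$, the restricted energy is nondecreasing as $S$ grows, so the budget is used fully and $S$ should collect the $p-\lambda$ coordinates of largest absolute value; this is precisely the support retained by $P_\lambda(z)$, which zeros the $\lambda$ smallest entries. Substituting the inner optimum back in yields $x^*=z_S/\|z_S\|_2=P_\lambda(z)/\|P_\lambda(z)\|_2$, as claimed. Since $z$ is a unit-length rotated PCA loading and $\lambda\leq p-1$, the largest-magnitude entry survives, so $P_\lambda(z)\neq 0$ and the normalization is well defined.

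I do not anticipate a serious obstacle; the single point requiring care is the two-stage nature of the argument, namely that one must first fix the support and optimize over values (where Cauchy--Schwarz is exact) before optimizing over supports, rather than conflating the two steps. A minor subtlety is that ties among the smallest-magnitude entries can make the optimal support non-unique, but every such choice attains the same optimal value $\|z_S\|_2$, so the stated solution remains a valid minimizer.
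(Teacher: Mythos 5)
Your proof is correct and follows essentially the same route as the paper's: reduce the constrained least-squares problem to maximizing $Z_i^TX_i$ under the cardinality and unit-norm constraints, optimize over values for a fixed support (where the paper implicitly uses the same Cauchy--Schwarz equality case you make explicit), and then pick the support maximizing the retained energy, i.e.\ the largest $p-\lambda$ entries. Your added remarks on ties and on $P_\lambda(z)\neq 0$ are fine but not needed for the argument.
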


\begin{proof}
The problem is equivalent to $\max_{X_i} Z_i^TX_i$, s.t.
$\|X_i\|_0\leq p-\lambda, \|X_i\|_2=1$. We first prove that the
non-zeros of $X_i^*$ are the normalized entries of $Z_i$ in the same
support as $X_i^*$, then prove $\|X_i^*\|_0=p-\lambda$ and the
support corresponds to the largest entries of $Z_i$. Assume the
support of $X_i^*$ is $\mathcal{S}$. Divide $Z_i$ into two parts
$Z_i=\tilde{Z}_i+\bar{Z}_i$, where $\tilde{Z}_i$ has the same
support as $X_i^*$, and $\bar{Z}_i$ has the remaining support. The
problem is reduced to $\max_{X_i} \tilde{Z}_i^TX_i$, s.t.
$\text{support}(X_i)=\mathcal{S}$, $\|X_i\|_2=1$. The solution is
$X_i^*=\tilde{Z}_i/\|\tilde{Z}_i\|_2$. Next, since
$Z_i^T\tilde{Z}_i/\|\tilde{Z}_i\|_2=\|\tilde{Z}_i\|_2$, to achieve a
minima, $\|\tilde{Z}_i\|_2$ should be as large as possible. That is
the largest $p-\lambda$ entries of $Z_i$.
\end{proof}

\section{Proofs of Performance Bounds of SPCArt}\label{sec:app
bounds} Many of the results can be proven by studying the special
case $z=(1,0,\dots,0)^T$ and $z=(1/\sqrt p,\dots,1/\sqrt p)^T$. We
mainly focus on the less straightforward ones.

\subsection{Sparsity and Deviation}\label{sec:app sp and dv}

\begin{l0}
For T-$\ell_0$, the sparsity bounds are
\begin{equation*}
\begin{cases}
  0\leq s(x) \leq 1-\frac{1}{p} & \text{, $\lambda<\frac{1}{\sqrt{p}}$,} \\
  1-\frac{1}{p\lambda^2} < s(x) \leq 1 & \text{, $\lambda\geq\frac{1}{\sqrt{p}}$.}
\end{cases}
\end{equation*}
Deviation $\sin(\theta(x,z))=\|\bar{z}\|_2$, where $\bar{z}$ is the
truncated part: $\bar{z}_i=z_i$ if $x_i=0$, and $\bar{z}_i=0$
otherwise. The absolute bounds are:
\begin{equation*}
0\leq \sin(\theta(x,z)) \leq
\begin{cases}
  \sqrt{p-1}\lambda & \text{, $\lambda<\frac{1}{\sqrt{p}}$,} \\
  1 & \text{, $\lambda\geq\frac{1}{\sqrt{p}}$.}
\end{cases}
\end{equation*}
All the above bounds are achievable.
\end{l0}

\begin{proof}
We only prove $1-\frac{1}{p\lambda^2}\leq s(x)$, if
$\lambda\geq\frac{1}{\sqrt{p}}$. The others are easy to obtain. Let
$\tilde{z}=z-\bar{z}$, i.e. the part above $\lambda$, and let
$k=\|\tilde{z}\|_0$, then $k\lambda^2<\|\tilde{z}\|^2_2\leq 1$. So
$k<1/\lambda^2$. Since $\|x\|_0=\|\tilde{z}\|_0$,
$s(x)=1-\|x\|_0/p>1-1/(p\lambda^2)$.%\qed
\end{proof}

\begin{l1}
For T-$\ell_1$, the bounds of $s(x)$ and lower bound of
$\sin(\theta(x,z))$ are the same as T-$\ell_0$. In addition, there
are relative deviation bounds
\begin{equation*}
\|\bar{z}\|_2\leq\sin(\theta(x,z)) <
\sqrt{\|\bar{z}\|^2_2+\lambda^2\|x\|_0 }.
\end{equation*}
\end{l1}

\begin{proof}
Let $\tilde{z}=z-\bar{z}$, $\hat{z}=S_\lambda(z)$ and
$y=\tilde{z}-\hat{z}$. Note that the absolute value of nonzero entry
of $y$ is $\lambda$, and $\|y\|_2=\lambda
\sqrt{\|\tilde{z}\|_0}=\lambda \sqrt{\|x\|_0}$. Then,
\begin{equation}\label{equ:1}
\cos(\theta(x,z))=\cos(\theta(\hat{z},z))=\hat{z}^Tz/\|\hat{z}\|_2.
\end{equation}
Expand $z=\hat{z}+y+\bar{z}$ and note that $\bar{z}$ is orthogonal
to $\tilde{z}$ and $\hat{z}$, since their support do not overlap. We
have,
\begin{equation}\label{equ:2}
\hat{z}^Tz=\|\hat{z}\|^2_2+\hat{z}^Ty.
\end{equation}
By the soft thresholding operation,
\begin{equation}\label{equ:3}
0<\hat{z}^Ty\leq\|\hat{z}\|_2\|y\|_2.
\end{equation}
Combining (\ref{equ:1}), (\ref{equ:2}) and (\ref{equ:3}), we have
$\|\hat{z}\|_2< \cos(\theta(x,z)) \leq \|\hat{z}\|_2+\|y\|_2$. Note
that the upper bound of (\ref{equ:3}) is achieved when $\hat{z}$ and
$y$ are in the same direction, and in this case,
$\|\hat{z}\|_2+\|y\|_2=\|\tilde{z}\|_2$. So $\|\hat{z}\|_2<
\cos(\theta(x,z)) \leq \|\tilde{z}\|_2$. Then
$1-\|\tilde{z}\|^2_2\leq \sin^2(\theta(x,z)) <1-\|\hat{z}\|^2_2$.
The upper bound is approached when $\hat{z}$ becomes orthogonal to
$y$, in this case
$\|\hat{z}\|^2_2+\|y\|^2_2+\|\bar{z}\|^2_2=\|z\|^2_2=1$. Hence,
$1-\|\hat{z}\|^2_2=\|\bar{z}\|^2_2+\|y\|^2_2=\|\bar{z}\|^2_2+\lambda^2
\|x\|_0$. Besides, $1-\|\tilde{z}\|^2_2=\|\bar{z}\|^2_2$. The final
result is $\|\bar{z}\|_2\leq\sin(\theta(x,z))
<\sqrt{\|\bar{z}\|^2_2+\lambda^2\|x\|_0 }$.%\qed
\end{proof}

\vspace{2ex} Proposition~\ref{theo:sp} can be proved in a way
similar to T-en.

\begin{en}
For T-en, $0\leq \sin(\theta(x,z))\leq \sqrt{\lambda}$. In addition
\begin{equation*}
\lfloor{\lambda p}\rfloor/p\leq s(x)\leq 1-1/p.
\end{equation*}
If $\lambda<1/p$, there is no sparsity guarantee. When $p$ is
moderately large, $\lfloor{\lambda p}\rfloor/p\approx \lambda$.
\end{en}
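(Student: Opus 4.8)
The plan is to handle the two conclusions—the deviation bound and the sparsity bounds—separately, since the deviation falls straight out of the energy definition while the sparsity lower bound needs a short averaging argument. Throughout I would write $z=\tilde{z}+\bar{z}$, where $\tilde{z}=E_\lambda(z)$ is the retained part and $\bar{z}$ is the truncated part, so that the two have disjoint supports and hence $\tilde{z}^T\bar{z}=0$ and $\|z\|_2^2=\|\tilde{z}\|_2^2+\|\bar{z}\|_2^2$.

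For the deviation, note that $x=\tilde{z}/\|\tilde{z}\|_2$ points in the same direction as $\tilde{z}$, so $\theta(x,z)=\theta(\tilde{z},z)$. A direct computation gives $\cos(\theta(x,z))=\tilde{z}^Tz/(\|\tilde{z}\|_2\|z\|_2)=\|\tilde{z}\|_2/\|z\|_2$, using $\tilde{z}^Tz=\|\tilde{z}\|_2^2$. Hence $\sin(\theta(x,z))=\|\bar{z}\|_2/\|z\|_2$, i.e. the deviation is exactly the square root of the truncated energy fraction. By the very definition of $E_\lambda$ this fraction is at most $\lambda$, giving $\sin(\theta(x,z))\le\sqrt{\lambda}$, while the lower bound $0$ is attained when nothing is truncated.

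For sparsity, observe that $x$ shares its support with $\tilde{z}$, so if $k$ entries are zeroed then $\|x\|_0=p-k$ and $s(x)=k/p$. The upper bound $s(x)\le 1-1/p$ is equivalent to $k\le p-1$: truncating all $p$ entries would remove the full energy, forcing the fraction $1\le\lambda$, which contradicts $\lambda<1$; hence at least one entry always survives. The lower bound $s(x)\ge\lfloor\lambda p\rfloor/p$ is the crux. Writing $m=\lfloor\lambda p\rfloor$ (the case $m=0$ being vacuous), I would show that the smallest $m$ squared entries always carry energy at most $\lambda\|z\|_2^2$, which forces $E_\lambda$ to zero at least $m$ of them, i.e. $k\ge m$. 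Letting $a_1\le\cdots\le a_p$ be the sorted squared entries, the key inequality is that the average of the smallest $m$ does not exceed the overall average $\|z\|_2^2/p$: chaining $\tfrac{1}{m}\sum_{i\le m}a_i\le a_m\le a_{m+1}\le\tfrac{1}{p-m}\sum_{i>m}a_i$ and clearing denominators yields $p\sum_{i\le m}a_i\le m\|z\|_2^2$, so $\sum_{i\le m}a_i\le (m/p)\|z\|_2^2\le\lambda\|z\|_2^2$ because $m\le\lambda p$.

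The main obstacle is exactly this averaging inequality for the lower bound; the rest is bookkeeping with the orthogonal decomposition. The two closing remarks follow immediately: $\lambda<1/p$ forces $\lfloor\lambda p\rfloor=0$, collapsing the lower bound to the trivial $s(x)\ge0$, and since $\lfloor\lambda p\rfloor/p$ differs from $\lambda$ by less than $1/p$, it approximates $\lambda$ whenever $p$ is moderately large. Proposition~\ref{theo:sp} for T-sp then follows from the same decomposition, with the energy threshold replaced by a cardinality count on the retained support.
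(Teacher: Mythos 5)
Your proposal is correct and follows essentially the same route as the paper: the deviation bound falls out of the orthogonal decomposition $z=\tilde{z}+\bar{z}$ and the definition of $E_\lambda$, and the sparsity lower bound rests on exactly the same key comparison — the mean of the smallest $\lfloor\lambda p\rfloor$ sorted squared entries versus the mean of the remaining ones. The only cosmetic difference is that you run this averaging argument directly (showing the smallest $m=\lfloor\lambda p\rfloor$ entries carry at most an $m/p\leq\lambda$ fraction of the energy, so at least $m$ entries are truncated), whereas the paper phrases the identical comparison as a proof by contradiction (assuming fewer than $\lfloor\lambda p\rfloor$ entries are truncated and deriving a violation of the sorted order).
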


\begin{proof}
Sort squared elements of $z$ in ascending order, and assume they are
$\hat{z}^2_1\leq \hat{z}^2_2 \leq \cdots \leq \hat{z}^2_p$ and the
first $k$ of them are truncated. If $z$ is uniform, i.e.
$\hat{z}^2_1=\hat{z}^2_p=1/p$, then the number of truncated entries
is $k_0= \lfloor{\lambda p}\rfloor$. Suppose $\exists z$ achieves
$k<k_0$, then $\sum^{k_0}_{i=1}\hat{z}^2_i$ is greater than that of
uniform case i.e. $\sum^{k_0}_{i=1}\hat{z}^2_i>k_0/p$. By the
ordering, $\hat{z}^2_{k_0}$ is above the mean of the first $k_0$
entries, $\hat{z}^2_{k_0}\geq 1/k_0
\sum^{k_0}_{i=1}\hat{z}^2_i>1/p$. But on the other hand,
$\hat{z}^2_{k_0+1}$ is below the mean of the remaining part,
$\hat{z}^2_{k_0+1}\leq 1/(p-k_0) \sum^{p}_{i=k_0+1}\hat{z}^2_i<
1/(p-k_0) (1-k_0/p)=1/p<\hat{z}^2_{k_0}$, i.e.
$\hat{z}^2_{k_0+1}<\hat{z}^2_{k_0}$ which is a contradiction. Thus,
$\lfloor{\lambda p}\rfloor/p\leq s(x)$.%\qed
\end{proof}

\subsection{Explained Variance}\label{sec:app ev}

\begin{evdmin}
Let rank $r$ SVD of $A\in\mathbb{R}^{n\times p}$ be $U\Sigma V^T$,
$\Sigma\in\mathbb{R}^{r\times r}$. Given $X\in\mathbb{R}^{p\times
r}$, assume SVD of $X^TV$ is $WDQ^T$, $D\in\mathbb{R}^{r\times r}$,
$d_{min}=\min_i D_{ii}$. Then
\begin{equation*}
d_{min}^2\cdot EV(V)\leq EV(X),
\end{equation*}
and $EV(V)=\sum_i \Sigma^2_{ii}$.
\end{evdmin}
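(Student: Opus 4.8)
The plan is to reduce the explained variance of $X$ to a single trace involving the diagonal matrix $D$ of singular values of $X^TV$, and then bound that trace from below by replacing each $D_{ii}$ with $d_{min}$. First I would rewrite $EV(X)$ in terms of $V$ and $\Sigma$. Since the rank-$r$ SVD gives $A=U\Sigma V^T$ with $U^TU=I$ and $V^TV=I$, we have $A^TA=V\Sigma^2 V^T$, so $EV(X)=tr(X^TA^TAX)=tr\bigl((V^TX)^T\Sigma^2(V^TX)\bigr)$. Taking $X=V$ and using $V^TV=I$ immediately recovers the second claim, $EV(V)=tr(\Sigma^2)=\sum_i\Sigma_{ii}^2$.

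Next I would substitute the given SVD $X^TV=WDQ^T$, which yields $V^TX=QDW^T$, and hence $EV(X)=tr\bigl(WDQ^T\Sigma^2 QDW^T\bigr)$. Using cyclic invariance of the trace together with $W^TW=I$, this collapses to $EV(X)=tr(DQ^T\Sigma^2 QD)$. Writing $P=Q^T\Sigma^2 Q$, which is symmetric positive semidefinite, and using that $D$ is diagonal, the trace expands to the scalar form $EV(X)=\sum_i D_{ii}^2 P_{ii}$.

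The finish is then immediate and is really the crux: the diagonal entries $P_{ii}$ of the positive semidefinite matrix $P$ are nonnegative, and because $Q$ is orthogonal we have $\sum_i P_{ii}=tr(P)=tr(Q^T\Sigma^2 Q)=tr(\Sigma^2)=EV(V)$. Since $D_{ii}^2\geq d_{min}^2$ for every $i$, it follows that $EV(X)=\sum_i D_{ii}^2 P_{ii}\geq d_{min}^2\sum_i P_{ii}=d_{min}^2\,EV(V)$. I do not expect a genuine obstacle here; the only things to watch are the bookkeeping of transposes in passing from $X^TV=WDQ^T$ to $V^TX=QDW^T$, and the recognition that $\sum_i P_{ii}$ equals $EV(V)$ by orthogonality of $Q$ — this is exactly what turns the per-coordinate weighting by $D_{ii}^2$ into a clean scalar factor $d_{min}^2$.
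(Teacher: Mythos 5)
There is a genuine gap at your very first step. The hypothesis gives the rank-$r$ \emph{truncated} SVD $U\Sigma V^T$ of $A$, not a full factorization: unless $\mathrm{rank}(A)\leq r$, we have $A\neq U\Sigma V^T$, and hence $A^TA\neq V\Sigma^2V^T$. In the paper's setting $V$ contains only the first $r$ PCA loadings of the data matrix, so your claimed identity $EV(X)=tr\bigl((V^TX)^T\Sigma^2(V^TX)\bigr)$ is false in general. The correct statement is an inequality: writing the full eigendecomposition $A^TA=V\Lambda V^T+V_2\Lambda_2V_2^T$ with $\Lambda=\Sigma^2$ and $V_2$ the trailing eigenvectors (this is exactly how the paper's proof begins), one gets $EV(X)=tr(X^TV\Lambda V^TX)+tr(X^TV_2\Lambda_2V_2^TX)\geq tr(X^TV\Lambda V^TX)$, where the discarded term is nonnegative because it is the trace of a positive semidefinite matrix. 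Fortunately the error is in the favorable direction, so your proof is repaired simply by replacing your equality with this inequality. Note also that your derivation of $EV(V)=\sum_i\Sigma_{ii}^2$ needs $V^TV_2=0$ (orthogonality of the full set of right singular vectors) in addition to $V^TV=I$, so that the cross term $tr(V^TV_2\Lambda_2V_2^TV)$ vanishes.

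From that point on, your argument coincides with the paper's: substituting $V^TX=QDW^T$, using cyclic invariance of the trace and $W^TW=I$ to reduce to $tr(DQ^T\Sigma^2QD)$, and then bounding $\sum_i D_{ii}^2P_{ii}\geq d_{min}^2\,tr(P)=d_{min}^2\,EV(V)$ with $P=Q^T\Sigma^2Q$ positive semidefinite. This is exactly the paper's chain $tr(WDQ^T\Lambda QDW^T)=tr(Q^T\Lambda QD^2)\geq d_{min}^2\,tr(Q^T\Lambda Q)$; your explicit scalar expansion $\sum_i D_{ii}^2P_{ii}$ is a transparent way of justifying that last inequality, but it is the same idea, not a different route.
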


\begin{proof}
Let SVD of $A^TA$ be $[V,\,V_2]
\begin{bmatrix}
\Lambda&\\&\Lambda_2
\end{bmatrix}
\begin{bmatrix}
V^T\\V^T_2
\end{bmatrix}
$, where $\Lambda=\Sigma^2$ and subscript 2 associates with the
remaining loadings. Then
\begin{equation*}
\begin{aligned}
tr(X^TA^TAX) &=tr(X^T[V,\,V_2]
\begin{bmatrix}
\Lambda&\\&\Lambda_2
\end{bmatrix}
\begin{bmatrix}
V^T\\V^T_2
\end{bmatrix}X)\\
&=tr(X^TV\Lambda V^TX)+ tr(X^TV_2\Lambda_2V^T_2X)\\
&\geq tr(X^TV\Lambda V^TX)\\
&=tr(WDQ^T\Lambda QDW^T)\\
&=tr(Q^T\Lambda QD^2)\\
&\geq tr(Q^T\Lambda Q)d^2_{min}\\
&=d^2_{min}\sum_i \Lambda_{ii}.
\end{aligned}
\end{equation*}%\qed
\end{proof}

\begin{evcos}
Let $C=Z^TX$, i.e. $C_{ij}=\cos(\theta(Z_i,X_j))$, and let $\bar{C}$
be $C$ with diagonal elements removed. Assume
$\theta(Z_i,X_i)=\theta$ and $\sum^r_{j}C_{ij}^2\leq 1$, $\forall
i$, then
\begin{equation*}
(\cos^2(\theta)-\sqrt{r-1}\sin(2\theta))\cdot EV(V)\leq EV(X).
\end{equation*}
When $\theta$ is sufficiently small,
\begin{equation*}
(\cos^2(\theta)-O(\theta))\cdot EV(V)\leq EV(X).
\end{equation*}
\end{evcos}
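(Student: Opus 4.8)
The plan is to lower-bound $EV(X)=tr(X^TA^TAX)$ by first discarding the part of $X$ that lies outside the principal subspace, and then reducing the remaining quantity to a single trace inequality involving $C$. Writing $A^TA=V\Lambda V^T+V_2\Lambda_2V_2^T$ as in the proof of Theorem~\ref{theo:dminev} (with $\Lambda=\Sigma^2$), the residual $V_2\Lambda_2V_2^T$ is PSD, so $tr(X^TV_2\Lambda_2V_2^TX)\ge0$ and hence $EV(X)\ge tr(X^TV\Lambda V^TX)$. Intuitively, the components of $X$ orthogonal to $\mathrm{span}(V)$ can only add nonnegative variance, so dropping them is safe for a lower bound.

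Next I would move to the rotated frame. Since $Z=VR^T$ with $R$ square orthogonal, we have $V=ZR$, hence $V^TX=R^TZ^TX=R^TC$. Therefore $tr(X^TV\Lambda V^TX)=tr\!\big(C^TR\Lambda R^TC\big)=tr(\tilde\Lambda\,CC^T)$, where $\tilde\Lambda:=R\Lambda R^T$ is PSD with the same spectrum as $\Lambda$; in particular $tr(\tilde\Lambda)=EV(V)$ and $\|\tilde\Lambda\|_F^2=\sum_i\Lambda_{ii}^2$. The whole problem is now to bound $tr(\tilde\Lambda\,CC^T)$ from below using the two assumptions.

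Here the assumptions enter. The uniform-angle assumption gives $C_{ii}=\cos\theta$, so I split $C=\cos\theta\,I+\bar C$ (this is exactly the $\bar C$ of the statement). Expanding $CC^T=\cos^2\theta\,I+\cos\theta(\bar C+\bar C^T)+\bar C\bar C^T$ and applying $tr(\tilde\Lambda\,\cdot\,)$, the first term contributes exactly $\cos^2\theta\,EV(V)$; the term $tr(\tilde\Lambda\,\bar C\bar C^T)$ is nonnegative since it is the trace of a product of two PSD matrices; and, using symmetry of $\tilde\Lambda$, the cross term equals $2\cos\theta\,\langle\tilde\Lambda,\bar C\rangle$. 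Because $\bar C$ has zero diagonal, Cauchy--Schwarz yields $|\langle\tilde\Lambda,\bar C\rangle|\le\|\tilde\Lambda_{\mathrm{off}}\|_F\,\|\bar C\|_F$, where $\tilde\Lambda_{\mathrm{off}}$ denotes the off-diagonal part of $\tilde\Lambda$.

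The final and most delicate step is estimating these two Frobenius norms, which is where the precise constant $\sqrt{r-1}$ is obtained. The row assumption $\sum_jC_{ij}^2\le1$ with $C_{ii}=\cos\theta$ gives $\sum_{j\ne i}C_{ij}^2\le\sin^2\theta$, so $\|\bar C\|_F^2\le r\sin^2\theta$. For $\tilde\Lambda$ I would write $\|\tilde\Lambda_{\mathrm{off}}\|_F^2=\|\tilde\Lambda\|_F^2-\sum_i\tilde\Lambda_{ii}^2=\sum_i\Lambda_{ii}^2-\sum_i\tilde\Lambda_{ii}^2$, then use $\sum_i\Lambda_{ii}^2\le(\sum_i\Lambda_{ii})^2=EV(V)^2$ together with the Cauchy--Schwarz bound $\sum_i\tilde\Lambda_{ii}^2\ge(\sum_i\tilde\Lambda_{ii})^2/r=EV(V)^2/r$ (valid since $\tilde\Lambda_{ii}\ge0$), giving $\|\tilde\Lambda_{\mathrm{off}}\|_F\le EV(V)\sqrt{(r-1)/r}$. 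Multiplying the two norm bounds collapses $\sqrt r\cdot\sqrt{(r-1)/r}$ into $\sqrt{r-1}$, and with $2\sin\theta\cos\theta=\sin(2\theta)$ the cross term is at least $-\sqrt{r-1}\sin(2\theta)\,EV(V)$; collecting the three contributions yields the stated bound, and the small-$\theta$ form follows from $\sin(2\theta)=O(\theta)$. I expect this last estimate to be the main obstacle: the crude bound $\|\tilde\Lambda_{\mathrm{off}}\|_F\le\|\tilde\Lambda\|_F\le EV(V)$ only delivers the weaker constant $\sqrt r$, so the improvement to $\sqrt{r-1}$ relies essentially on also lower-bounding the diagonal energy $\sum_i\tilde\Lambda_{ii}^2$.
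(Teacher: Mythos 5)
Your proposal is correct and reaches exactly the paper's constant, but the crucial estimate is carried out by a different tool. The skeleton coincides with the paper's proof: drop the PSD residual $tr(X^TV_2\Lambda_2V_2^TX)\ge 0$, rewrite $tr(X^TV\Lambda V^TX)=tr\bigl(R\Lambda R^T\,CC^T\bigr)$, split $C=\cos\theta\,I+\bar C$, and discard $tr(\tilde\Lambda\,\bar C\bar C^T)\ge 0$, leaving the cross term to be controlled. At that point the paper argues spectrally: with $S=\bar C+\bar C^T$ it uses $tr(\tilde\Lambda S)\ge\lambda_{min}(S)\,tr(\tilde\Lambda)$ and bounds $\lambda_{min}(S)\ge -2\sqrt{r-1}\sin\theta$ by the Gershgorin circle theorem, which requires controlling both the rows of $C$ (from the assumption $\sum_j C_{ij}^2\le 1$) and the columns of $C$ (from orthonormality of $Z$, via $\|Z^TX_i\|_2\le\|X_i\|_2=1$). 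You instead bound the cross term by the Frobenius Cauchy--Schwarz inequality $|\langle\tilde\Lambda,\bar C\rangle|\le\|\tilde\Lambda_{\mathrm{off}}\|_F\|\bar C\|_F$, using only the row assumption for $\|\bar C\|_F^2\le r\sin^2\theta$, and you compensate with a sharper estimate of the off-diagonal mass of $\tilde\Lambda$: combining $\|\tilde\Lambda\|_F^2=\sum_i\Lambda_{ii}^2\le\bigl(\sum_i\Lambda_{ii}\bigr)^2$ with the diagonal lower bound $\sum_i\tilde\Lambda_{ii}^2\ge EV(V)^2/r$ to get $\|\tilde\Lambda_{\mathrm{off}}\|_F\le EV(V)\sqrt{(r-1)/r}$. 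The two refinements are complementary: the paper puts all the structure into the spectrum of $S$ and uses only the trace and PSD-ness of $\tilde\Lambda$, whereas you put the structure into $\tilde\Lambda$ (trace, Frobenius norm, nonnegative spectrum) and use only a crude norm bound on $\bar C$. A notable payoff of your route is that it never invokes the orthonormality of $Z$ (the column bound), so it proves the same inequality under strictly weaker hypotheses; both routes nevertheless collapse to the identical constant $\sqrt{r-1}\sin(2\theta)$, and your small-$\theta$ conclusion follows just as in the paper.
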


\begin{proof}
Following the notations of the previous theorem,
\begin{equation*}
\begin{aligned}
&\quad \; \,tr(X^TA^TAX)\\
&\geq tr(X^TV\Lambda V^TX)\\
&=tr(X^TVR^TR\Lambda R^TRV^TX)\\
&=tr(C^TR\Lambda R^TC)\\
&=tr(R\Lambda R^TCC^T)\\
&=tr(R\Lambda R^T(I\cos (\theta)+\bar C)(I\cos (\theta)+\bar C^T))\\
&=tr(R\Lambda R^T(I\cos^2 (\theta)+(\bar C+\bar C^T)\cos (\theta)+\bar C\bar C^T))\\
&\geq tr(\Lambda)\cos^2 (\theta)+tr(R\Lambda R^T(\bar C+\bar
C^T))\cos (\theta).
\end{aligned}
\end{equation*}

We estimate the minimum eigenvalue $\lambda_{min}$ of the symmetric
matrix $S=\bar C+\bar C^T$. By Gershgorin circle theorem,
$|\lambda_{min}|\leq \sum^r_{j\neq i} |S_{ij}|$, $\forall i$, since
$S_{ii}=0$.
\begin{equation*}
\begin{aligned}
\sum^r_{j\neq i} |S_{ij}| &=\sum^r_{j\neq
i}|\cos(\theta(Z_i,X_j))+\cos(\theta(X_i,Z_j))|\\
&\leq \sum^r_{j\neq i}|\cos(\theta(Z_i,X_j))|+\sum^r_{j\neq
i}|\cos(\theta(X_i,Z_j))|\\
&\leq \sqrt{r-1}\bigl(\sum^r_{j\neq
i}|\cos(\theta(Z_i,X_j))|^2\bigr)^{-1/2}\\
&\quad+\sqrt{r-1}\bigl(\sum^r_{j\neq
i}|\cos(\theta(X_i,Z_j))|^2\bigr)^{-1/2}.
\end{aligned}
\end{equation*}

The last inequality holds since, $\forall x\in\mathbb{R}^p$,
$\|x\|_1\leq \sqrt p\|x\|_2$. Because $Z$ is the $r$ orthonormal
vectors, $\|Z^TX_i\|_2\leq\|X_i\|_2=1$, and
$Z^T_jX_i=\cos(\theta(X_i,Z_j))$, hence $\sum^r_{j\neq
i}|\cos(\theta(X_i,Z_j))|^2\leq 1-\cos^2(\theta)=\sin^2(\theta)$.
And by assumption, $\sum^r_{j}C_{ij}^2\leq 1$, so we also have
$\sum^r_{j\neq i}|\cos(\theta(Z_i,X_j))|^2\leq \sin^2(\theta)$.
Thus, $\sum^r_{j\neq i} |S_{ij}| \leq2\sqrt{r-1}\sin(\theta)$, and
$\lambda_{min}\geq -2\sqrt{r-1}\sin(\theta)$. Finally,
\begin{equation*}
\begin{aligned}
tr(X^TA^TAX)&\geq tr(\Lambda)\cos^2 (\theta)+tr(R\Lambda R^T(\bar
C+\bar C^T))\cos(\theta)\\
&\geq EV(V)\cos^2 (\theta)+EV(V)\lambda_{min}\cos (\theta)\\
&= \bigl(\cos^2 (\theta)-2\sqrt{r-1}\cos
(\theta)\sin(\theta)\bigr)EV(V)\\
&=\bigl(\cos^2 (\theta)-\sqrt{r-1}\sin(2\theta)\bigr)EV(V).
\end{aligned}
\end{equation*}

When $\theta$ is sufficiently small, such that $\sin(2\theta)\approx
2\theta$, we have $tr(X^TA^TAX)\geq (\cos^2
(\theta)-O(\theta))EV(V)$.%\qed
\end{proof}

\section{Deducing Original GPower from Matrix Approximation
Formulation}\label{sec:app gpower}

First, we give the original GPower. Fixing $Y$,
(\ref{equ:ori-GPower-l0}) and (\ref{equ:ori-GPower-l1}) have
solutions $W_i^*=H_{\sqrt {\lambda_i}}(A^TY_i)/\|H_{\sqrt
{\lambda_i}}(A^TY_i)\|_2$ and $W_i^*=S_{
\lambda_i}(A^TY_i)/\|S_{\lambda_i}(A^TY_i)\|_2$ respectively.
Substituting them into original objectives, the $\ell_0$ problem
becomes
\begin{equation}\label{equ:ori-GPower-l0-maxconv}
\max_{Y} \sum_i\sum_j\;[(A_j^TY_i)^2-\lambda_{i}]_+,\;s.t.\,Y^TY=I,
\end{equation}
and the $\ell_1$ problem becomes $\max_{Y}
\sum_i\sum_j\;[|A_j^TY_i|-\lambda_{i}]_+,\;s.t.\,Y^TY=I$. Actually,
it is to solve
\begin{equation}\label{equ:ori-GPower-l1-maxconv}
\max_{Y} \sum_i\sum_j\;[|A_j^TY_i|-\lambda_{i}]^2_+,\;s.t.\,Y^TY=I.
\end{equation}
Now the problem is to maximize two convex functions,
\cite{journee2010generalized} approximately solves them via a
gradient method which is generalized power method. The $t$th
iteration is provided by
\begin{equation}\label{equ:Y(t)-l0}
Y^{(t)}=Polar(AH_{\sqrt {\lambda_i}}(A^TY^{(t-1)})),
\end{equation}
and
\begin{equation}\label{equ:Y(t)-l1}
Y^{(t)}=Polar(AS_{ \lambda_i}(A^TY^{(t-1)})).
\end{equation}

We now see how these can be deduced from the matrix approximation
formulations (\ref{equ:GPower-l0}) and (\ref{equ:GPower-l1}). Split
$X$ into $X=WD$, $s.t.\|W_i\|_2=1$, $\forall i$ and $D>0$ is
diagonal matrix whose diagonal element $d_i$ in fact models the
length of the corresponding column of $X$. Then they become
\begin{equation}\label{equ:GPower-l0-D}
\begin{split}
\min_{Y,D,W}\;
&\|A-YDW^T\|^2_F+\sum_i\lambda_{i}\|W_i\|_0,\\
=&\|A\|_F^2+\sum_id^2_i-2\sum_id_iY_i^TAW_i+\sum_i\lambda_{i}\|W_i\|_0,\\
&s.t.\,Y^TY=I,\;D>0\text{ is diagonal},\;\|W_i\|_2=1,\;\forall i,
\end{split}
\end{equation}
and
\begin{equation}\label{equ:GPower-l1-D}
\begin{split}
\min_{Y,D,W}\;
&\frac{1}{2}\|A-YDW^T\|^2_F+\sum_i\lambda_{i}\|d_iW_i\|_1,\\
=&\frac{1}{2}\|A\|_F^2+\frac{1}{2}\sum_id^2_i-\sum_id_iY_i^TAW_i+\sum_i\lambda_{i}d_i\|W_i\|_1,\\
&s.t.\,Y^TY=I,\;D>0 \text{ is diagonal},\;\|W_i\|_2=1,\;\forall i.
\end{split}
\end{equation}

Fix $Y$ and $W$, and solve $D$. For the $\ell_0$ case,
$d_i^*=Y_i^TAW_i$. Substituting it back, we get
(\ref{equ:ori-GPower-l0}).

For the $\ell_1$ case, $d_i^*=Y_i^TAW_i-\lambda_i\|W_i\|_1$. Assume
$\lambda_i$ is sufficiently small, then $d_i>0$ is satisfied.
Substituting it back we get $\max_{Y,W}
\sum_i\big(Y_i^TAW_i-\lambda_{i}\|W_i\|_1\big)^2,\;
s.t.\,Y^TY=I,\,\forall i,\, \|W_i\|_2=1$. When we fix $Y$ and solve
$W$, under the previous assumption it is equivalent to
(\ref{equ:ori-GPower-l1}). Substituting $W_i^*=S_{
\lambda_i}(A^TY_i)/\|S_{\lambda_i}(A^TY_i)\|_2$ back, we obtain
(\ref{equ:ori-GPower-l1-maxconv}).

Finally, we can see that the solutions (\ref{equ:Y(t)-l0}) and
(\ref{equ:Y(t)-l1}) literally combine the two solution steps of
(\ref{equ:GPower-l0}) and (\ref{equ:GPower-l1}) respectively.

\section*{Acknowledgment}
This work was partly supported by National 973 Program
(2013CB329500), National Natural Science Foundation of China (No.
61103107 and No. 61070067), and Research Fund for the Doctoral
Program of Higher Education of China (No. 20110101120154).

%\ifCLASSOPTIONcaptionsoff
%  \newpage
%\fi

%\vspace{-4mm}
\bibliographystyle{IEEEtran}
\bibliography{SPCArt}
\end{document}